\newtheorem{theorem}{Theorem}
\newtheorem{definition}[theorem]{Definition}
\newtheorem{corollary}[theorem]{Corollary}
\newtheorem{lemma}[theorem]{Lemma}
\newtheorem{remark}[theorem]{Remark}
\newtheorem{assumption}[theorem]{Assumption}
\numberwithin{theorem}{section}
\title{Statistical Error Bounds for GANs with Nonlinear Objective Functionals}
\author{  Jeremiah Birrell   \email jbirrell@txstate.edu \\
   \addr  Department of Mathematics\\
  Texas State University\\
  San Marcos, TX 78666,  USA 
}
\begin{document}
\maketitle

\begin{abstract}
Generative adversarial networks (GANs) are unsupervised learning methods for training a generator distribution to produce samples that approximate those drawn from a target distribution. Many such methods can be formulated as minimization of a metric or divergence between probability distributions. Recent works have derived statistical error bounds for GANs that are based on integral probability metrics (IPMs), e.g., WGAN which is based on the 1-Wasserstein metric. In general, IPMs are defined by optimizing a linear functional (difference of expectations) over a space of discriminators.  A much larger class of GANs, which we here call $(f,\Gamma)$-GANs, can be constructed using $f$-divergences (e.g., Jensen-Shannon, KL, or $\alpha$-divergences) together with a regularizing discriminator space $\Gamma$ (e.g., $1$-Lipschitz functions). These GANs have nonlinear objective functions, depending on the choice of $f$, and have  been shown to exhibit improved performance in a number of applications. In this work we derive statistical error bounds for $(f,\Gamma)$-GANs for general classes of $f$ and $\Gamma$ in the form of finite-sample concentration inequalities.  These results prove the   statistical consistency of $(f,\Gamma)$-GANs and reduce to the known results for IPM-GANs in the appropriate limit.  Our results use novel Rademacher complexity bounds which  provide new insight into the performance of IPM-GANs for distributions with unbounded support and have application to  statistical learning tasks beyond GANs.
\end{abstract}

% keywords can be removed
%\keywords{ Generative adversarial networks \and $(f,\Gamma)$-divergences \and statistical consistency \and concentration inequalities \and error decomposition }

\section{Introduction}
Generative adversarial networks (GANs) are unsupervised learning methods for training a generator distribution to approximate  a target distribution by using samples from the target in a minmax game between a generator and a discriminator network \citep{GAN,dziugaite2015training,li2015generative,WGAN}. Mathematically, many such methods can be formulated in terms of minimizing an  integral probability metric (IPM) : 
\begin{align}
&\inf_{\theta\in\Theta} d_{\Gamma}(Q,P_\theta)\,,\label{eq:IPM_GAN_def}\\
&d_\Gamma(Q,P_\theta)\coloneqq\sup_{h\in\Gamma}\{E_Q[h]-E_{P_\theta}[h]\}\,.\label{eq:IPM_def}
\end{align}
Here $d_\Gamma$ is the IPM with test function space $\Gamma$ (i.e., discriminators), $Q$ is the distribution to be learned (i.e., the distribution of the data) and $P_\theta$ is the generator distribution, depending on parameters $\theta\in\Theta$.  More specifically, $P_\theta=(\Phi_\theta)_\#P_Z$ is the pushforward of a noise source $P_Z$ under a generator network $\Phi_\theta$. For instance, Wasserstein GAN \citep{WGAN} corresponds to $\Gamma$ being the set of $1$-Lipschitz functions.   A key distinguishing feature of the IPM-GANs  is the linearity in $h$ of the objective functional in \eqref{eq:IPM_def}.

A larger class of GAN methods can be formulated by generalizing the minmax game \eqref{eq:IPM_GAN_def} to use a nonlinear objective functional; such methods can generally be viewed as minimizing a divergence (a generalized notion of ``distance'' or discrepancy) between $Q$ and $P_\theta$. Such methods include the original  GAN \citep{GAN} which was based on the Jensen-Shannon (JS) divergence, {   the more general $d_{\mathcal{F},\phi}$-divergences defined in \cite{pmlr-v70-arora17a}, which use nonlinear objective functionals that generalize the original JS-GAN objective by replacing the $\log$ with another   concave function  $\phi$,  the nonlinear objectives $f(x,y)$ introduced in \cite{NIPS2017_4491777b}, the convex duality framework of \cite{NEURIPS2018_831caa1b}, and the restricted $f$-divergences of \cite{liu2018inductive}.}     In this paper we will focus on GANs expressed in terms of $(f,\Gamma)$-divergences; these divergences generalize and interpolate between IPMs and  $f$-divergences (e.g., KL-divergence) and the corresponding GANs have nonlinear objectives; see \cite{birrell2022f,birrell2022structure} for the general theory of these objects which, in contrast to a number of other approaches, applies to probability distributions having unbounded support.    The  corresponding $(f,\Gamma)$-GANs encompass and generalize a large number of successful methods in the literature  \citep{GAN,f_GAN,MINE_paper,miyato2018spectral,WGAN,wgan:gp,Bridging_fGan_WGan,Nguyen_Full_2010,gretton2012,glaser2021kale,Dupuis:Mao:2019}; see Table 2 in \cite{birrell2022f}.     {  Moreover,   we are further motivated to study the statistical properties of $(f,\Gamma)$-GANs due to the observation that the nonlinearity of the objective functional confers    several advantages   over Wasserstein GAN (WGAN), a widely-used IPM GAN. Specifically,   in \cite{birrell2022f},   $(f,\Gamma)$-GANs were shown to be  effective on heavy-tailed data  for which WGAN fails. $(f,\Gamma)$-GANs were also shown in experiments to be significantly less sensitive to hyperparameter tuning, with $(f,\Gamma)$-GAN training being successful under hyperparameter perturbations which caused WGAN training to fail. Those experiments focused on $f$'s coming from the family $\alpha$-divergence; see Table \ref{table:f_fstar} below.}

The $(f,\Gamma)$-GAN are constructed from the $(f,\Gamma)$-divergence, which are defined as follows. Given a convex function $f$ with $f(1)=0$ and a test function space $\Gamma$, the $(f,\Gamma)$-divergence between probability distributions $Q$ and $P$ is defined by
\begin{align}\label{eq:D_f_Gamma_def}
D_f^\Gamma(Q\|P)\coloneqq\sup_{h\in \Gamma}\left\{ E_Q[h]-\Lambda_f^P[h]\right\}\,,
\end{align}
where, denoting the Legendre transform of $f$ by $f^*$, we define
\begin{align}\label{eq:Lambda_f_P_def}
\Lambda_f^P[h]\coloneqq\inf_{\nu\in\mathbb{R}}\{\nu+E_P[f^*(h-\nu)]\}\,.
\end{align}
We refer to $\Lambda_f^P$ as the  generalized cumulant generating function because in the KL-divergence case ($f_{KL}(z)=z\log(z)$) it is straightforward to show $\Lambda_{f_{KL}}^P[h]=\log E_P[e^g]$, which is the classical cumulant generating function.  The relation of $\Lambda_f^P$ to $f$-divergences was previously studied in \cite{Broniatowski,BenTal2007,Nguyen_Full_2010,Ruderman}. Under appropriate assumptions on $f$ and $\Gamma$, $D_f^\Gamma(Q\|P)$ provides a meaningful notion of discrepancy between $Q$ and $P$ due to it satisfying the divergence property, i.e., $D_f^\Gamma(Q\|P)\geq 0$ with equality if and only if $Q=P$; see Theorem 2.8 in \cite{birrell2022f}. This property, along with the variational characterization \eqref{eq:D_f_Gamma_def}, motivates the definition of $(f,\Gamma)$-GANs:
\begin{align}\label{eq:f_Gamma_GAN}
\inf_{\theta\in\Theta} D_f^\Gamma(Q\|P_\theta)=\inf_{\theta\in\Theta}\sup_{h\in\Gamma}\left\{E_Q[h]-\Lambda_f^P[h]\right\}\,.
\end{align}
We note that \eqref{eq:D_f_Gamma_def} can alternatively be written as
\begin{align}\label{eq:D_f_Gamma_form_2}
D_f^\Gamma(Q\|P)\coloneqq\sup_{h\in \widetilde{\Gamma}}\left\{ E_Q[h]-E_P[f^*(h)]\right\}\,,
\end{align}
with the shifts $\nu\in\mathbb{R}$ absorbed into the definition of the discriminator space $\widetilde{\Gamma}\coloneqq\{h-\nu:h\in\Gamma,\nu\in\mathbb{R}\}$ (i.e.,  $\nu$ is the bias parameter  of the final layer).  The form \eqref{eq:D_f_Gamma_form_2} is preferable for implementation purposes,  however for the purposes of our analysis we will focus on the form \eqref{eq:D_f_Gamma_def}  in order to emphasize the special role that the parameter $\nu$ plays.  More specifically, our methods for proving statistical consistency will rely on the assumption that $\Gamma$ consists of uniformly bounded functions, while the a priori unbounded shift parameter $\nu$ will require special attention.    Such considerations are not relevant for IPM GANs since a bias parameter in the final layer will exactly cancel due to the linearity of the objective functions, and hence can be neglected. 

The statistical consistency theory of IPM-GANs has recently been studied in a number of works \citep{liang2021well,biau2021some,huang2022error,chen2023statistical,chakraborty2024statistical}. These derivations take advantage of the special structure of IPMs, namely  the linearity of the objective functional.  As previously noted, a number of works have introduced GANs with nonlinear objective, however the statistical consistency theory for such GANs  is largely lacking. { A notable exception is \cite{pmlr-v70-arora17a} which studied the original JS-GAN and their generalization to $d_{\mathcal{F},\phi}$-divergences; we will provide a detailed  comparison of this earlier work with ours in Section \ref{sec:comp_JS} below}.  In this paper we develop theory that allows us to prove statistical consistency of the $(f,\Gamma)$-GANs, in the form of finite-sample  concentration inequalities.  The  key technical hurdle is the nonlinearity of the objective functional due to the presence of the generalized cumulant generating function \eqref{eq:Lambda_f_P_def}. Section \ref{sec:Lambda_f_P_properties} is dedicated to   properties of and bounds on $\Lambda_f^P$ which will be needed    in Section \ref{sec:f_Gamma_GAN_bound} to derive concentration inequalities for  $(f,\Gamma)$-GANs.   We note that our theory does not make any compactness assumptions on the support of the data distribution or generator noise source; our method for handling distributions with unbounded support in Section \ref{sec:Rademacher_bound_unbounded_support} provides new insight even in the IPM-GAN case.

\subsection{Summary of Results}
Key to our results is a decomposition of the $(f,\Gamma)$-GAN error into optimization error, approximation error, and various statistical errors. Given an approximate discriminator space $\widetilde{\Gamma}\subset\Gamma$ (e.g., a neural network with spectral normalization as an approximation to the space of $1$-Lipschitz functions),  and empirical measures $Q_n$ and $P_{\theta,m}\coloneqq (\Phi_\theta)_\# P_{Z,m}$ constructed using $n$ i.i.d. samples from the data distribution, $Q$, and $m$ i.i.d. samples from the generator noise source, $P_Z$, respectively, we let $\theta^*_{n,m}$ denote a solution (with optimization error tolerance $\epsilon_{opt}^{n,m}\geq 0$) to the empirical $(f,\widetilde{\Gamma})$-GAN problem:
\begin{align}\label{eq:empirical_GAN_problem_intro}
D_f^{\widetilde{\Gamma}}(Q_n\| P_{\theta_{n,m}^*,m}) \leq \inf_{\theta\in \Theta} D_f^{\widetilde\Gamma}(Q_n\|P_{\theta,m})+\epsilon_{opt}^{n,m} \,.
\end{align}
Given this, we derive the following $(f,\Gamma)$-GAN error bound in Lemma \ref{lemma:error_decomp1}:
\begin{align}\label{eq:error_decomp}
&D_f^\Gamma(Q\| P_{\theta_{n,m}^*})-\inf_{\theta\in\Theta} D_f^\Gamma(Q\|P_\theta)\\
\leq&\underbrace{ \sup_{h\in\widetilde{\Gamma},\theta\in\Theta}\left\{\Lambda_f^{P_{Z,m}}[h\circ\Phi_\theta]-\Lambda_f^{P_{Z}}[h\circ\Phi_\theta])\right\}
+\sup_{h\in\widetilde{\Gamma},\theta\in\Theta}\left\{\Lambda_f^{P_{Z}}[h\circ\Phi_\theta]-\Lambda_f^{P_{Z,m}}[h\circ\Phi_\theta]\right\}}_{\text{statistical error from sampling $P_{Z}$}}\notag\\
&+\underbrace{\sup_{h\in\widetilde{\Gamma}}\left\{E_Q[h]-E_{Q_n}[h]\right\}+\sup_{h\in\widetilde{\Gamma}}\left\{E_{Q_n}[h]-E_Q[h]\right\}}_{\text{statistical error from sampling $Q$}}\notag\\
&+\underbrace{\epsilon^{opt}_{n,m}}_{\text{optimization error}}+\underbrace{\left(1+(f^*)^\prime_+( z_0+\beta-\alpha) \right)\sup_{h\in\Gamma}\inf_{\tilde{h}\in\widetilde{\Gamma}}\|h- \tilde{h}\|_{\infty}}_{\text{discriminator approximation error}}  \,.\notag
\end{align}
The distribution $P_{\theta_{n,m}^*}=(\Phi_{\theta_{n,m}^*})_\#P_Z$  is the $(f,\Gamma)$-GAN estimator, i.e., the approximation to $Q$ obtained by solving the empirical GAN problem \eqref{eq:empirical_GAN_problem_intro}. The primary difference between the decomposition \eqref{eq:error_decomp} and the corresponding result for IPMs in Lemma 9 of \cite{huang2022error} is the presence of the   generalized cumulant generating function $\Lambda_f^{P_Z}$ in the statistical error from sampling $P_Z$. The functional $\Lambda_f^{P_Z}$ is   nonlinear in the discriminator and so treating those terms requires new techniques which we develop in   Section \ref{sec:Lambda_f_P_properties}. Using these new results, in the case of a discriminator space $\Gamma$ such that $\sup_{x,\tilde x}|h(x)-h(\tilde x)|\leq 1$ for all $h\in \Gamma$, we derive concentration inequalities of the following form; see Theorem \ref{thm:f_Gamma_GAN1}   for details:
\begin{align}\label{eq:conc_ineq_main}
&\mathbb{P}\left(\!D_f^\Gamma(Q\| P_{\theta_{n,m}^*})-\inf_{\theta\in\Theta} D_f^\Gamma(Q\|P_\theta)\geq \epsilon+\epsilon_{approx}^{\Gamma,\widetilde{\Gamma}}+\epsilon_{opt}^{n,m}+4\mathcal{R}_{\widetilde{\Gamma},Q,n}+4\mathcal{K}_{f,\widetilde{\Gamma}\circ\Phi,P_Z,m}\right)\!\leq\exp\!\left(\!-\frac{\epsilon^2}{\frac{2}{n}+\frac{2}{m}\Delta_{f,m}^2}\right)\,.
\end{align}
Here $\epsilon_{approx}^{\Gamma,\widetilde{\Gamma}}$ is the discriminator approximation error, $\mathcal{R}_{\widetilde{\Gamma},Q,n}$ denotes the Rademacher complexity of $\widetilde{\Gamma}$ (see Appendix \ref{app:ULLN}), $\mathcal{K}_{f,\widetilde{\Gamma}\circ\Phi,P_Z,m}$ depends on $\mathcal{R}_{\widetilde{\Gamma}\circ\Phi,P_Z,m}$, and $\Delta_{f,m}$ can be bounded uniformly in $m$. { We note that one can always choose $\Gamma=\widetilde{\Gamma}$, thereby making the discriminator approximation error zero; in general, $\epsilon_{approx}^{\Gamma,\widetilde{\Gamma}}$  simply quantifies the tradeoff between using a weaker discriminator space $\widetilde{\Gamma}$ in the Rademacher complexity terms, as compared to the stronger space $\Gamma$ used in the $(f,\Gamma)$-divergence.  The approximation error has already been studied for IPM GANs and, apart from the $f^*$ dependence of the prefactor, there is no difference here. Hence we refer the reader to Section 2.2.2 in \cite{huang2022error} for further discussion of this term.}

 Assuming the discriminator and optimization errors are zero and assuming that one chooses discriminator and generator classes whose Rademacher complexities decay as $n,m\to \infty$, our result \eqref{eq:conc_ineq_main} shows that the $(f,\Gamma)$-GAN estimator $P_{\theta^*_{n,m}}$ approximately solves the exact $(f,\Gamma)$-GAN problem \eqref{eq:f_Gamma_GAN} with high probability. If the generator class is sufficiently rich to ensure $\inf_{\theta\in\Theta} D_f^\Gamma(Q\|P_\theta)=0$ then \eqref{eq:conc_ineq_main} implies the $(f,\Gamma)$-GAN estimator is  close to the distribution of the data source, $Q$,   with high probability, where closeness is measured by the $(f,\Gamma)$-divergence; in this latter scenario we in fact obtain a tighter result than \eqref{eq:conc_ineq_main}, see \eqref{eq:conc_ineq2}.   The exponentially decaying nature of the bounds \eqref{eq:conc_ineq_main}  means that one can use the Borel-Cantelli lemma to also conclude almost-sure  convergence of $D_f^\Gamma(Q\| P_{\theta_{n,m}^*})$ to $\inf_{\theta\in\Theta} D_f^\Gamma(Q\|P_\theta)$, provided that the errors and Rademacher complexities approach zero when $n,m\geq k\to\infty$.  As part of these derivations we  prove bounds on the mean of the error; see Remark \ref{remark:mean_bounds}.    Our techniques can  also be used to derive concentration inequalities for the $(f,\Gamma)$-divergence estimation problem, which is of independent interest for, e.g., mutual information estimation; see Theorem \ref{thm:D_f_Gamma_concentration_general}.  Due to the asymmetry of $D_f^\Gamma(Q\|P)$ in $Q$ and $P$, we also provide a concentration inequality where the positions of $Q$ and $P_{\theta^*_{n,m}}$ are reversed; see Theorem \ref{thm:f_Gamma_GAN2}. Finally, we note that the quantities  $\mathcal{K}_{f,\widetilde{\Gamma}\circ\Phi,P_Z,m}$ and  $\Delta_{f,m}$ approach their IPM counterparts in the limit where $f^*$ becomes linear; see Remarks \ref{remark:linear_limit1} and \ref{remark:linear_limit2}.

The concentration inequality \eqref{eq:conc_ineq_main} and   other related results presented below reduce the  problem of statistical performance guarantees for $(f,\Gamma)$-GANs to the problem of bounding the Rademacher complexities of the discriminator and the composition of discriminator and generator function classes.  This is a well-studied problem which also arises the case of IPM-GANs and  the same techniques and results from that setting can be applied here.   The discussion in Section 3.1 of \cite{huang2022error} and also  \cite{chakraborty2024statistical} shows how covering number bounds imply that, under appropriate assumptions,  the IPM-GAN performance depends on the intrinsic dimension, $d^*$, of the support of $Q$ and not the (possibly much larger) ambient dimension, $d$.  As our new bounds depend on the Rademacher complexity in the same qualitative manner as the  IPM results, the aforementioned insights regarding dimension dependence also immediately apply to $(f,\Gamma)$-GANs.  Furthermore, in Section \ref{sec:Rademacher_bound_unbounded_support} we discuss Rademacher complexity bounds for distributions with unbounded support.  While several approaches to this problem can be found in the IPM-GAN literature, see \cite{biau2021some} and \cite{huang2022error},  our result in Theorem \ref{thm:Rademacher_bound_unbounded_support} requires significantly less restrictive assumptions and so provides new insight even in the IPM-GAN case.  { Moreover,  Theorem \ref{thm:Rademacher_bound_unbounded_support}  has many applications beyond the study of GANs, such as for neural estimation of mutual information \citep{MINE_paper,songunderstanding,Birrell_optimizing}.}

 {   
\subsection{Comparison with Generalization Bounds for the JS-GAN }\label{sec:comp_JS}
 The closest existing work on GANs with nonlinear objectives  can be found in \cite{pmlr-v70-arora17a},   which derived statistical guarantees for nonlinear GANs with JS-style nonlinear objectives.  The present work is distinguished in the following ways.
\begin{enumerate}
\item The nonlinearity in the $(f,\Gamma)$-GAN objective is of a different character than what was considered in \cite{pmlr-v70-arora17a}. Specifically, in addition to composing the discriminators with the nonlinear function $f^*$, the minimization over the (unbounded) shift parameter in \eqref{eq:Lambda_f_P_def} introduces several complications, as one can no longer a priori assume that $f^*$ is bounded or uniformly Lipschitz, as was assumed for $\phi$ in  \cite{pmlr-v70-arora17a} (see their Theorem 3.1); the necessity of including shifts was discussed in \cite{birrell2022f} and also  in Theorem 2 of \cite{NEURIPS2018_831caa1b}, and hence is an inherent complication of rigorously studying GANs that are based on infimal convolutions, such as $(f,\Gamma)$-GANs.
\item The statistical consistency results proven here in  Theorem \ref{thm:f_Gamma_GAN1} and in Theorem 3.1 of  \cite{pmlr-v70-arora17a} are of different natures. The latter derives a bound on the difference between empirical and exact errors (as measured by the chosen divergence)  while our result \eqref{eq:conc_ineq_main} implies that the solution to the empirical GAN problem provides a near-optimal  solution to the exact GAN problem.  While both results are meaningful, they have different interpretations and their derivations require different techniques.
\item Finally, we note that the generalization bound in Theorem 3.1 of  \cite{pmlr-v70-arora17a}  assumes the space of discriminators is Lipschitz in its parameters. Generally speaking, for discriminator architectures used in practice this property holds only when the data and noise distributions have compact support. In contrast, our novel Rademacher complexity approach in Section \ref{sec:Rademacher_bound_unbounded_support}  allows for discriminators that satisfy a more general local Lipschitz property in their parameters. This allows our framework to be be applied to distributions with unbounded support, including a range of heavy-tailed distributions.  
\end{enumerate}
}

\section{Properties of $\Lambda_f^P$}\label{sec:Lambda_f_P_properties}
The fundamental difference between the more commonly studied IPM-GANs \eqref{eq:IPM_GAN_def} and the $(f,\Gamma)$-GANs \eqref{eq:f_Gamma_GAN} is the nonlinearity of the objective functional in \eqref{eq:D_f_Gamma_def}, coming from the  generalized cumulant generating function \eqref{eq:Lambda_f_P_def}. A detailed study of the properties of the generalized cumulant generating function, $\Lambda_f^P$, is therefore required in order to extend the techniques used for IPM-GANs to obtain statistical guarantees for  $(f,\Gamma)$-GANs.  We undertake that study in this section.  Detailed proofs can be found in Appendix \ref{app:proofs_Lambda_properties}.

Going forward,  we let $\mathcal{X}$ be a measurable space and $\mathcal{M}_b(\mathcal{X})$ be the space of bounded measurable real-valued functions on $\mathcal{X}$. We let $f:(a,b)\to\mathbb{R}$, $0\leq a<1<b\leq\infty$ , be a convex function that satisfies $f(1)=0$; we denote the set of such functions by $\mathcal{F}_1(a,b)$.  We will make regular use of various properties of such convex functions and their Legendre transforms that are collected in Appendix \ref{app:f_f_star_properties}.  In particular, we  note that $a\geq 0$  implies  $f^*$ is non-decreasing.   The value of the right derivative of $f$ at $1$ will play a key role in many of the proofs and so we make the following definition:
\begin{align}\label{eq:z0_def}
 z_0\coloneqq f_+^\prime(1)\,.
\end{align}
By assumption, $1$ is in the interior of the set where  the convex function $f$ is finite and so this right derivative is guaranteed to exist and be finite.

The first key property  is that bounds on a function $h$ translate to bounds on $\Lambda_f^P[ h]$ and also allow the minimization over $\nu$ in \eqref{eq:Lambda_f_P_def} to be restricted to a corresponding compact interval.  
\begin{lemma}\label{lemma:Lambda_f_P_inf_compact}
Let $f\in\mathcal{F}_1(a,b)$ with $a\geq 0$, $P$ be a probability measure on $\mathcal{X}$, and $ h\in\mathcal{M}_b(\mathcal{X})$ with $\alpha\leq  h\leq\beta$. Then:
\begin{enumerate}
\item $\alpha\leq \Lambda_f^P[ h]\leq \beta$
\item  If $f$ is also strictly convex on a neighborhood of $1$ and
    $ z_0\in\{f^*<\infty\}^o$ then
\begin{align}\label{eq:Lambda_f_P_inf_compact}
\Lambda_f^P[ h]=\inf_{\nu\in[\alpha- z_0,\beta- z_0]}\{\nu+E_P[f^*( h-\nu)]\}\,.
\end{align}
\end{enumerate}
\end{lemma}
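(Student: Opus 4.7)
The plan is to reduce both claims to three standard facts: the Fenchel--Young inequality $f^*(y) + f(x) \ge xy$; the conjugate pair identity $f^*(z_0) = z_0$, which follows from $z_0 \in \partial f(1)$ combined with $f(1) = 0$; and the monotonicity of $f^*$ noted in Appendix A (a consequence of $a \ge 0$). For Part 2 I will then use convexity of $g(\nu) \coloneqq \nu + E_P[f^*(h-\nu)]$ in $\nu$, combined with the dual inclusion $1 \in \partial f^*(z_0)$, to show $g$ is non-decreasing past $\beta - z_0$ and non-increasing before $\alpha - z_0$.

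For the lower bound in Part 1, Fenchel--Young applied at $x = 1$ gives $f^*(h-\nu) \ge (h-\nu) - f(1) = h-\nu$ pointwise, so $\nu + E_P[f^*(h-\nu)] \ge E_P[h] \ge \alpha$ for every $\nu \in \mathbb{R}$, and taking the infimum yields $\Lambda_f^P[h] \ge \alpha$. For the upper bound I would test with $\nu = \beta - z_0$: since $h \le \beta$ we have $h - \nu \le z_0$, so monotonicity of $f^*$ gives $f^*(h-\nu) \le f^*(z_0) = z_0$, whence $\Lambda_f^P[h] \le (\beta - z_0) + z_0 = \beta$.

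For Part 2, it suffices to show $g$ is non-decreasing on $[\beta - z_0, \infty)$ and non-increasing on $(-\infty, \alpha - z_0]$; convexity of $g$ then forces the infimum to be attained on $[\alpha - z_0, \beta - z_0]$. For the first statement, fix $\nu \ge \beta - z_0$ and $\tau > 0$; then $u \coloneqq h-\nu \le z_0$ pointwise, and the chord inequality for the convex function $f^*$, together with monotonicity of $(f^*)'_-$ and the bound $(f^*)'_-(z_0) \le 1$, yields
\begin{align*}
f^*(u) - f^*(u - \tau) \le \tau\, (f^*)'_-(u) \le \tau\, (f^*)'_-(z_0) \le \tau\,.
\end{align*}
Taking $E_P$ and rearranging gives $g(\nu + \tau) - g(\nu) \ge 0$. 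The other monotonicity is symmetric: for $\nu \le \alpha - z_0$ one has $u = h - \nu \ge z_0$ pointwise, and $(f^*)'_+(u) \ge (f^*)'_+(z_0) \ge 1$ combined with the chord inequality in the other direction gives $f^*(u + \tau) - f^*(u) \ge \tau$.

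The main technical obstacle is justifying the one-sided derivative bounds $(f^*)'_-(z_0) \le 1 \le (f^*)'_+(z_0)$ as finite real numbers: finiteness is precisely what $z_0 \in \{f^* < \infty\}^o$ buys (it places $z_0$ in the interior of the effective domain of the convex function $f^*$), while the sandwich itself is the subdifferential form of the equality case of Fenchel--Young at the conjugate pair $(1, z_0)$. The strict convexity of $f$ near $1$ plays an auxiliary role in making $\partial f(1) = \{z_0\}$ a singleton so that $z_0 = f'_+(1)$ is the canonical slope reference, and the passage from pointwise to expectation-level inequalities is immediate because the chord estimates hold $P$-a.s.\ and the integrands are dominated by the $P$-integrable bounds supplied by Part 1.
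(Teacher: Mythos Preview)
Your proof is correct and takes a genuinely different, more elementary route than the paper. For Part~1 the paper sandwiches $\Lambda_f^P[h]$ between $\inf_\nu\{\nu+f^*(\alpha-\nu)\}$ and $\inf_\nu\{\nu+f^*(\beta-\nu)\}$ via monotonicity of $f^*$ and then evaluates both via the biconjugacy identity $(f^*)^*(1)=f(1)=0$; your Fenchel--Young lower bound and test-point upper bound are equivalent but more direct. For Part~2 the paper integrates $(f^*)'_+$ along $[h-\nu,\,h-(\beta-z_0)]$ (resp.\ the analogue on the other side) using the fundamental theorem of calculus, and the key input is the \emph{equality} $(f^*)'_+(z_0)=1$ from Lemma~\ref{lemma:z_0_properties}, which is exactly where strict convexity near $1$ is consumed. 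Your chord/subgradient argument instead needs only the sandwich $(f^*)'_-(z_0)\le 1\le (f^*)'_+(z_0)$, which is the subdifferential reformulation of $1\in\partial f^*(z_0)$ and holds without any strict-convexity hypothesis; so your approach is both lighter (no FTC) and shows that the strict-convexity assumption in the lemma is in fact superfluous---your remark that it is merely ``auxiliary'' undersells this. One small point you glossed over: in the non-increasing half, $u=h-\nu\ge z_0$ can fall outside $\{f^*<\infty\}$, where the chord estimate is not literally available; the paper handles this by splitting off the case $f^*(h-\nu)=\infty$ explicitly (the desired inequality is then trivial), and you should do the same rather than fold it into the final sentence about domination.
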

\begin{remark}
We use $A^o$ to denote the interior of a set $A$.
\end{remark}

Next we give a Lipschitz bound for $\Lambda_f^P$.  
\begin{lemma}\label{lemma:Lambda_Lipschitz_bound}
Let $P$ be a probability measure on $\mathcal{X}$, and $h,\tilde{h}\in\mathcal{M}_b(\mathcal{X})$ with $\alpha\leq h,\tilde{h}\leq\beta$. Let  $f\in\mathcal{F}_1(a,b)$ with $a\geq 0$ and assume $f$ is strictly convex in a neighborhood of $1$ and $ z_0+\beta-\alpha\in\{f^*<\infty\}^o$.
  Then 
\begin{align}
|\Lambda_f^P[\tilde{h}]-\Lambda_f^P[h]|
\leq&(f^*)^\prime_+( z_0+\beta-\alpha) \|\tilde{h}-h\|_{L^1(P)} \,.
\end{align}
\end{lemma}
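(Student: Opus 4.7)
The plan is to reduce the Lipschitz bound for $\Lambda_f^P$ to a Lipschitz bound for $f^*$ on a suitable compact interval, using the previous lemma to restrict the infimum in \eqref{eq:Lambda_f_P_def} to a bounded set of $\nu$ values.

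First, I would invoke Lemma \ref{lemma:Lambda_f_P_inf_compact} (part 2) to write
\begin{align*}
\Lambda_f^P[h]=\inf_{\nu\in[\alpha-z_0,\beta-z_0]}\{\nu+E_P[f^*(h-\nu)]\},
\end{align*}
and analogously for $\tilde h$; the hypotheses of that lemma (strict convexity near 1, $z_0\in\{f^*<\infty\}^o$) follow from the assumptions here since $z_0+\beta-\alpha\in\{f^*<\infty\}^o$ and $\beta-\alpha\geq 0$. For any $\nu$ in the compact window $[\alpha-z_0,\beta-z_0]$, both $h-\nu$ and $\tilde h-\nu$ take values in the compact interval $I\coloneqq[z_0-(\beta-\alpha),\,z_0+(\beta-\alpha)]$.

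Next, I would establish that $f^*$ is Lipschitz on $I$ with constant $L\coloneqq(f^*)^\prime_+(z_0+\beta-\alpha)$. Since the right endpoint of $I$ lies in the interior of $\{f^*<\infty\}$, $f^*$ is finite and convex on an open neighborhood of $I$, so it is absolutely continuous on $I$ and its right derivative $(f^*)^\prime_+$ is non-decreasing and finite there. Consequently $(f^*)^\prime_+(z)\leq L$ for all $z\in I$, and the fundamental theorem of calculus (as used in the proof of Lemma \ref{lemma:Lambda_f_P_inf_compact}) gives $|f^*(z_1)-f^*(z_2)|\leq L|z_1-z_2|$ for $z_1,z_2\in I$.

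Applying this pointwise in $x$ with $z_1=\tilde h(x)-\nu$ and $z_2=h(x)-\nu$, integrating against $P$, and adding $\nu$ to both sides yields
\begin{align*}
\nu+E_P[f^*(\tilde h-\nu)]\leq \nu+E_P[f^*(h-\nu)]+L\|\tilde h-h\|_{L^1(P)}
\end{align*}
uniformly in $\nu\in[\alpha-z_0,\beta-z_0]$. Taking the infimum over $\nu$ on the right, then on the left, produces $\Lambda_f^P[\tilde h]-\Lambda_f^P[h]\leq L\|\tilde h-h\|_{L^1(P)}$, and swapping the roles of $h$ and $\tilde h$ gives the claimed absolute-value bound.

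The only nonroutine step is verifying the pointwise Lipschitz constant for $f^*$ on $I$; once $z_0+\beta-\alpha$ is placed in the interior of $\{f^*<\infty\}$, monotonicity of $(f^*)^\prime_+$ makes this immediate, so I do not anticipate any serious obstacle beyond carefully bookkeeping the bounds on $h-\nu$ that ensure $I$ is the right interval.
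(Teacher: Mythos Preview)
Your proposal is correct and follows essentially the same route as the paper: restrict the infimum to $[\alpha-z_0,\beta-z_0]$ via Lemma~\ref{lemma:Lambda_f_P_inf_compact}, observe that $h-\nu,\tilde h-\nu$ then land in $[z_0-(\beta-\alpha),z_0+\beta-\alpha]$, bound $|f^*(\tilde h-\nu)-f^*(h-\nu)|$ by $(f^*)^\prime_+(z_0+\beta-\alpha)\,|\tilde h-h|$ using the fundamental theorem of calculus and monotonicity of $(f^*)^\prime_+$, and integrate. One small point worth making explicit: deducing $|f^*(z_1)-f^*(z_2)|\leq L|z_1-z_2|$ from $(f^*)^\prime_+\leq L$ alone also requires $(f^*)^\prime_+\geq 0$, which holds because $a\geq 0$ makes $f^*$ non-decreasing; the paper states this step, and you should too.
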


We now aim to obtain a tight bound on the difference between $\Lambda_f^{P_n}[h]$ and $\Lambda_f^{\widetilde{P}_n}[h]$ when  $P_n$ and $\widetilde{P}_n$ are empirical measures that differ by only a single point.  This will be key for an effective use of McDiarmid's inequality, thus allowing us to derive statistical guarantees for $(f,\Gamma)$-GANs  in Section \ref{sec:f_Gamma_GAN_bound}. For this purpose it will be convenient to make the following definition.
\begin{definition}
Let $f\in\mathcal{F}_1(a,b)$ with $a\geq 0$. Define $\Lambda_f:\mathbb{R}^n\to\mathbb{R}$ by
\begin{align}
\Lambda_f(x)\coloneqq\inf_{\nu\in\mathbb{R}}\left\{\nu+\frac{1}{n}\sum_{i=1}^n f^*(x_i-\nu)\right\}\,.
\end{align}
\end{definition}
The connection to $\Lambda_f^P$ is given by the following lemma, which is a simple consequence of the respective definitions.
\begin{lemma}\label{lemma:Lambda_f_empirical}
Let $f\in\mathcal{F}_1(a,b)$ with $a\geq 0$. Let $h\in\mathcal{M}_b(\mathcal{X})$ and $P_n=\frac{1}{n}\sum_{i=1}^n \delta_{x_i}$ be an empirical measure on $\mathcal{X}$.  Then
\begin{align}
\Lambda_f^{P_n}[h]=\Lambda_f\circ h_n(x)
\end{align}
where $h_n(x)\coloneqq(h(x_1),...,h(x_n))$.
\end{lemma}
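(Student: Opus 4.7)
The plan is essentially to unfold the two definitions side by side and observe they agree, so there is no real obstacle here; the lemma is a bookkeeping statement that isolates a fact which will be useful in subsequent applications (e.g., in preparation for a McDiarmid-type bounded-differences argument on $\Lambda_f$ viewed as a function on $\mathbb{R}^n$).

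Concretely, I would start from the definition \eqref{eq:Lambda_f_P_def}, namely
\begin{align}
\Lambda_f^{P_n}[h] = \inf_{\nu\in\mathbb{R}}\bigl\{\nu+E_{P_n}[f^*(h-\nu)]\bigr\},
\end{align}
and then use the fact that for an empirical measure $P_n=\frac{1}{n}\sum_{i=1}^n\delta_{x_i}$ on $\mathcal{X}$, the expectation of any measurable function $g$ reduces to the arithmetic mean $E_{P_n}[g]=\frac{1}{n}\sum_{i=1}^n g(x_i)$. Applying this with $g=f^*(h-\nu)$ (which is measurable, since $h\in\mathcal{M}_b(\mathcal{X})$ and $f^*$ is lower semicontinuous) gives
\begin{align}
E_{P_n}[f^*(h-\nu)] = \frac{1}{n}\sum_{i=1}^n f^*\bigl(h(x_i)-\nu\bigr).
\end{align}

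Substituting back, I obtain
\begin{align}
\Lambda_f^{P_n}[h] = \inf_{\nu\in\mathbb{R}}\Bigl\{\nu+\frac{1}{n}\sum_{i=1}^n f^*\bigl(h(x_i)-\nu\bigr)\Bigr\}.
\end{align}
The right-hand side is, by the definition of $\Lambda_f:\mathbb{R}^n\to\mathbb{R}$ preceding the lemma, exactly $\Lambda_f(h(x_1),\dots,h(x_n))=\Lambda_f(h_n(x))$, which proves the claim. The only minor point worth flagging is that $h_n(x)$ depends on the specific representation of $P_n$ as a sum of point masses; but since $\Lambda_f$ is symmetric in its arguments (the defining sum is), the value $\Lambda_f(h_n(x))$ is invariant under permutations of the $x_i$ and therefore depends only on $P_n$, consistent with the left-hand side being a functional of $P_n$.
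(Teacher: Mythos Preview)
Your proof is correct and matches the paper's approach: the paper simply states that the lemma ``is a simple consequence of the respective definitions'' and gives no further argument, which is exactly what you have spelled out.
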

The Lipschitz bound in  Lemma \ref{lemma:Lambda_Lipschitz_bound} translates into the following Lipschitz bound for $\Lambda_f$.
\begin{corollary}\label{corollary:Lambda_f_Lipschitz}
Let $f\in\mathcal{F}_1(a,b)$ with $a\geq 0$.  Let $\alpha\leq \beta$   and assume $f$ is strictly convex in a neighborhood of $1$ and $ z_0+\beta-\alpha\in\{f^*<\infty\}^o$.    For $x,\tilde{x}\in[\alpha,\beta]^n$ we have
\begin{align}\label{eq:Lambda_f_Lipschitz_general_x}
|\Lambda_f(\tilde{x})-\Lambda_f(x)|\leq \frac{1}{n}(f^*)^\prime_+( z_0+\beta-\alpha)\sum_{i=1}^n|\tilde{x}_i-x_i|\,.
\end{align}
In particular, $\Lambda_f$ is continuous on $[\alpha,\beta]^n$.
\end{corollary}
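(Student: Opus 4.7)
The plan is to derive the claimed bound as a direct corollary of Lemma \ref{lemma:Lambda_Lipschitz_bound} via the identification provided by Lemma \ref{lemma:Lambda_f_empirical}. Specifically, I would take $\mathcal{X}$ to be any measurable space containing $n$ distinct points $y_1,\dots,y_n$ (e.g., $\mathcal{X}=\{1,\dots,n\}$ with the discrete $\sigma$-algebra) and let $P_n\coloneqq\frac{1}{n}\sum_{i=1}^n\delta_{y_i}$. Given $x,\tilde x\in[\alpha,\beta]^n$, define $h,\tilde h\in\mathcal{M}_b(\mathcal{X})$ by $h(y_i)=x_i$ and $\tilde h(y_i)=\tilde x_i$. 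Then $\alpha\leq h,\tilde h\leq \beta$, so the hypotheses of Lemma \ref{lemma:Lambda_Lipschitz_bound} are satisfied.

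By Lemma \ref{lemma:Lambda_f_empirical}, $\Lambda_f(x)=\Lambda_f^{P_n}[h]$ and $\Lambda_f(\tilde x)=\Lambda_f^{P_n}[\tilde h]$. Applying Lemma \ref{lemma:Lambda_Lipschitz_bound} and computing the $L^1(P_n)$-norm explicitly yields
\begin{align}
|\Lambda_f(\tilde x)-\Lambda_f(x)|=|\Lambda_f^{P_n}[\tilde h]-\Lambda_f^{P_n}[h]|\leq (f^*)^\prime_+(z_0+\beta-\alpha)\,\frac{1}{n}\sum_{i=1}^n|\tilde x_i-x_i|,
\end{align}
which is exactly \eqref{eq:Lambda_f_Lipschitz_general_x}. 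Continuity of $\Lambda_f$ on $[\alpha,\beta]^n$ then follows immediately, since the right-hand side tends to zero as $\tilde x\to x$ in $[\alpha,\beta]^n$.

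There is essentially no obstacle here: the work has already been done in Lemma \ref{lemma:Lambda_Lipschitz_bound}, and the only thing to verify is that the empirical-measure construction legitimately realizes an arbitrary $x\in[\alpha,\beta]^n$ as the vector of values of a bounded measurable function sandwiched between $\alpha$ and $\beta$. The mild subtlety worth noting is that $(f^*)^\prime_+(z_0+\beta-\alpha)$ is finite precisely because $z_0+\beta-\alpha\in\{f^*<\infty\}^o$, so the right derivative exists and is finite at that point; this hypothesis is inherited directly from Lemma \ref{lemma:Lambda_Lipschitz_bound}.
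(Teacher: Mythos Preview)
Your proposal is correct and is exactly the approach the paper intends: the corollary is stated immediately after Lemma~\ref{lemma:Lambda_f_empirical} precisely so that one can identify $\Lambda_f(x)=\Lambda_f^{P_n}[h]$ for a suitable empirical measure and then apply the $L^1(P)$-Lipschitz bound of Lemma~\ref{lemma:Lambda_Lipschitz_bound}. The paper omits the proof because this translation is routine, and your write-up fills in the details cleanly.
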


The following lemma gives tight perturbation bounds on $\Lambda_f$ when  the inputs differ in only a single component, which in turn gives the desired tight bound  on $\Lambda_f^{P_n}[h]-\Lambda_f^{\widetilde{P}_n}[h]$ when $P_n$ and $\widetilde{P}_n$ are empirical measures that differ by only a single point via Lemma \ref{lemma:Lambda_f_empirical}.   
\begin{lemma}\label{lemma:Lambda_perturbation_bound}
Let $f\in\mathcal{F}_1(a,b)$ with $a\geq 0$.  Let $\alpha\leq \beta$   and assume $f$ is strictly convex in a neighborhood of $1$ and $ z_0+\beta-\alpha\in\{f^*<\infty\}^o$.   Given $j\in\{1,...,n\}$, let $E=\{(x,\tilde{x})\in[\alpha,\beta]^n\times[\alpha,\beta]^n:x_i=\tilde{x}_i \text{ for }i\neq j\}$. Then
\begin{align}\label{eq:Lambda_f_bound_one_perturbed_x}
&\sup_{(x,\tilde{x})\in E}\{\Lambda_f(\tilde{x})-\Lambda_f(x)\}= \inf_{z\in[ z_0-(\beta-\alpha), z_0]}\left\{-z+\frac{n-1}{n} f^*(z)+\frac{1}{n} f^*(\beta-\alpha+z)\right\} 
\end{align}
and we have the simpler loose bounds
\begin{align}\label{eq:Delta_bound}
\frac{\beta-\alpha}{n}\leq & \inf_{z\in[ z_0-(\beta-\alpha), z_0]}\left\{-z+\frac{n-1}{n} f^*(z)+\frac{1}{n} f^*(\beta-\alpha+z)\right\} \\
 \leq&\frac{1}{n} (f^*(\beta-\alpha+ z_0)-  z_0)\leq (f^*)_+^\prime(\beta-\alpha+z_0)\frac{\beta-\alpha}{n}\,.\notag
\end{align}

\end{lemma}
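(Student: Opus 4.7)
The plan is to compute the supremum explicitly by reducing to a distinguished extremal configuration. By symmetry of $\Lambda_f$ in its arguments, I assume $j = 1$ without loss of generality. Since $\Lambda_f$ is non-decreasing in each coordinate (as $f^*$ is non-decreasing), for any fixed $y \in [\alpha, \beta]^{n-1}$ the supremum of $\Lambda_f(\tilde x_1, y) - \Lambda_f(x_1, y)$ over $x_1, \tilde x_1 \in [\alpha, \beta]$ is attained at $\tilde x_1 = \beta$ and $x_1 = \alpha$, reducing the problem to showing $\sup_{y \in [\alpha, \beta]^{n-1}}\{\Lambda_f(\beta, y) - \Lambda_f(\alpha, y)\}$ equals the right-hand side of \eqref{eq:Lambda_f_bound_one_perturbed_x}.

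The crucial step is to prove this sup is attained at $y = (\alpha, \ldots, \alpha)$, for which I would establish the submodularity-type inequality $\Lambda_f(\beta, y) + \Lambda_f(\alpha, \ldots, \alpha) \leq \Lambda_f(\alpha, y) + \Lambda_f(\beta, \alpha, \ldots, \alpha)$. Writing $g(\nu, x) = \nu + \frac{1}{n}\sum_i f^*(x_i - \nu)$, pick minimizers $\nu_1$ of $g(\cdot, (\alpha, y))$ and $\nu_2$ of $g(\cdot, (\beta, \alpha, \ldots, \alpha))$, which exist on the compact interval provided by Lemma \ref{lemma:Lambda_f_P_inf_compact}. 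Using the trivial upper bounds $\Lambda_f(\beta, y) \leq g(\nu_1 \vee \nu_2, (\beta, y))$ and $\Lambda_f(\alpha, \ldots, \alpha) \leq g(\nu_1 \wedge \nu_2, (\alpha, \ldots, \alpha))$, it suffices to verify
\[
g(\nu_1 \vee \nu_2, (\beta, y)) + g(\nu_1 \wedge \nu_2, (\alpha, \ldots, \alpha)) \leq g(\nu_1, (\alpha, y)) + g(\nu_2, (\beta, \alpha, \ldots, \alpha)),
\]
which is the pointwise submodularity of $g$ on $\mathbb{R} \times [\alpha, \beta]^n$ at this pair of configurations. In each of the two cases $\nu_1 \leq \nu_2$ and $\nu_1 > \nu_2$, the difference between the two sides cancels the linear and $x$-separable contributions and reduces to a sum of four-point expressions of the form $f^*(a) + f^*(d) - f^*(b) - f^*(c)$, where $\{a, d\}$ are the extremes and $\{b, c\}$ are the middle of four collinear values satisfying $a + d = b + c$; such expressions are non-negative by convexity of $f^*$. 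The matching lower bound on the sup is immediate by plugging in $y = (\alpha, \ldots, \alpha)$, and then the explicit form in \eqref{eq:Lambda_f_bound_one_perturbed_x} follows by substituting $\nu = \alpha - z$ into the defining infimum for $\Lambda_f(\beta, \alpha, \ldots, \alpha)$, with $z$ restricted to $[z_0 - (\beta - \alpha), z_0]$ by Lemma \ref{lemma:Lambda_f_P_inf_compact}.

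The loose bounds \eqref{eq:Delta_bound} are direct. The lower bound $\frac{\beta - \alpha}{n}$ follows from inserting Young's inequality $f^*(z) \geq z$ into both $f^*$-terms of the inf, which collapses the objective to the constant $\frac{\beta - \alpha}{n}$. Evaluating the objective at $z = z_0$ and using $f^*(z_0) = z_0$ from Lemma \ref{lemma:z_0_properties} yields the middle bound $\frac{1}{n}(f^*(\beta - \alpha + z_0) - z_0)$, and the final bound $(f^*)_+^\prime(\beta - \alpha + z_0) \cdot \frac{\beta - \alpha}{n}$ comes from writing $f^*(z_0 + (\beta - \alpha)) - f^*(z_0)$ as $\int_{z_0}^{z_0 + \beta - \alpha}(f^*)^\prime_+(s)\,ds$ via the fundamental theorem of calculus and bounding the integrand by its value at the right endpoint by monotonicity of $(f^*)^\prime_+$. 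The main obstacle is the submodularity step: since $\nu_1$ and $\nu_2$ generally differ and may be non-unique, careful case analysis is needed to identify the relevant four-point tuples (namely $\{\alpha - \nu_1, \alpha - \nu_2, y_i - \nu_1, y_i - \nu_2\}$ for the cross terms involving each $y_i$, and $\{\alpha - \nu_1, \alpha - \nu_2, \beta - \nu_1, \beta - \nu_2\}$ for the first coordinate) and confirm in each case that the pair $\{b, c\}$ consists of the middle two, so that convexity of $f^*$ applies.
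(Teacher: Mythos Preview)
Your approach is correct and genuinely different from the paper's. The paper works directly with the difference $\Lambda_f(\tilde x)-\Lambda_f(x)$ written as a $\sup_\nu\inf_{\tilde\nu}$ of a joint objective, then carries out a somewhat intricate monotonicity analysis in the shared coordinates $x_i$ (with the direction of monotonicity depending on the sign of $\tilde\nu-\nu$), followed by a convexity argument in $\tilde\nu$ and a final case split on whether $\alpha-\nu$ lies above or below the minimizer $z_*$. Your route is more structural: you first strip the perturbed coordinate to the extremes by coordinatewise monotonicity of $\Lambda_f$, then recognize that what remains is precisely the lattice inequality $\Lambda_f(u\vee v)+\Lambda_f(u\wedge v)\le \Lambda_f(u)+\Lambda_f(v)$ for $u=(\alpha,y)$, $v=(\beta,\alpha,\dots,\alpha)$, and prove it by evaluating the left side at the matched pair $(\nu_1\vee\nu_2,\nu_1\wedge\nu_2)$ and reducing to the four-point convexity inequality for $f^*$. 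This is cleaner and makes transparent why the extremal configuration is $(\alpha,\dots,\alpha)$ against $(\beta,\alpha,\dots,\alpha)$; the paper's argument reaches the same endpoint but the mechanism is less visible. The price you pay is that the specific pairing (join with $\nu_1\vee\nu_2$, meet with $\nu_1\wedge\nu_2$) must be chosen correctly---the opposite pairing gives the wrong sign---so when you write this up, make explicit which term receives which $\nu$, and in each of the two cases $\nu_1\le\nu_2$ and $\nu_1>\nu_2$ identify which of the four arguments are the extremes; you have done this in outline but the reader will want to see it spelled out. Your treatment of the loose bounds \eqref{eq:Delta_bound} matches the paper's.
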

\begin{remark}\label{remark:linear_limit1}
Note that \eqref{eq:Delta_bound} implies that the bound \eqref{eq:Lambda_f_bound_one_perturbed_x} continuously approaches the  result in the linear case (i.e., where $\Lambda_f^P=E_P$, which corresponds to $f^*(z)=z$ on a sufficiently large interval)   as $(f^*)^\prime_+(\beta-\alpha+z_0)$ approaches $1$.
\end{remark}

Finally,  we present a pair of uniform law of large numbers (ULLN) results that bound  the maximum difference between $\Lambda_f^{P_n}[h]$ and $\Lambda_f^P[h]$ over a set of test function $h\in\Gamma$ in terms of the Rademacher complexity of $\Gamma$, where $P_n$ is the empirical measure for $n$ i.i.d. samples from $P$. These results should be compared with the ULLN for means which provides uniform bounds on the difference between the empirical mean and expectation; see Appendix \ref{app:ULLN} for the relevant background.    First we obtain a simpler result that uses \eqref{eq:Lambda_f_P_inf_compact} along with a Lipschitz bound to reduce the problem to the ULLN for means; this can be thought of as a substantially more general version of the estimate in the proof of Lemma 2 in \cite{MINE_paper}, which studied a KL-divergence based mutual-information method. However, as we will show, the resulting estimate is overly pessimistic as it does not reduce to the IPM case in the appropriate limit. Following this simpler lemma we will derive a tighter bound that does possess the desired limiting property, though it requires slightly more restrictive assumptions on $f$.
\begin{lemma}\label{lemma:Lambda_rademacher_bound1}
Let $\Psi$ be a nonempty countable collection of measurable functions on $\mathcal{Y}$ and suppose we have $\alpha,\beta\in\mathbb{R}$ such that  $\alpha\leq \psi\leq \beta$ for all $\psi\in\Psi$. Let $f\in\mathcal{F}_1(a,b)$ with $a\geq 0$ and assume $f$ is strictly convex in a neighborhood of $1$ and $ z_0+\beta-\alpha\in\{f^*<\infty\}^o$.    

Let $P$ be a probability measure on $\mathcal{Y}$ and $Y_i$, $i=1,...,n$ be  independent samples from $P$  with    $P_n$  the corresponding empirical measure. Then 
\begin{align}\label{eq:Lambda_rademacher_bound1}
\mathbb{E}\left[\sup_{\psi\in\Psi}\left\{\pm\left(\Lambda_f^P[\psi]-\Lambda_f^{P_n}[\psi]\right)\right\}\right]
\leq&  2(f^*)^\prime_+(\beta-\alpha+ z_0)\left(\mathcal{R}_{\Psi,P,n}+\frac{\beta-\alpha}{2n^{1/2}}\right)\,.
\end{align}

\end{lemma}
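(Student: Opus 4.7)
The plan is to reduce the estimation of $\Lambda_f^P[\psi]-\Lambda_f^{P_n}[\psi]$ to a standard uniform law of large numbers for expectations, symmetrize via Rademacher averages, and then apply a contraction inequality. Since $f^*$ is non-decreasing, $z_0+\beta-\alpha\in\{f^*<\infty\}^o$ forces $z_0\in\{f^*<\infty\}^o$, so Lemma~\ref{lemma:Lambda_f_P_inf_compact}(2) lets me write both $\Lambda_f^P[\psi]$ and $\Lambda_f^{P_n}[\psi]$ as infima of $\nu+E_{\cdot}[f^*(\psi-\nu)]$ over the compact interval $\nu\in[\alpha-z_0,\beta-z_0]$. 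Applying the elementary bound \eqref{eq:inf_delta_bound} yields
\begin{align*}
\pm\bigl(\Lambda_f^P[\psi]-\Lambda_f^{P_n}[\psi]\bigr)\leq \sup_{\nu\in[\alpha-z_0,\beta-z_0]}\pm\bigl(E_P[f^*(\psi-\nu)]-E_{P_n}[f^*(\psi-\nu)]\bigr).
\end{align*}
Taking the supremum over $\psi\in\Psi$ and then the expectation, the standard symmetrization inequality (Appendix~\ref{app:ULLN}) bounds the left-hand side of \eqref{eq:Lambda_rademacher_bound1} by $2\mathcal{R}_{\mathcal{G},P,n}$ with function class $\mathcal{G}\coloneqq\{f^*(\psi-\nu):\psi\in\Psi,\,\nu\in[\alpha-z_0,\beta-z_0]\}$.

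Next I would bound the Rademacher complexity of $\mathcal{G}$ by contraction. For $\psi\in[\alpha,\beta]$ and $\nu\in[\alpha-z_0,\beta-z_0]$, the argument $\psi-\nu$ lies in $[z_0-(\beta-\alpha),z_0+(\beta-\alpha)]$, on which $(f^*)_+^\prime$ is non-negative and non-decreasing (as exploited in the proof of Lemma~\ref{lemma:Lambda_Lipschitz_bound}). Hence $f^*$ is $L$-Lipschitz there with $L=(f^*)_+^\prime(z_0+\beta-\alpha)$, and Talagrand's contraction principle gives
\begin{align*}
\mathcal{R}_{\mathcal{G},P,n}\leq L\cdot \mathcal{R}_{\Psi-[\alpha-z_0,\beta-z_0],P,n}.
\end{align*}
For the shift class, decomposing $\nu=\nu_0+\tilde{\nu}$ with $\nu_0=(\alpha+\beta)/2-z_0$ and $\tilde{\nu}\in[-(\beta-\alpha)/2,(\beta-\alpha)/2]$, the constant shift by $\nu_0$ contributes $-(\nu_0/n)\sum_i\sigma_i$ which has mean zero, while optimizing over $\tilde{\nu}$ yields $\tfrac{\beta-\alpha}{2}\bigl|\tfrac{1}{n}\sum_i\sigma_i\bigr|$, so
\begin{align*}
\mathcal{R}_{\Psi-[\alpha-z_0,\beta-z_0],P,n} = \mathcal{R}_{\Psi,P,n}+\tfrac{\beta-\alpha}{2}\,\mathbb{E}\bigl|\tfrac{1}{n}\textstyle\sum_{i=1}^n\sigma_i\bigr|\leq \mathcal{R}_{\Psi,P,n}+\tfrac{\beta-\alpha}{2\sqrt{n}},
\end{align*}
where the last step is Jensen's inequality applied to the square root. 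Assembling the three estimates produces \eqref{eq:Lambda_rademacher_bound1}.

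The main technical wrinkle is the contraction step: Talagrand's inequality is commonly stated either for contractions vanishing at the origin, or as a comparison for Lipschitz compositions, and one must pick a version whose constant matches the target bound. This is handled by applying the contraction to $f^*$ restricted to $[z_0-(\beta-\alpha),z_0+(\beta-\alpha)]$ and, if necessary, extending it to an $L$-Lipschitz map on all of $\mathbb{R}$ plus a deterministic centering; neither operation changes the Rademacher complexity, since only values in this interval are ever evaluated. The extra $\tfrac{\beta-\alpha}{2\sqrt{n}}$ term—exactly what ensures the bound reduces to the IPM case in the linear limit as discussed in Remark~\ref{remark:linear_limit1}—originates from Khintchine's/Jensen's estimate on $\mathbb{E}|n^{-1}\sum_i\sigma_i|$ in the shift step.
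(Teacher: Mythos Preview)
Your argument is essentially identical to the paper's: restrict $\nu$ to $[\alpha-z_0,\beta-z_0]$ via Lemma~\ref{lemma:Lambda_f_P_inf_compact}, apply \eqref{eq:inf_delta_bound} and the ULLN to reduce to $2\mathcal{R}_{\mathcal{G},P,n}$, use Talagrand contraction with Lipschitz constant $(f^*)'_+(z_0+\beta-\alpha)$, then split off the shift class and bound its Rademacher complexity by $\tfrac{\beta-\alpha}{2\sqrt{n}}$ via Jensen. The paper additionally restricts $\nu$ to rationals to keep the function class countable for the ULLN, a technicality you should mention.

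One correction to your closing remark: the extra $\tfrac{\beta-\alpha}{2\sqrt{n}}$ term is exactly what \emph{prevents} \eqref{eq:Lambda_rademacher_bound1} from reducing to the IPM bound in the linear limit (cf.\ Remark~\ref{remark:linear_limit2}, not Remark~\ref{remark:linear_limit1}); recovering the IPM case is the purpose of the sharper Lemma~\ref{lemma:Lambda_rademacher_bound2}.
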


\begin{lemma}\label{lemma:Lambda_rademacher_bound2}
Let $\Psi$ be a countable collection of measurable functions on $\mathcal{Y}$ that contains at least one constant function and suppose we have $\alpha,\beta\in\mathbb{R}$ such that   $\alpha\leq \psi\leq \beta$ for all $\psi\in\Psi$. Let $f\in\mathcal{F}_1(a,b)$ with $a\geq 0$ and assume $f$ is strictly convex in a neighborhood of $1$ and $ z_0+\beta-\alpha\in\{f^*<\infty\}^o$.     Also assume that $(f^*)^\prime_+$ is $L_{\alpha,\beta}$-Lipschitz on $[ z_0-(\beta-\alpha), z_0+\beta-\alpha]$. 

 Let $P$ be a probability measure on $\mathcal{Y}$ and $Y_i$, $i=1,...,n$ be  independent samples from $P$  with    $P_n$  the corresponding empirical measure.  Then  
\begin{align}\label{eq:Lambda_rademacher_bound2}
&\mathbb{E}\left[\sup_{\psi\in\Psi}\left\{\pm\left(\Lambda_f^P[\psi]-\Lambda_f^{P_n}[\psi]\right)\right\}\right]\\
\leq&2\min\left\{(1+2(\beta-\alpha)L_{\alpha,\beta})\mathcal{R}_{\Psi,P,n}+\frac{(\beta-\alpha)^2L_{\alpha,\beta}}{2n^{1/2}},  (f^*)^\prime_+(\beta-\alpha+ z_0)\left(\mathcal{R}_{\Psi,P,n}+ \frac{\beta-\alpha}{2n^{1/2}}\right)\right\}\,.\notag
\end{align}
\end{lemma}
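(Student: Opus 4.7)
The second quantity inside the $\min$ is precisely the bound established in Lemma~\ref{lemma:Lambda_rademacher_bound1}, so the new content is the first bound. My guiding idea is to isolate the linear part of $f^*$ so that the estimate cleanly inherits the IPM-style ULLN term $2\mathcal{R}_{\Psi,P,n}$ and the additional contribution vanishes as $f^*$ becomes affine (i.e., as $L_{\alpha,\beta}\to 0$), in accord with Remark~\ref{remark:linear_limit2}. I would set $r(z)\coloneqq f^*(z)-z$ and observe, using $f^*(z_0)=z_0$, $(f^*)'_+(z_0)=1$ (Lemma \ref{lemma:z_0_properties}), and the hypothesis that $(f^*)'_+$ is $L_{\alpha,\beta}$-Lipschitz on $[z_0-(\beta-\alpha),z_0+\beta-\alpha]$, that $|r'_+(z)|\leq L_{\alpha,\beta}|z-z_0|\leq L_{\alpha,\beta}(\beta-\alpha)$ on this interval. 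By the fundamental theorem of calculus (as used in the preceding proofs) $r$ is therefore $L_{\alpha,\beta}(\beta-\alpha)$-Lipschitz there.

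\textbf{Rewriting $\Lambda_f^P$.} Next I would invoke Lemma~\ref{lemma:Lambda_f_P_inf_compact} to restrict the infimum in $\Lambda_f^P[\psi]$ to $\nu\in[\alpha-z_0,\beta-z_0]$; this is exactly the range that forces $\psi-\nu$ into the interval where the bound on $r$ applies. Substituting $f^*(\psi-\nu)=(\psi-\nu)+r(\psi-\nu)$, the outer $\nu$ cancels the $-\nu$ coming from $E_P[\psi-\nu]$, giving
\begin{align*}
\Lambda_f^P[\psi]=E_P[\psi]+\inf_{\nu\in[\alpha-z_0,\beta-z_0]}E_P[r(\psi-\nu)],
\end{align*}
and the identical rewriting for $P_n$. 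Subtracting yields the clean decomposition
\begin{align*}
\Lambda_f^P[\psi]-\Lambda_f^{P_n}[\psi]=\bigl(E_P[\psi]-E_{P_n}[\psi]\bigr)+\Bigl(\inf_\nu E_P[r(\psi-\nu)]-\inf_\nu E_{P_n}[r(\psi-\nu)]\Bigr).
\end{align*}

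\textbf{ULLN and Talagrand.} To bound $\mathbb{E}[\sup_\psi\pm(\cdot)]$ I would apply the $\inf$-difference inequality \eqref{eq:inf_delta_bound} to the second term, convert the $\sup$ over $\nu$ to one over $\nu\in[\alpha-z_0,\beta-z_0]\cap\mathbb{Q}$ using continuity of $r$, and take expectations. The ULLN for means (Theorem~\ref{thm:ULLN}) bounds the first piece by $2\mathcal{R}_{\Psi,P,n}$, and the second by $2\mathcal{R}_{\mathcal{H},P,n}$ where $\mathcal{H}\coloneqq\{r\circ(\psi-\nu):\psi\in\Psi,\ \nu\in[\alpha-z_0,\beta-z_0]\cap\mathbb{Q}\}$. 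Talagrand's contraction inequality (Lemma~\ref{lemma:Talagrand}) with Lipschitz constant $L_{\alpha,\beta}(\beta-\alpha)$ then reduces this to a Rademacher bound on $\Psi-[\alpha-z_0,\beta-z_0]$, which by additive separability of the Rademacher sup together with the constant-class computation \eqref{eq:Rademacher_bound_constant} equals $\mathcal{R}_{\Psi,P,n}+\tfrac{\beta-\alpha}{2n^{1/2}}$. Assembling these contributions produces a bound of the claimed structure; the hypothesis that $\Psi$ contains a constant function ensures that the additive Rademacher separation is legitimate and that any base-point shift required by the contraction inequality can be absorbed.

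\textbf{Main obstacle.} The entire argument pivots on the decomposition $f^*=\mathrm{id}+r$: it is this algebraic manoeuvre that exactly cancels the outer $\nu$ and isolates a Lipschitz remainder whose modulus $L_{\alpha,\beta}(\beta-\alpha)$ shrinks to $0$ as $f^*$ becomes affine, thereby recovering the IPM bound in the limit. Without this reformulation, applying Talagrand directly to $f^*$ (with Lipschitz constant $(f^*)'_+(\beta-\alpha+z_0)$) only reproduces the looser estimate of Lemma~\ref{lemma:Lambda_rademacher_bound1}. Once the decomposition is recognized, the remaining ULLN and contraction steps are routine and parallel the proof of that lemma.
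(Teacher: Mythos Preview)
Your argument is correct and in fact yields a slightly \emph{sharper} constant than the paper: following your steps one obtains
\[
2(1+(\beta-\alpha)L_{\alpha,\beta})\mathcal{R}_{\Psi,P,n}+\frac{(\beta-\alpha)^2L_{\alpha,\beta}}{n^{1/2}},
\]
which is dominated by the stated bound (the paper has $2(\beta-\alpha)L_{\alpha,\beta}$ in place of $(\beta-\alpha)L_{\alpha,\beta}$). The route, however, is genuinely different from the paper's. The paper does not subtract off the linear part of $f^*$; instead it expands $f^*(\psi-\nu)$ around the endpoint $\nu=\beta-z_0$ via the fundamental theorem of calculus,
\[
f^*(\psi-\nu)=f^*(\psi-(\beta-z_0))+(\beta-z_0-\nu)\int_0^1(f^*)'_+\bigl(\psi-(1-t)(\beta-z_0)-t\nu\bigr)\,dt,
\]
and treats the two pieces separately. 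The first piece uses that $f^*$ is $1$-Lipschitz on $(-\infty,z_0]$ (from $(f^*)'_+(z_0)=1$) and gives $2\mathcal{R}_{\Psi,P,n}$ via Talagrand. The second piece carries a factor $(\beta-z_0-\nu)\in[0,\beta-\alpha]$ multiplying an integrand whose empirical fluctuation is controlled, via Talagrand applied to the $L_{\alpha,\beta}$-Lipschitz function $(f^*)'_+$, by $L_{\alpha,\beta}(\mathcal{R}_{\Psi,P,n}+t\tfrac{\beta-\alpha}{2n^{1/2}})$; integrating in $t$ yields the displayed bound. It is in this second piece that the paper actually uses the constant-in-$\Psi$ hypothesis, to pass from a supremum of an absolute value to a sum of two one-sided suprema. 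Your decomposition $f^*=\mathrm{id}+r$ sidesteps that step entirely (so your stated rationale for the constant hypothesis is not quite the operative one), avoids the integral, and trades the contraction on $(f^*)'_+$ for a single contraction on $r$ with modulus $L_{\alpha,\beta}(\beta-\alpha)$; this is both more elementary and accounts for the improved constant.
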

\begin{remark}\label{remark:linear_limit2}
The result in Lemma \ref{lemma:Lambda_rademacher_bound1}  does not reduce to the ULLN for means in the case where $f^*(z)=z$ (on a sufficiently large interval), i.e., when $\Lambda_f^P[h]=E_P[h]$; specifically the bound \eqref{eq:Lambda_rademacher_bound1} differs from the ULLN for means, as recalled in  \eqref{eq:ULLN} of Appendix \ref{app:ULLN}, by the term $\frac{\beta-\alpha}{m^{1/2}}$ in that case. However, \eqref{eq:Lambda_rademacher_bound2} does  reduce to the ULLN for means when $f^*(z)=z$. Moreover, the bound \eqref{eq:Lambda_rademacher_bound2} approaches the bound in \eqref{eq:ULLN} as the Lipschitz constant for $(f^*)^\prime_+$ on  $[ z_0-(\beta-\alpha), z_0+\beta-\alpha]$ approaches zero.   We emphasize  that, when comparing our results with the IPM case, the sub-optimal second term in \eqref{lemma:Lambda_rademacher_bound1}, which comes from  the parameter $\nu$, cannot be absorbed into  the Rademacher complexity as the bias of the final layer. This is because the bias of the  final layer always cancels in an IPM and hence does not contribute to the Rademacher complexity terms for IPM-GANs.  Thus the improved Lemma \ref{lemma:Lambda_rademacher_bound2} is necessary in order to derive a general result that appropriately reduces to the IPM case.
\end{remark}

\section{ Error Bounds for  $(f,\Gamma)$-GANs}\label{sec:f_Gamma_GAN_bound}
In this section we use the properties of $\Lambda_f^P$ from Section \ref{sec:Lambda_f_P_properties} to derive concentration inequalities for $(f,\Gamma)$-GANs.  

\subsection{$(f,\Gamma)$-GAN Error Decomposition}\label{sec:err_decomp}
We start by deriving a decomposition of the $(f,\Gamma)$-GAN error into statistical, approximation, and optimization error terms.  First we consider the   error that comes from approximating the idealized discriminator space $\Gamma$ by a smaller space (e.g., a neural network) $\widetilde{\Gamma}$, a step that is generally required when implementing a GAN method:
\begin{lemma}\label{lemma:disc_approx_error}
Let $\widetilde{\Gamma}\subset \Gamma\subset\mathcal{M}_b(\mathcal{X})$ be nonempty  and suppose we have $\alpha,\beta\in\mathbb{R}$ such that $\alpha\leq h\leq \beta$ for all $h\in\Gamma$. Let $f\in\mathcal{F}_1(a,b)$ with $a\geq 0$ and assume $f$ is strictly convex in a neighborhood of $1$ and $ z_0+\beta-\alpha\in\{f^*<\infty\}^o$.  Then for any probability measures $Q,P$ on $\mathcal{X}$ we have
\begin{align}
0\leq D_f^\Gamma(Q\|P)-D_f^{\widetilde{\Gamma}}(Q\|P)\leq&  \left(1+(f^*)^\prime_+( z_0+\beta-\alpha) \right)\sup_{h\in\Gamma}\inf_{\tilde{h}\in\widetilde{\Gamma}}\|h- \tilde{h}\|_{\infty}\,.
\end{align}
\end{lemma}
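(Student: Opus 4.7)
The plan is to prove the two inequalities separately. The lower bound $D_f^\Gamma(Q\|P) - D_f^{\widetilde{\Gamma}}(Q\|P) \geq 0$ is immediate from the inclusion $\widetilde{\Gamma} \subset \Gamma$, since the supremum in the definition \eqref{eq:D_f_Gamma_def} is taken over a larger set of discriminators.

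For the upper bound, I would fix arbitrary $h \in \Gamma$ and $\tilde{h} \in \widetilde{\Gamma}$, and use the standard add-and-subtract trick:
\begin{align}
E_Q[h]-\Lambda_f^P[h] = \bigl(E_Q[\tilde h]-\Lambda_f^P[\tilde h]\bigr) + E_Q[h-\tilde h] + \bigl(\Lambda_f^P[\tilde h]-\Lambda_f^P[h]\bigr).
\end{align}
The first bracket is at most $D_f^{\widetilde{\Gamma}}(Q\|P)$ since $\tilde h \in \widetilde{\Gamma}$. The middle term is at most $\|h-\tilde h\|_\infty$ because $Q$ is a probability measure. The third bracket is the nonlinear piece: here I invoke Lemma \ref{lemma:Lambda_Lipschitz_bound}, whose hypotheses on $f$ match exactly those assumed here, and which applies because $\widetilde{\Gamma} \subset \Gamma$ guarantees that both $h$ and $\tilde h$ satisfy $\alpha \leq h,\tilde h \leq \beta$. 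This yields
\begin{align}
\bigl|\Lambda_f^P[\tilde h]-\Lambda_f^P[h]\bigr|\leq (f^*)^\prime_+(z_0+\beta-\alpha)\,\|h-\tilde h\|_{L^1(P)} \leq (f^*)^\prime_+(z_0+\beta-\alpha)\,\|h-\tilde h\|_\infty.
\end{align}

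Combining these three estimates and then taking the infimum over $\tilde h \in \widetilde{\Gamma}$ gives
\begin{align}
E_Q[h]-\Lambda_f^P[h] \leq D_f^{\widetilde{\Gamma}}(Q\|P) + \bigl(1+(f^*)^\prime_+(z_0+\beta-\alpha)\bigr)\inf_{\tilde h\in\widetilde{\Gamma}}\|h-\tilde h\|_\infty
\end{align}
for every $h \in \Gamma$. Taking the supremum over $h \in \Gamma$ and rearranging yields the claimed bound.

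There is no genuine obstacle here; the entire argument is a direct application of the Lipschitz estimate from Lemma \ref{lemma:Lambda_Lipschitz_bound}, together with the elementary bound $|E_Q[h-\tilde h]|\leq\|h-\tilde h\|_\infty$. The only point worth flagging is that in the IPM case $(f^*)^\prime_+\equiv 1$ on the relevant range, so the prefactor reduces to $2$, matching the familiar IPM discriminator-approximation bound; the presence of the multiplier $1+(f^*)^\prime_+(z_0+\beta-\alpha)$ is precisely the cost of working with the nonlinear functional $\Lambda_f^P$ in place of an expectation.
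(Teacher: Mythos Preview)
Your proof is correct and follows essentially the same approach as the paper's own proof: both use the add-and-subtract decomposition for fixed $h\in\Gamma$ and $\tilde h\in\widetilde{\Gamma}$, bound the linear piece by $\|h-\tilde h\|_\infty$ and the nonlinear piece via Lemma~\ref{lemma:Lambda_Lipschitz_bound}, then optimize over $\tilde h$ and $h$. The only difference is cosmetic---you state the lower bound explicitly, whereas the paper leaves it implicit.
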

\begin{proof}
For any $h\in\Gamma$, $\tilde{h}\in\widetilde{\Gamma}$ we can use  the definition \eqref{eq:D_f_Gamma_def} and Lemma \ref{lemma:Lambda_Lipschitz_bound} to compute
\begin{align}
E_Q[h]-\Lambda_f^P[h]-D_f^{\widetilde{\Gamma}}(Q\|P)\leq &E_Q[h]-E_Q[\tilde{h}]+\Lambda_f^P[\tilde{h}]-\Lambda_f^P[h]\\
\leq& \|h-\tilde{h}\|_{L^1(Q)}+(f^*)^\prime_+( z_0+\beta-\alpha) \| \tilde{h}-h\|_{L^1(P)}\notag\\
\leq&  \left(1+(f^*)^\prime_+( z_0+\beta-\alpha) \right)\|h- \tilde{h}\|_{\infty}\,.\notag
\end{align}
Minimizing over $\tilde{h}\in\widetilde{\Gamma}$ gives
\begin{align}
E_Q[h]-\Lambda_f^P[h]-D_f^{\widetilde{\Gamma}}(Q\|P)\leq&  \left(1+(f^*)^\prime_+( z_0+\beta-\alpha) \right)\inf_{\tilde{h}\in\widetilde{\Gamma}}\|h- \tilde{h}\|_{\infty}
\end{align}
for all $h\in\Gamma$. Maximizing over $h\in\Gamma$  then completes the proof.
\end{proof}

Next we outline the assumptions we make regarding the discriminator, generator, and the empirical GAN optimization.
\begin{assumption}[$(f,\Gamma)$-GAN Assumptions]\label{assump:GAN_setup}
Let  $f\in\mathcal{F}_1(a,b)$ with $a\geq 0$ and $\alpha,\beta\in\mathbb{R}$, $\alpha<\beta$ that  satisfy the following:
\begin{enumerate}
\item $f$ is strictly convex in a nbhd of $1$.
\item $z_0+\beta-\alpha\in\{f^*<\infty\}^o$, where $z_0$ was defined in \eqref{eq:z0_def}.
\end{enumerate}
Let $(\mathcal{X},\mathcal{B}_{\mathcal{X}})$  be a topological space with the Borel sigma algebra. Suppose $\widetilde{\Gamma}\subset\Gamma\subset C_b(\mathcal{X})$ (the space of bounded continuous functions) are nonempty (the discriminator spaces) and satisfy the following:
\begin{enumerate}
\item  $\alpha\leq h\leq \beta$ for all $h\in\Gamma$.
\item There exists a countable $\Gamma_0\subset\Gamma$ such that for all $h\in\Gamma$ there exists a sequence $h_j\in\Gamma_0$ such that $h_j\to h$ pointwise.  
\item There exists a countable $\widetilde{\Gamma}_0\subset\widetilde{\Gamma}$ such that for all $\tilde h\in\widetilde{\Gamma}$ there exists a sequence $\tilde h_j\in\widetilde{\Gamma}_0$ such that $\tilde h_j\to \tilde h$ pointwise.  
\end{enumerate}
Let $\mathcal{Z}$ be another measurable space, $P_{Z}$ a probability measure on $\mathcal{Z}$, and $\Phi_{\theta}:\mathcal{Z}\to \mathcal{X}$ be measurable for $\theta\in \Theta$, where $\Theta$ is a separable metric space.  Suppose $\theta\mapsto \Phi_\theta(z)$ is continuous for all $z\in \mathcal{Z}$.  Define $P_\theta\coloneqq (\Phi_\theta)_\#P_{Z}$, which is a probability measure on $\mathcal{X}$.  

  Let $Q$ be a probability measure on $\mathcal{X}$ and $X_i$, $i=1,...,n$, $Z_i$, $i=1,...,m$ be independent and distributed as $Q$, $P_{Z}$ respectively. Let $Q_n$, $P_{Z,m}$. and $P_{\theta,m}$ be the empirical measures corresponding to $X_i$, $Z_i$, and $\Phi_\theta\circ Z_i$ respectively.
Finally, suppose that for each $m,n$ we have an error tolerance $\epsilon^{opt}_{n,m}\geq 0$ and  $\Theta$-valued random variables $\theta^*_{n,m}$ that are  approximate optimizers to the empirical GAN problem, i.e., that satisfy
\begin{align}\label{eq:theta_approx_opt}
D_f^{\widetilde{\Gamma}}(Q_n\| P_{\theta_{n,m}^*,m})  \leq\inf_{\theta\in \Theta} D_f^{\widetilde{\Gamma}}(Q_n\|P_{\theta,m})+\epsilon^{opt}_{n,m}\,\,\,\mathbb{P}\text{-a.s.}
\end{align}
\end{assumption}
\begin{remark}
The assumptions regarding  $\Gamma_0$, $\widetilde{\Gamma}_0$, separability of $\Theta$, and continuity of $\Phi_\theta$ and $h\in\Gamma$ allow us to address the issue of measurability  of the various suprema that arise in the derivations below by enabling one to restrict them to countable  subsets. These assumptions hold in most cases of interest and can also be replaced by any alternatives that serve the same purpose.
\end{remark}
Several important examples of $f$'s that satisfy the required assumptions can be found in Table \ref{table:f_fstar}. Specifically, these examples all satisfy $z_0\in\{f^*<\infty\}^o$ and hence  $z_0+\beta-\alpha\in\{f^*<\infty\}^o$ for appropriate choices of $\alpha,\beta$.
\begin{table}[h!]
\centering
\begin{tabular}{ c | c c c }
\hline
Divergence  & $f(t)$ & $f^*(z)$ & $z_0$  \\ 
\hline
\hline
 JS-divergence &   $t\log(t)-(t+1)\log\left(\frac{1+t}{2}\right)$  & $-\log(2-e^z)$ &$0$\\
 KL-divergence & $t\log(t)$ & $e^{z-1}$&1\\    
$\alpha$-divergence ($\alpha>1$) &$\frac{t^\alpha-1}{\alpha(\alpha-1)}$ & $\alpha^{-1}(\alpha-1)^{\alpha/(\alpha-1)}\max\{z,0\}^{\alpha/(\alpha-1)} +\frac{1}{\alpha(\alpha-1)}$ &$\frac{1}{\alpha-1}$\\
\hline
\end{tabular}
\caption{Convex functions, $f$, that satisfy the assumptions required by the theorems below, along with their convex conjugates, $f^*$, and the value of $z_0$, as defined in \eqref{eq:z0_def}. Note that the representation of the Jensen-Shannon divergence obtained by combining the first row of the table with \eqref{eq:D_f_Gamma_form_2} is different from the one used in the original GAN \citep{GAN} or in the analysis of JS-style GANs in \cite{pmlr-v70-arora17a}. }\label{table:f_fstar}
\end{table}

The concentration inequalities that we derive below rely on the following decompositions of the $(f,\Gamma)$-GAN error. These should be compared with the result for IPM-GANs, see Lemma 9 of  \cite{huang2022error}.   
\begin{lemma}[$(f,\Gamma)$-GAN Error Decomposition]\label{lemma:error_decomp1}
Under the Assumption \ref{assump:GAN_setup} the $(f,\Gamma)$-GAN error can be decomposed $\mathbb{P}$-a.s. as follows:
\begin{align}\label{eq:error_decomp1}
&D_f^\Gamma(Q\| P_{\theta_{n,m}^*})-\inf_{\theta\in\Theta} D_f^\Gamma(Q\|P_\theta)\\
\leq& \sup_{h\in\widetilde{\Gamma},\theta\in\Theta}\left\{\Lambda_f^{P_{Z,m}}[h\circ\Phi_\theta]-\Lambda_f^{P_{Z}}[h\circ\Phi_\theta])\right\}
+\sup_{h\in\widetilde{\Gamma},\theta\in\Theta}\left\{\Lambda_f^{P_{Z}}[h\circ\Phi_\theta]-\Lambda_f^{P_{Z,m}}[h\circ\Phi_\theta]\right\}\notag\\
&+\sup_{h\in\widetilde{\Gamma}}\left\{E_Q[h]-E_{Q_n}[h]\right\}+\sup_{h\in\widetilde{\Gamma}}\left\{E_{Q_n}[h]-E_Q[h]\right\}\notag\\
&+\left(1+(f^*)^\prime_+( z_0+\beta-\alpha) \right)\sup_{h\in\Gamma}\inf_{\tilde{h}\in\widetilde{\Gamma}}\|h- \tilde{h}\|_{\infty}+\epsilon^{opt}_{n,m} \,.\notag
\end{align}
\end{lemma}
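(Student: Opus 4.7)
The plan is to bound the left-hand side by a chain of five telescoping inequalities that successively exchange function classes or distributions; the resulting cross-terms are exactly the statistical, approximation, and optimization errors in \eqref{eq:error_decomp1}. The starting observation is the identity
$$\Lambda_f^{P_\theta}[h]=\Lambda_f^{P_Z}[h\circ \Phi_\theta],\qquad \Lambda_f^{P_{\theta,m}}[h]=\Lambda_f^{P_{Z,m}}[h\circ \Phi_\theta],$$
which is immediate from \eqref{eq:Lambda_f_P_def} together with change of variables, since $P_\theta=(\Phi_\theta)_\#P_Z$ and $P_{\theta,m}=(\Phi_\theta)_\#P_{Z,m}$. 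This converts every generator-related $(f,\widetilde{\Gamma})$-divergence into a supremum over $h\in\widetilde{\Gamma}$ whose summands involve only $E_Q[h]$, $E_{Q_n}[h]$, $\Lambda_f^{P_Z}[h\circ\Phi_\theta]$, and $\Lambda_f^{P_{Z,m}}[h\circ\Phi_\theta]$, which are the only terms one then needs to compare.

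Step (i): I would apply Lemma \ref{lemma:disc_approx_error} with $P=P_{\theta^*_{n,m}}$ to replace $D_f^\Gamma(Q\|P_{\theta^*_{n,m}})$ by $D_f^{\widetilde{\Gamma}}(Q\|P_{\theta^*_{n,m}})$, absorbing the discriminator approximation term. Step (ii): Using the elementary inequality $\sup_h(A_h+B_h+C_h)\leq \sup_h A_h+\sup_h B_h+\sup_h C_h$, obtained by inserting $\pm E_{Q_n}[h]\pm\Lambda_f^{P_{Z,m}}[h\circ\Phi_{\theta^*_{n,m}}]$ inside the supremum defining $D_f^{\widetilde\Gamma}(Q\|P_{\theta^*_{n,m}})$, I get
\begin{align*}
D_f^{\widetilde\Gamma}(Q\|P_{\theta^*_{n,m}})\leq\;& \sup_{h\in\widetilde\Gamma}\{E_Q[h]-E_{Q_n}[h]\}+ D_f^{\widetilde\Gamma}(Q_n\|P_{\theta^*_{n,m},m})\\
&{}+\sup_{h\in\widetilde\Gamma}\{\Lambda_f^{P_{Z,m}}[h\circ\Phi_{\theta^*_{n,m}}]-\Lambda_f^{P_Z}[h\circ\Phi_{\theta^*_{n,m}}]\}.
\end{align*}
Since $\theta^*_{n,m}$ is random, I enlarge the last supremum to a joint supremum over $h$ and $\theta$, producing the $\theta$-uniform statistical-error term from sampling $P_Z$ that appears in \eqref{eq:error_decomp1}. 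Step (iii): The approximate optimality \eqref{eq:theta_approx_opt} immediately gives $D_f^{\widetilde\Gamma}(Q_n\|P_{\theta^*_{n,m},m})\leq \inf_\theta D_f^{\widetilde\Gamma}(Q_n\|P_{\theta,m})+\epsilon^{opt}_{n,m}$.

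Step (iv): For each fixed $\theta\in\Theta$, I mirror step (ii) by inserting $\pm E_Q[h]\pm\Lambda_f^{P_Z}[h\circ\Phi_\theta]$ in the supremum defining $D_f^{\widetilde\Gamma}(Q_n\|P_{\theta,m})$, which yields
$$D_f^{\widetilde\Gamma}(Q_n\|P_{\theta,m})\leq \sup_{h\in\widetilde\Gamma}\{E_{Q_n}[h]-E_Q[h]\}+ D_f^{\widetilde\Gamma}(Q\|P_\theta)+\sup_{h\in\widetilde\Gamma}\{\Lambda_f^{P_Z}[h\circ\Phi_\theta]-\Lambda_f^{P_{Z,m}}[h\circ\Phi_\theta]\}.$$
Again bounding the last term by its $\theta$-uniform version and taking $\inf_\theta$, only the middle term depends on $\theta$, so the infimum lands exactly on $\inf_\theta D_f^{\widetilde\Gamma}(Q\|P_\theta)$. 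Step (v): Because $\widetilde\Gamma\subset\Gamma$, the map $P\mapsto D_f^{\widetilde\Gamma}(Q\|P)$ is pointwise bounded by $D_f^\Gamma(Q\|P)$, giving $\inf_\theta D_f^{\widetilde\Gamma}(Q\|P_\theta)\leq \inf_\theta D_f^\Gamma(Q\|P_\theta)$. Stringing steps (i)--(v) together and subtracting $\inf_\theta D_f^\Gamma(Q\|P_\theta)$ from both sides produces \eqref{eq:error_decomp1}.

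The argument is pointwise in $\omega$, so there is no real obstacle beyond careful bookkeeping of which distribution and which function class each factor uses. The only genuine subtlety is the measurability of the random suprema, but this is handled by replacing $\Gamma$ and $\widetilde\Gamma$ with their countable subclasses $\Gamma_0$ and $\widetilde\Gamma_0$ from Assumption \ref{assump:GAN_setup}, and restricting the inner infimum over $\theta$ to a countable dense subset of $\Theta$ (using separability of $\Theta$ together with continuity of $\theta\mapsto\Phi_\theta(z)$ and of $h\in\Gamma$). This yields the claimed $\mathbb{P}$-a.s.\ inequality.
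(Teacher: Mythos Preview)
Your proposal is correct and follows essentially the same approach as the paper: both proofs are telescoping arguments that use Lemma~\ref{lemma:disc_approx_error} to pass from $\Gamma$ to $\widetilde\Gamma$, the inclusion $\widetilde\Gamma\subset\Gamma$ for the reverse direction, the approximate-optimality assumption~\eqref{eq:theta_approx_opt}, the change-of-variables identity $\Lambda_f^{P_\theta}[h]=\Lambda_f^{P_Z}[h\circ\Phi_\theta]$, and the splitting of a supremum of a sum into a sum of suprema. The only cosmetic difference is the order in which the terms are peeled off (the paper first inserts $\inf_\theta D_f^{\widetilde\Gamma}(Q_n\|P_{\theta,m})$ and then applies the sup/inf difference bounds \eqref{eq:inf_delta_bound} and \eqref{eq:sup_delta_bound}, whereas you telescope linearly through steps (i)--(v)); also, the paper defers the countable-subclass measurability argument to the proof of Theorem~\ref{thm:f_Gamma_GAN1} rather than invoking it here.
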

\begin{proof}
Using \eqref{eq:theta_approx_opt} we can compute the $\mathbb{P}$-a.s. bound
\begin{align}
&D_f^\Gamma(Q\| P_{\theta_{n,m}^*})-\inf_{\theta\in\Theta} D_f^\Gamma(Q\|P_\theta)\\
=&D_f^\Gamma(Q\| P_{\theta_{n,m}^*})-\inf_{\theta\in\Theta} D_f^{\widetilde{\Gamma}}(Q_n\|P_{\theta,m})+\inf_{\theta\in\Theta} D_f^{\widetilde{\Gamma}}(Q_n\|P_{\theta,m})-\inf_{\theta\in\Theta} D_f^\Gamma(Q\|P_\theta)\notag\\
\leq &D_f^\Gamma(Q\| P_{\theta_{n,m}^*})-D_f^{\widetilde{\Gamma}}(Q_n\| P_{\theta_{n,m}^*,m})  +\epsilon^{opt}_{n,m}+\inf_{\theta\in\Theta} D_f^{\widetilde{\Gamma}}(Q_n\|P_{\theta,m})-\inf_{\theta\in\Theta} D_f^\Gamma(Q\|P_\theta)\notag\,.
\end{align}
Using the fact that $\widetilde{\Gamma}\subset\Gamma$ implies $D_f^{\widetilde{\Gamma}}\leq D_f^\Gamma$ along with Lemma \ref{lemma:disc_approx_error} we then find
\begin{align}
&D_f^\Gamma(Q\| P_{\theta_{n,m}^*})-\inf_{\theta\in\Theta} D_f^\Gamma(Q\|P_\theta)\\
\leq &D_f^{\widetilde\Gamma}(Q\| P_{\theta_{n,m}^*})-D_f^{\widetilde{\Gamma}}(Q_n\| P_{\theta_{n,m}^*,m})  +\inf_{\theta\in\Theta} D_f^{\widetilde{\Gamma}}(Q_n\|P_{\theta,m})-\inf_{\theta\in\Theta} D_f^{\widetilde\Gamma}(Q\|P_\theta)\notag\\
&+\left(1+(f^*)^\prime_+( z_0+\beta-\alpha) \right)\sup_{h\in\Gamma}\inf_{\tilde{h}\in\widetilde{\Gamma}}\|h- \tilde{h}\|_{\infty}+\epsilon^{opt}_{n,m}\,.\notag
\end{align}
Next we make use of the simple bound
\begin{align}\label{eq:sup_delta_bound}
\pm(\sup_{i\in I} d_i-\sup_{i\in I} c_i)\leq \sup_{i\in I}\{\pm(d_i-c_i)\}
\end{align}
whenever $c_i,d_i\in\mathbb{R}$ for all $i\in I$ and $\sup_i c_i,\sup_i d_i$ are finite. Using the definition \eqref{eq:D_f_Gamma_def} along with \eqref{eq:inf_delta_bound} and \eqref{eq:sup_delta_bound} we can compute
\begin{align}
D_f^\Gamma(Q\| P_{\theta_{n,m}^*})-\inf_{\theta\in\Theta} D_f^\Gamma(Q\|P_\theta)
\leq &\sup_{h\in\widetilde{\Gamma}}\left\{E_Q[h]-\Lambda_f^{P_{\theta_{n,m}^*}}[h]-\left(E_{Q_n}[h]-\Lambda_f^{P_{\theta_{n,m}^*,m}}[h]\right)\right\} \\
&+\sup_{\theta\in\Theta}\sup_{h\in\widetilde{\Gamma}}\left\{ E_{Q_n}[h]-\Lambda_f^{P_{\theta,m}}[h]- (E_Q[h]-\Lambda_f^{P_\theta}[h])\right\}\notag\\
&+  \left(1+(f^*)^\prime_+( z_0+\beta-\alpha) \right)\sup_{h\in\Gamma}\inf_{\tilde{h}\in\widetilde{\Gamma}}\|h- \tilde{h}\|_{\infty} +\epsilon^{opt}_{n,m}\notag\\
\leq &\sup_{h\in\widetilde{\Gamma}}\left\{E_Q[h]-E_{Q_n}[h]\right\}+\sup_{h\in\widetilde{\Gamma},\theta\in\Theta}\left\{\Lambda_f^{P_{\theta,m}}[h]-\Lambda_f^{P_{\theta}}[h]\right\}\notag \\
&+\sup_{h\in\widetilde{\Gamma}}\{ E_{Q_n}[h]-E_Q[h]\}+\sup_{h\in\widetilde{\Gamma},\theta\in\Theta}\left\{\Lambda_f^{P_\theta}[h]-\Lambda_f^{P_{\theta,m}}[h]\right\}\notag\\
&+  \left(1+(f^*)^\prime_+( z_0+\beta-\alpha) \right)\sup_{h\in\Gamma}\inf_{\tilde{h}\in\widetilde{\Gamma}}\|h- \tilde{h}\|_{\infty} +\epsilon^{opt}_{n,m}\,.\notag
\end{align} 
We have $\Lambda_f^{P_\theta}[h]=\Lambda_f^{P_{Z}}[h\circ\Phi_\theta]$ and $\Lambda_f^{P_{\theta,m}}[h]=\Lambda_f^{P_{Z,m}}[h\circ\Phi_\theta]$ and so this completes the proof.
\end{proof}

We note that the setting of Lemmas \ref{lemma:error_decomp1}  differs in a few  ways from that of \cite{huang2022error}.  Namely we assume $\widetilde{\Gamma}\subset\Gamma$ and we  do not assume that $\inf_{\theta\in\Theta} D_f^\Gamma(Q\|P_\theta)=0$. Under appropriate assumptions, the latter can be proven to hold for a sufficiently rich class of generators.    More specifically,  the analogue of  the approach in  \cite{huang2022error} would be to work under the assumption that   $\inf_{\theta\in\Theta} D_f^{\widetilde\Gamma}(\mu_n\|P_\theta)=0$  for all empirical distributions $\mu_n$.  As $D_f^{\widetilde\Gamma}\leq d_{\widetilde{\Gamma}}$  (see \eqref{eq:inf_conv_ineq} in Appendix \ref{app:f_Gamma_properties}), this zero generator approximation error property holds for the $(f,\Gamma)$-divergence whenever it holds for the the corresponding IPM; see  \cite{yang2022capacity} and the discussion in Section 2.2.1 of \cite{huang2022error} for sufficient conditions. Below we give an error decompositions that is adapted to the zero-approximation-error; the proof, which is very similar to that of Lemma \ref{lemma:error_decomp1}, can be found in Appendix \ref{app:proofs_Lambda_properties}.
\begin{lemma}[$(f,\Gamma)$-GAN Error Decomposition 2]\label{lemma:error_decomp3}
Under Assumption \ref{assump:GAN_setup}, and supposing that   $\inf_{\theta\in\Theta} D_f^{\widetilde\Gamma}(\mu_n\|P_\theta)=0$ for all empirical distributions $\mu_n$, the $(f,\Gamma)$-GAN error can be decomposed $\mathbb{P}$-a.s. as follows:
\begin{align}\label{eq:error_decomp3}
D_f^\Gamma(Q\|P_{\theta^*_{n,m}})\leq &\sup_{h\in\widetilde{\Gamma},\theta\in\Theta}\left\{ \Lambda_f^{P_{Z,m}}[h\circ\Phi_\theta]-\Lambda_f^{P_Z}[h\circ\Phi_\theta]\right\}+\sup_{h\in\widetilde{\Gamma},\theta\in\Theta}\left\{\Lambda_f^{P_Z}[h\circ\Phi_\theta]-\Lambda_f^{P_{Z,m}}[h\circ\Phi_\theta]\right\}\\
&+\sup_{h\in\widetilde{\Gamma}}\left\{E_Q[h]-E_{Q_n}[h]\right\}+\left(1+(f^*)^\prime_+( z_0+\beta-\alpha) \right)\sup_{h\in\Gamma}\inf_{\tilde{h}\in\widetilde{\Gamma}}\|h- \tilde{h}\|_{\infty} +\epsilon^{opt}_{n,m}\,.\notag
\end{align}
\end{lemma}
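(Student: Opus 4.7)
The plan is to follow the same overall strategy as the proof of Lemma \ref{lemma:error_decomp1}, but exploiting the zero-generator-approximation-error assumption to eliminate the infimum term on the left and one of the two $Q$-sampling terms on the right. Specifically, I would start from $D_f^\Gamma(Q\|P_{\theta^*_{n,m}})$, apply Lemma \ref{lemma:disc_approx_error} to pick up the discriminator approximation error term and reduce to $D_f^{\widetilde{\Gamma}}(Q\|P_{\theta^*_{n,m}})$, and then insert the empirical divergence $D_f^{\widetilde{\Gamma}}(Q_n\|P_{\theta^*_{n,m},m})$ via adding and subtracting. Using the near-optimality condition \eqref{eq:theta_approx_opt}, the inserted positive copy of $D_f^{\widetilde{\Gamma}}(Q_n\|P_{\theta^*_{n,m},m})$ is bounded above by $\inf_{\theta\in\Theta} D_f^{\widetilde{\Gamma}}(Q_n\|P_{\theta,m})+\epsilon_{opt}^{n,m}$.

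Next I would bound $\inf_{\theta\in\Theta} D_f^{\widetilde{\Gamma}}(Q_n\|P_{\theta,m})$. The zero-approximation-error assumption gives $\inf_{\theta\in\Theta} D_f^{\widetilde{\Gamma}}(Q_n\|P_\theta)=0$ (applied to the empirical distribution $\mu_n=Q_n$), but this is stated in terms of the true generator distribution $P_\theta$ rather than the empirical $P_{\theta,m}$ that appears after optimization. The swap step uses the elementary bound $\inf_\theta g(\theta)\leq \inf_\theta g_0(\theta)+\sup_\theta\{g(\theta)-g_0(\theta)\}$ with $g(\theta)=D_f^{\widetilde{\Gamma}}(Q_n\|P_{\theta,m})$ and $g_0(\theta)=D_f^{\widetilde{\Gamma}}(Q_n\|P_\theta)$, followed by \eqref{eq:sup_delta_bound} applied to the variational formula \eqref{eq:D_f_Gamma_def}, which yields
\begin{align*}
\inf_{\theta\in\Theta} D_f^{\widetilde{\Gamma}}(Q_n\|P_{\theta,m})
\leq \sup_{h\in\widetilde{\Gamma},\theta\in\Theta}\{\Lambda_f^{P_\theta}[h]-\Lambda_f^{P_{\theta,m}}[h]\}
=\sup_{h\in\widetilde{\Gamma},\theta\in\Theta}\{\Lambda_f^{P_Z}[h\circ\Phi_\theta]-\Lambda_f^{P_{Z,m}}[h\circ\Phi_\theta]\}\,,
\end{align*}
which accounts for one of the two $P_Z$-sampling terms in \eqref{eq:error_decomp3}.

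The remaining task is to bound the leftover difference $D_f^{\widetilde{\Gamma}}(Q\|P_{\theta^*_{n,m}})-D_f^{\widetilde{\Gamma}}(Q_n\|P_{\theta^*_{n,m},m})$. Here one applies \eqref{eq:sup_delta_bound} to the variational representation to pull the two suprema together, then splits into a $Q$-versus-$Q_n$ part and a $\Lambda_f^{P_{\theta^*_{n,m},m}}$-versus-$\Lambda_f^{P_{\theta^*_{n,m}}}$ part. Dropping the dependence on the specific random index $\theta^*_{n,m}$ by taking a supremum over all $\theta\in\Theta$ (using the fact that $\theta^*_{n,m}\in\Theta$) produces exactly the $\sup_{h\in\widetilde{\Gamma}}\{E_Q[h]-E_{Q_n}[h]\}$ term and the second $\sup_{h\in\widetilde{\Gamma},\theta\in\Theta}\{\Lambda_f^{P_{Z,m}}[h\circ\Phi_\theta]-\Lambda_f^{P_Z}[h\circ\Phi_\theta]\}$ term. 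Summing all contributions gives \eqref{eq:error_decomp3}.

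I do not anticipate a hard obstacle; the proof is essentially a streamlined version of the Lemma \ref{lemma:error_decomp1} argument. The only subtle step is the swap from $P_{\theta,m}$ to $P_\theta$ inside the infimum so that the zero-approximation-error hypothesis can be invoked; this is what explains why both signs of the generator statistical error appear in the bound even though only one sign of the $Q$-statistical error survives, breaking the symmetry that was present in Lemma \ref{lemma:error_decomp1}.
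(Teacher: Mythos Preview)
Your proposal is correct and follows essentially the same approach as the paper's proof: reduce $D_f^\Gamma$ to $D_f^{\widetilde{\Gamma}}$ via Lemma \ref{lemma:disc_approx_error}, add and subtract $D_f^{\widetilde{\Gamma}}(Q_n\|P_{\theta^*_{n,m},m})$, use the approximate-optimality condition \eqref{eq:theta_approx_opt} to replace it by $\inf_\theta D_f^{\widetilde{\Gamma}}(Q_n\|P_{\theta,m})+\epsilon^{opt}_{n,m}$, swap $P_{\theta,m}$ for $P_\theta$ inside the infimum to invoke the zero-approximation-error hypothesis, and handle the remaining difference by \eqref{eq:sup_delta_bound} and a supremum over $\theta$. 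Your identification of the swap step as the source of the sign asymmetry between the $Q$- and $P_Z$-sampling terms is exactly right.
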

Note that the terms in the bounds  \eqref{eq:error_decomp3}  reduce to the terms in IPM case,  Lemma 9 in \cite{huang2022error}, if $f^*(z)=z$ and $-\widetilde{\Gamma}\subset \widetilde{\Gamma}$.

\subsection{Concentration Inequalities for  $(f,\Gamma)$-GANs}\label{sec:conc_ineq}
We are now ready to derive concentration inequalities for $(f,\Gamma)$-GANs; we present two variants, depending on whether one assumes the zero generator approximation error property for empirical measures.     The key ingredients are the ULLN for the generalized cumulant generating function in Lemmas \ref{lemma:Lambda_rademacher_bound1} and \ref{lemma:Lambda_rademacher_bound2} along with the perturbation bound from Lemma \ref{lemma:Lambda_perturbation_bound}, the latter being  needed in order to apply McDiarmid's inequality to the terms involving the generalized cumulant generating function.  The following definition lists several quantities that will appear in the concentration inequalities below.  We note that these quantities approach their counterparts in the linear (i.e., IPM) case in the appropriate limit; see the discussion in Remarks \ref{remark:linear_limit1} and \ref{remark:linear_limit2}.
\begin{definition}\label{def:conc_ineq_params}
With notation as in either Assumption \ref{assump:GAN_setup} or \ref{assump:GAN_setup2}:
\begin{enumerate}
\item Define the discriminator-space approximation error 
\begin{align}
\epsilon_{approx}^{\Gamma,\widetilde{\Gamma}}\coloneqq\left(1+(f^*)^\prime_+( z_0+\beta-\alpha) \right)\sup_{h\in\Gamma}\inf_{\tilde{h}\in\widetilde{\Gamma}}\|h- \tilde{h}\|_{\infty}\,.
\end{align}
Note that this  vanishes when $\widetilde{\Gamma}=\Gamma$.
\item For $n\in\mathbb{Z}^+$ define
\begin{align}
\Delta_{f,n}\coloneqq \inf_{z\in[ z_0-(\beta-\alpha), z_0]}\left\{n\left(-z+\frac{n-1}{n} f^*(z)+\frac{1}{n} f^*(\beta-\alpha+z)\right)\right\}
\end{align}
and note that $\Delta_{f,n}\in  [\beta-\alpha,f^*(\beta-\alpha+ z_0)-  z_0]$.
\item Let $n\in\mathbb{Z}^+$, $\Psi$ be a  nonempty family of measurable functions on $\mathcal{X}$, and $P$ a probability measure on $\mathcal{X}$. If $(f^*)^\prime_+$ is $L_{\alpha,\beta}$-Lipschitz on $[ z_0-(\beta-\alpha), z_0+\beta-\alpha]$ and $\Psi$ contains a constant function then define
\begin{align}
&\mathcal{K}_{f,\Psi,P,n}\\
\coloneqq &\min\left\{(1+2(\beta-\alpha)L_{\alpha,\beta})\mathcal{R}_{\Psi,P,n}+\frac{(\beta-\alpha)^2L_{\alpha,\beta}}{2n^{1/2}},  (f^*)^\prime_+(\beta-\alpha+ z_0)\left(\mathcal{R}_{\Psi,P,n}+ \frac{\beta-\alpha}{2n^{1/2}}\right)\right\}\notag
\end{align}
and otherwise define
\begin{align}
\mathcal{K}_{f,\Psi,P,n}\coloneqq (f^*)^\prime_+(\beta-\alpha+ z_0)\left(\mathcal{R}_{\Psi,P,n}+ \frac{\beta-\alpha}{2n^{1/2}}\right) \,.
\end{align}
\end{enumerate}
\end{definition}
With these definitions, we present the following concentration inequalities. 
\begin{theorem}[$(f,\Gamma)$-GAN Concentration Inequalities]\label{thm:f_Gamma_GAN1}
Under Assumption \ref{assump:GAN_setup},  and in particular with $\theta_{n,m}^*$ the approximate solution to the empirical $(f,\Gamma)$-GAN problem \eqref{eq:theta_approx_opt}, for $\epsilon>0$ we have
\begin{align}\label{eq:conc_ineq1}
&\mathbb{P}\left(D_f^\Gamma(Q\| P_{\theta_{n,m}^*})-\inf_{\theta\in\Theta} D_f^\Gamma(Q\|P_\theta)\geq \epsilon+\epsilon_{approx}^{\Gamma,\widetilde{\Gamma}}+\epsilon_{opt}^{n,m}+4\mathcal{R}_{\widetilde{\Gamma},Q,n}+4\mathcal{K}_{f,\widetilde{\Gamma}\circ\Phi,P_Z,m}\right)\\
\leq&\exp\left(-\frac{\epsilon^2}{\frac{2}{n}(\beta-\alpha)^2+\frac{2}{m}\Delta_{f,m}^2}\right)\notag\,,
\end{align}
where we refer to the quantities in Definition \ref{def:conc_ineq_params}. 

If, in addition,  $\inf_{\theta\in\Theta}D_f^{\widetilde{\Gamma}}(\mu_n\|P_\theta)=0$ for all possible empirical distributions $\mu_n$ then we obtain the tighter bound
\begin{align}\label{eq:conc_ineq2}
&\mathbb{P}\left(D_f^\Gamma(Q\| P_{\theta_{n,m}^*})\geq \epsilon+\epsilon_{approx}^{\Gamma,\widetilde{\Gamma}}+\epsilon_{opt}^{n,m}+2\mathcal{R}_{\widetilde{\Gamma},Q,n}+4\mathcal{K}_{f,\widetilde{\Gamma}\circ\Phi,P_Z,m}\right)
\leq\exp\left(-\frac{\epsilon^2}{\frac{1}{2n}(\beta-\alpha)^2+\frac{2}{m}\Delta_{f,m}^2}\right)\,.
\end{align}
\end{theorem}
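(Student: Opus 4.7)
The plan is to apply McDiarmid's inequality to the sum of the statistical error terms in the $(f,\Gamma)$-GAN error decomposition, after bounding their expectations with the Rademacher-type ULLN results already established. The rough sequence is: (i) invoke Lemma \ref{lemma:error_decomp1} (respectively Lemma \ref{lemma:error_decomp3} for \eqref{eq:conc_ineq2}) to reduce to a sum $S$ of four (respectively three) statistical suprema, (ii) control $\mathbb{E}[S]$ via symmetrization and the generalized-cumulant ULLN, (iii) establish bounded-difference constants for $S$ viewed as a function of $(X_1,\dots,X_n,Z_1,\dots,Z_m)$, and (iv) conclude by McDiarmid.

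For \eqref{eq:conc_ineq1} I would set
\[
S\coloneqq\sup_{h\in\widetilde{\Gamma},\theta\in\Theta}\{\Lambda_f^{P_{Z,m}}[h\circ\Phi_\theta]-\Lambda_f^{P_{Z}}[h\circ\Phi_\theta]\}+\sup_{h,\theta}\{\Lambda_f^{P_{Z}}[h\circ\Phi_\theta]-\Lambda_f^{P_{Z,m}}[h\circ\Phi_\theta]\}+\sup_{h\in\widetilde{\Gamma}}\{E_Q[h]-E_{Q_n}[h]\}+\sup_{h\in\widetilde{\Gamma}}\{E_{Q_n}[h]-E_Q[h]\},
\]
so that Lemma \ref{lemma:error_decomp1} bounds $D_f^\Gamma(Q\|P_{\theta_{n,m}^*})-\inf_\theta D_f^\Gamma(Q\|P_\theta)$ by $S+\epsilon_{approx}^{\Gamma,\widetilde{\Gamma}}+\epsilon_{opt}^{n,m}$. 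The countable approximating subsets $\widetilde{\Gamma}_0$ and a countable dense $\Theta_0\subset\Theta$ provided by Assumption \ref{assump:GAN_setup}, together with continuity of $\theta\mapsto h\circ\Phi_\theta$ and of $\Lambda_f^{P_n}$ in its argument (Corollary \ref{corollary:Lambda_f_Lipschitz}), allow each supremum to be restricted to a countable set, so $S$ is measurable. Theorem \ref{thm:ULLN} bounds the expectation of each of the two $Q$-suprema by $2\mathcal{R}_{\widetilde{\Gamma},Q,n}$, and Lemma \ref{lemma:Lambda_rademacher_bound2} applied to $\Psi=\widetilde{\Gamma}\circ\Phi$ (which inherits the bounds $[\alpha,\beta]$) bounds the expectation of each of the two $\Lambda_f$-suprema by $2\mathcal{K}_{f,\widetilde{\Gamma}\circ\Phi,P_Z,m}$; when the sharper hypotheses for $\mathcal{K}$ fail, Lemma \ref{lemma:Lambda_rademacher_bound1} supplies the same bound, consistent with the two-branch definition of $\mathcal{K}$ in Definition \ref{def:conc_ineq_params}. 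Summing yields $\mathbb{E}[S]\leq 4\mathcal{R}_{\widetilde{\Gamma},Q,n}+4\mathcal{K}_{f,\widetilde{\Gamma}\circ\Phi,P_Z,m}$.

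The main technical input is the bounded-difference constant for the nonlinear $\Lambda_f$-suprema. Replacing a single $X_i$ affects only the two $Q$-suprema, each of which changes by at most $(\beta-\alpha)/n$, yielding a total bounded difference of $2(\beta-\alpha)/n$ per $X_i$. Replacing a single $Z_j$ affects only the two $\Lambda_f$-suprema; rewriting $\Lambda_f^{P_{Z,m}}[h\circ\Phi_\theta]=\Lambda_f((h\circ\Phi_\theta)(Z_1),\dots,(h\circ\Phi_\theta)(Z_m))$ via Lemma \ref{lemma:Lambda_f_empirical} and invoking Lemma \ref{lemma:Lambda_perturbation_bound} bounds the one-coordinate perturbation of each $\Lambda_f$-supremum by $\Delta_{f,m}/m$ uniformly in $(h,\theta)$, so the total bounded difference per $Z_j$ is $2\Delta_{f,m}/m$. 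McDiarmid's inequality then produces
\[
\mathbb{P}(S-\mathbb{E}[S]\geq\epsilon)\leq\exp\!\left(-\frac{2\epsilon^2}{n(2(\beta-\alpha)/n)^2+m(2\Delta_{f,m}/m)^2}\right)=\exp\!\left(-\frac{\epsilon^2}{\tfrac{2}{n}(\beta-\alpha)^2+\tfrac{2}{m}\Delta_{f,m}^2}\right),
\]
which, combined with the expectation bound and Lemma \ref{lemma:error_decomp1}, proves \eqref{eq:conc_ineq1}. For the tighter bound \eqref{eq:conc_ineq2} I would repeat the argument verbatim but substitute Lemma \ref{lemma:error_decomp3} for Lemma \ref{lemma:error_decomp1}: this drops the $\sup_{h}\{E_{Q_n}[h]-E_Q[h]\}$ summand from $S$, halving both the $X_i$ bounded difference (to $(\beta-\alpha)/n$) and the $Q$-side expectation (to $2\mathcal{R}_{\widetilde{\Gamma},Q,n}$), and producing exactly the advertised variance denominator $(\beta-\alpha)^2/(2n)+2\Delta_{f,m}^2/m$. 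The hardest step is the tight one-coordinate perturbation bound for $\Lambda_f$, which has no IPM analogue: a naive Lipschitz estimate through Lemma \ref{lemma:Lambda_Lipschitz_bound} would give the wrong scaling in $m$ and, in particular, would fail to reduce to the correct IPM limit noted in Remarks \ref{remark:linear_limit1} and \ref{remark:linear_limit2}.
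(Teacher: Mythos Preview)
Your proposal is correct and follows essentially the same approach as the paper: reduce via the error decomposition lemmas to a sum $S$ of statistical suprema, restrict to countable index sets for measurability, bound $\mathbb{E}[S]$ with the ULLN and Lemmas \ref{lemma:Lambda_rademacher_bound1}--\ref{lemma:Lambda_rademacher_bound2}, obtain the bounded-difference constants via Lemma \ref{lemma:Lambda_perturbation_bound} for the $Z$-coordinates, and conclude by McDiarmid. One minor quibble: the naive Lipschitz route through Corollary \ref{corollary:Lambda_f_Lipschitz} would in fact still give $O(1/m)$ scaling (and even the correct IPM limit, since $(f^*)^\prime_+\equiv 1$ there), just with the looser constant $(f^*)^\prime_+(z_0+\beta-\alpha)(\beta-\alpha)$ in place of the exact $\Delta_{f,m}$; Lemma \ref{lemma:Lambda_perturbation_bound} is what yields the sharp constant appearing in the theorem statement.
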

\begin{remark}
 See Table \ref{table:f_fstar}  for several important examples of nonlinear $f$'s to which this theorem applies. 
\end{remark}
\begin{proof}
First note that Lemma \ref{lemma:Lambda_Lipschitz_bound} and  Corollary \ref{corollary:Lambda_f_Lipschitz} together with the dominated convergence theorem imply $\theta\mapsto \Lambda_f^{P_Z}[h\circ\Phi_\theta]$ is are continuous and also that  $h_j\to h$ pointwise implies  $\Lambda_f^{P_Z}[h_j\circ\Phi_\theta]\to \Lambda_f^{P_Z}[h\circ\Phi_\theta]$, $E_{Q}[h_j]\to E_Q[h]$, and similarly for the empirical variants.  Therefore, letting $\Theta_0$ denote a countable dense subset of $\Theta$, the suprema in Lemma \ref{lemma:error_decomp1}   can be restricted to the countable subsets $\widetilde{\Gamma}_0$ and $\Theta_0$, giving the $\mathbb{P}$-a.s. bound
\begin{align}\label{eq:err_bnd_tmp1}
&D_f^\Gamma(Q\| P_{\theta_{n,m}^*})-\inf_{\theta\in\Theta} D_f^\Gamma(Q\|P_\theta)\\
\leq& \sup_{h\in\widetilde{\Gamma}_0,\theta\in\Theta_0}\left\{\Lambda_f^{P_{Z,m}}[h\circ\Phi_\theta]-\Lambda_f^{P_{Z}}[h\circ\Phi_\theta])\right\}
+\sup_{h\in\widetilde{\Gamma}_0,\theta\in\Theta_0}\left\{\Lambda_f^{P_{Z}}[h\circ\Phi_\theta]-\Lambda_f^{P_{Z,m}}[h\circ\Phi_\theta]\right\}\notag\\
&+\sup_{h\in\widetilde{\Gamma}_0}\left\{E_Q[h]-E_{Q_n}[h]\right\}+\sup_{h\in\widetilde{\Gamma}_0}\left\{E_{Q_n}[h]-E_Q[h]\right\}\notag\\
&+\epsilon_{approx}^{\Gamma,\widetilde{\Gamma}}+\epsilon^{opt}_{n,m} \,.\notag
\end{align}
Now we will apply McDiarmid's inequality to the right-hand side. The map $H:\mathcal{X}^n\times\mathcal{Z}^m\to\mathbb{R}$ defined by
\begin{align}\label{eq:H_def1} 
H(x,z)=& \sup_{h\in\widetilde{\Gamma}_0,\theta\in\Theta_0}\left\{\Lambda_f^{P_{Z,m}}[h\circ\Phi_\theta]-\Lambda_f^{P_{Z}}[h\circ\Phi_\theta])\right\}
+\sup_{h\in\widetilde{\Gamma}_0,\theta\in\Theta_0}\left\{\Lambda_f^{P_{Z}}[h\circ\Phi_\theta]-\Lambda_f^{P_{Z,m}}[h\circ\Phi_\theta]\right\}\\
&+\sup_{h\in\widetilde{\Gamma}_0}\left\{E_Q[h]-E_{Q_n}[h]\right\}+\sup_{h\in\widetilde{\Gamma}_0}\left\{E_{Q_n}[h]-E_Q[h]\right\}\notag
\end{align}
is measurable and if $x,\tilde{x}\in\mathcal{X}^n$ differ only in the $j$'th component then
\begin{align}
|H(x,z)-H(\tilde{x},z)|\leq &2 \sup_{h\in\widetilde{\Gamma}_0}\left\{ \left|\frac{1}{n}\sum_{i=1}^n h(x_i)-\frac{1}{n}\sum_{i=1}^n h(\tilde{x}_i)\right|\right\}\\
\leq& \frac{2}{n}\sup_{h\in\widetilde{\Gamma}_0}\left\{|h(x_j)-h(x_j^\prime)|\right\}\leq \frac{2}{n}(\beta-\alpha)\,,\notag
\end{align}
while if $z,\tilde{z}\in\mathcal{Z}^m$ differ only in the $j$'th component then Lemma \ref{lemma:Lambda_f_empirical} and the perturbation bound from Lemma \ref{lemma:Lambda_perturbation_bound} imply
\begin{align}
|H(x,z)-H(x,\tilde{z})|\leq &2\sup_{h\in\widetilde{\Gamma}_0,\theta\in\Theta_0}\left\{|\Lambda_f\circ (h\circ\Phi_\theta)_m(z)-\Lambda_f\circ (h\circ\Phi_\theta)_m(\tilde{z})|\right\}\leq\frac{2}{m} \Delta_{f,m}\,.
\end{align}
  Therefore we can apply McDiarmid's inequality to $H$, see, e.g.,  Theorem D.8 in \cite{mohri2018foundations}, to obtain
\begin{align}\label{eq:McDiarmid_tmp1}
\mathbb{P}\left(H(X,Z)\geq\epsilon+\mathbb{E}[H(X,Z)]\right)\leq\exp\left(-\frac{\epsilon^2}{\frac{2}{n}(\beta-\alpha)^2+\frac{2}{m} \Delta_{f,m}^2}\right)\,.
\end{align}
In terms of $H$, the error bound \eqref{eq:err_bnd_tmp1} becomes the a.s. bound 
\begin{align}\label{eq:err_bnd_H_tmp1}
D_f^\Gamma(Q\| P_{\theta_{n,m}^*})-\inf_{\theta\in\Theta} D_f^\Gamma(Q\|P_\theta)\leq H(X,Z)+\epsilon_{approx}^{\Gamma,\widetilde{\Gamma}}+\epsilon^{opt}_{n,m}\,.
\end{align}
Using the ULLN for means, see Theorem \ref{thm:ULLN}, along with the new ULLN for the generalized cumulant generating function in Lemmas \ref{lemma:Lambda_rademacher_bound1} and \ref{lemma:Lambda_rademacher_bound2} we obtain
\begin{align}\label{eq:mean_bound}
\mathbb{E}[H(X,Z)]\leq 4\mathcal{R}_{\widetilde{\Gamma},Q,n}+4\mathcal{K}_{f,\widetilde{\Gamma}\circ\Phi,P_Z,m}\,.
\end{align}
Combining \eqref{eq:McDiarmid_tmp1}, \eqref{eq:err_bnd_H_tmp1}, and \eqref{eq:mean_bound} we arrive at the claimed result \eqref{eq:conc_ineq1}.

If we also assume   $\inf_{\theta\in\Theta}D_f^{\widetilde{\Gamma}}(\mu_n\|P_\theta)=0$ for all possible empirical distributions $\mu_n$ then  we can write the error decomposition \ref{lemma:error_decomp3} as 
\begin{align}\label{eq:error_bound_tmp}
D_f^\Gamma(Q\|P_{\theta^*_{n,m}})\leq &{H}(X,Z)+\epsilon_{approx}^{\Gamma,\widetilde{\Gamma}}+\epsilon^{opt}_{n,m}
\end{align}
$\mathbb{P}$-a.s., where we now define
\begin{align}
{H}(X,Z)\coloneqq&\sup_{h\in\widetilde{\Gamma}_0,\theta\in\Theta_0}\left\{ \Lambda_f^{P_{Z,m}}[h\circ\Phi_\theta]-\Lambda_f^{P_Z}[h\circ\Phi_\theta]\right\}+\sup_{h\in\widetilde{\Gamma}_0,\theta\in\Theta_0}\left\{\Lambda_f^{P_Z}[h\circ\Phi_\theta]-\Lambda_f^{P_{Z,m}}[h\circ\Phi_\theta]\right\}\\
&+\sup_{h\in\widetilde{\Gamma}_0}\left\{E_Q[h]-E_{Q_n}[h]\right\}\,.\notag
\end{align}
Similar to the above, when $x$ and $\tilde{x}$ differ in only a single component we can bound
\begin{align}\label{eq:H_Delta3}
|{H}(x,z)-{H}(\tilde{x},z)|\leq \frac{1}{n}(\beta-\alpha)
\end{align}
and when $z$ and $\tilde{z}$ differ in only a single component we can bound
\begin{align}\label{eq:H_Delta4}
|{H}(x,z)-{H}(x,\tilde{z})|\leq \frac{2}{m} \Delta_{f,m}\,.
\end{align}
The mean of ${H}(X,Z)$ can be bounded via  Theorem \ref{thm:ULLN} and Lemmas \ref{lemma:Lambda_rademacher_bound1} and \ref{lemma:Lambda_rademacher_bound2}:
\begin{align}\label{eq:E_H_bound2}
\mathbb{E}[{H}(X,Z)]\leq 2\mathcal{R}_{\widetilde{\Gamma},Q,n}+4\mathcal{K}_{f,\widetilde{\Gamma}\circ\Phi,P_Z,m}\,.
\end{align}
Combining \eqref{eq:error_bound_tmp}, and \eqref{eq:H_Delta3} - \eqref{eq:E_H_bound2} with McDiarmid's inequality completes the proof of \eqref{eq:conc_ineq2}.
\end{proof}
We also note that the methods employed above can also be used to obtain concentration inequalities for the estimation of $D_f^\Gamma(Q\|P)$ from samples; see Appendix \ref{sec:estimation} for details. Due to the asymmetry of $D_f^\Gamma$ in its arguments, it also desirable to obtain concentration inequalities for the reverse $(f,\Gamma)$-GANs, i.e., with the role of the data source and generator reversed. The analysis in this case is very similar and  the corresponding results can be found in Appendix \ref{app:reverse_GANs}. 

{ By combining Remarks \ref{remark:linear_limit1} and \ref{remark:linear_limit2} one finds that the result of Theorem \ref{thm:f_Gamma_GAN1} reduces to the IPM-GAN case when $f^*$ is linear (on a sufficiently large interval). Moreover, the terms appearing in the bound continuously approach their IPM counterparts as $f^*(z)$ approaches $z$ in the appropriate sense, as outlined in the aforementioned remarks.  Moreover, in practice, while $n$ is often inherently constrained by the availability of data, there is no inherent constraint on $m$, as  $P_Z$ is specifically chosen so that it is easy to sample from.  Therefore one can choose to work with  $m\gg n$, in which case the terms involving $\frac{1}{m}\Delta_{f,m}^2$ on the right-hand sides of \eqref{eq:conc_ineq1} and \eqref{eq:conc_ineq2} are negligible. Similarly, under appropriate assumptions, $\mathcal{K}_{f,\widetilde{\Gamma}\circ\Phi,P_Z,m}$ will be negligible compared to $R_{\widetilde{\Gamma},Q,n}$; see Theorem \ref{thm:Rademacher_bound_unbounded_support}. Therefore, apart from the approximation and optimization errors, the error terms in both \eqref{eq:conc_ineq1} and \eqref{eq:conc_ineq2}  will be nearly the same as those in the IPM case in the regime $m\gg n$. The same is not true if one reverses the order of $Q$ and $P_\theta$ in the arguments of the divergence, as in Theorem \ref{thm:f_Gamma_GAN2}.} 

\begin{remark}
As stated, the above results assume a space of discriminators, $\Gamma$, that satisfies a uniform bound of the form $\alpha\leq h\leq \beta$ for all $h\in\Gamma$.  However, we note that the $(f,\Gamma)$-divergence objective functional in \eqref{eq:D_f_Gamma_def} is  invariant under constant shifts, due to the identity $\Lambda_f^P[h+c]=\Lambda_f^P[h]+c$ for all $c\in\mathbb{R}$. Therefore, if we have discriminators $\Psi$ such that the ranges of $\psi\in\Psi$  have uniformly bounded diameter, i.e., there exists $\beta$ such that $\sup_{x,\tilde x}|\psi(x)-\psi(\tilde x)|\leq \beta$  for all $\psi\in\Psi$, then we can write
\begin{align}\label{eq:D_f_Gamma_shift}
D_f^{\Psi}(Q\|P)=D_f^{\Gamma}(Q\|P)\,,
\end{align}
where $\Gamma\coloneqq\{\psi-\inf \psi:\psi\in\Psi\}$  satisfies $0\leq h\leq \beta$ for all $h\in \Gamma$.  Thus our theory can be applied to $D_f^\Gamma$ and hence also to $D_f^\Psi$ via \eqref{eq:D_f_Gamma_shift}. In this way, our theorems can be applied to $(f,\Gamma)$-GANs with discriminators whose ranges have uniformly bounded diameter, e.g., $1$-Lipschitz functions on a compact domain.

\end{remark}

\begin{remark}[$L^q$-Bounds on the $(f,\Gamma)$-GAN error]\label{remark:mean_bounds}
Bounds on the mean of the $(f,\Gamma)$-GAN error are implicit in the above derivations. They are obtained by combining the error decompositions in Section \ref{sec:err_decomp} with the ULLN results in Lemma \ref{lemma:Lambda_rademacher_bound1}, Lemma \ref{lemma:Lambda_rademacher_bound2}, and Theorem \ref{thm:ULLN}. In addition, a standard technique can be used to turn the concentration inequalities into $L^q$ error bounds for any $q>1$ as follows: Let $Y$ be a non-negative random variable, $a\in[0,\infty)$, and $K:(0,\infty)\to[0,\infty)$ be measurable such that 
\begin{align}\label{eq:prob_bound_K}
\mathbb{P}(Y\geq\epsilon+a)\leq K(\epsilon)
\end{align}
for all $\epsilon>0$. Then for $q>0$ we have
\begin{align}
\mathbb{E}[Y^q]\leq {a^q}+ q\int_{0}^{\infty}(a+\epsilon)^{q-1} K(\epsilon)d\epsilon\,.
\end{align}
This follows from rewriting
\begin{align}
\mathbb{E}|Y^q]=&\int_0^{\infty} \mathbb{P}( Y^q\geq r)dr\,,
\end{align}
 breaking the domain of integration into $[0,a^q]$ and $[a^q,\infty)$, then changing variables in the second term and using the bound \eqref{eq:prob_bound_K}.
\end{remark}

\subsection{Rademacher Complexity Bounds for Distributions with Unbounded Support}\label{sec:Rademacher_bound_unbounded_support}
The  $(f,\Gamma)$-GAN concentration inequalities derived above do not explicitly make any assumptions regarding  the distributions $Q$ or $P_Z$.   However, for the bounds to be meaningful one requires  additional assumptions to ensure the Rademacher complexities are finite and approach zero as the number of samples increases to infinity.  If the distributions have compact support, the decay of the Rademacher complexity for typical discriminator and generator classes (e.g., neural networks)  follows from standard covering number  arguments  without any further assumptions on the distributions. However, the case of unbounded support is more subtle. As the  the noise source $P_Z$ is often chosen to be Gaussian in practice,  the ability to handle distributions with unbounded support is of   interest even when the data distribution naturally has compact support.   In this section we provide an approach to this problem that only assumes the distributions have finite second moments, as opposed to the much stronger assumptions made in previous approaches, e.g., the sub-Gaussian and sub-exponential  assumptions required in  Proposition 20 of \cite{biau2021some} and Theorem 22 of \cite{huang2022error} respectively.
\begin{assumption}\label{assump:unbounded support}
Suppose  $(\Theta,d_\Theta)$ is a metric space and we have a collection of measurable real-valued functions on $\mathcal{Y}$, $\Psi=\{\psi_\theta:\theta\in\Theta\}$ that satisfy the following properties.
\begin{enumerate}
\item For all $y\in\mathcal{Y}$  there exists $L(y)>0$ such that $\theta\mapsto \psi_\theta(y)$ is $L(y)$-Lipschitz.  
\item  $P$ is a probability measure on $\mathcal{Y}$ and $L\in L^2(P)$.
\end{enumerate}
\begin{remark}
In this section we are thinking of the discriminator as being a parameterized family of functions (e.g., a NN) and $\psi_\theta$, $\theta\in\Theta$ represents either the    discriminator or the composition of discriminator and generator, depending on which Rademacher complexity term in \eqref{eq:conc_ineq1} is being bounded.
\end{remark}
\end{assumption}
\begin{theorem}\label{thm:Rademacher_bound_unbounded_support}
Under Assumption \ref{assump:unbounded support}, for all $n\in\mathbb{Z}^+$ we have the Rademacher complexity bound
\begin{align}\label{eq:rademacher_bound_L}
{\mathcal{R}}_{\Psi,P,n}\leq 12n^{-1/2}E_{P}[ L^2]^{1/2}\int_{0}^{ D_\Theta}\sqrt{\log N(\epsilon,\Theta,d_{\Theta})} d\epsilon\,.
\end{align}
 where  $D_\Theta\coloneqq\sup_{\theta_1,\theta_2\in\Theta}d_\Theta(\theta_1,\theta_2)$ is the diameter of $\Theta$ and $N(\epsilon,\Theta,d_\Theta)$ denotes the covering number of $\Theta$ by $\epsilon$-balls in the metric $d_{\Theta}$.

In particular, if $\Theta$ is the unit ball in $\mathbb{R}^k$ under some norm  then the entropy integral is finite and we have
\begin{align}\label{eq:expected_R_bound_L}
{\mathcal{R}}_{\Psi,P,n}\leq 48(k/n)^{1/2}E_{P}[ L^2]^{1/2}  =O((k/n)^{1/2})\,.
\end{align}
\end{theorem}
\begin{proof}
 Using  Dudley's entropy integral, see, e.g., Corollary 5.25  in \cite{vanHandel} or Theorem 5.22 in \cite{wainwright2019high},  we can bound the empirical Rademacher complexity at a sample $y\in\mathcal{Y}^n$ by 
\begin{align}
\widehat{\mathcal{R}}_{\Psi,n}(y)\leq& 12n^{-1/2}\int_{0}^{ D_n(y)}\sqrt{\log N(\epsilon,\Psi_n(y),\|\cdot\|_{L^2(n)})} d\epsilon\,,
\end{align} 
where $N(\epsilon,\Psi_n(y),\|\cdot\|_{L^2(n)})$ is the $\epsilon$-covering number of $\Psi_n(y)\coloneqq \{(\psi(y_1),...,\psi(y_n)):\psi\in\Psi\}$ under the norm  $\|t\|_{L^2(n)}\coloneqq\sqrt{\frac{1}{n}\sum_{i=1}^nt_i^2}$ and $D_n(y)$ is the diameter of $\Psi_n(y)$ under this norm.

The map $\Theta\to \Psi_n(y)$, $\theta\mapsto (\psi(\theta,y_1),...,\psi(\theta,y_n))$  is  onto and is $L_n(y)\coloneqq \left(\frac{1}{n}\sum_{i=1}^n L(y_i)^2\right)^{1/2}$-Lipschitz with respect to $(d_{\Theta},\|\cdot\|_{L^2(n)})$, as demonstrated by the calculation 
\begin{align}
\|(\psi(\theta_1,y_1),...,\psi(\theta_1,y_n))-(\psi(\theta_2,y_1),...,\psi(\theta_2,y_n))\|_{L^2(n)}^2=&\frac{1}{n}\sum_{i=1}^n(\psi(\theta_1,y_i)-\psi(\theta_2,y_i))^2\\
\leq&\frac{1}{n}\sum_{i=1}^n L(y_i)^2d_\Theta(\theta_1,\theta_2)^2\,.\notag
\end{align}
These properties imply the following relation between covering numbers,
\begin{align}
N(\epsilon,\Psi_n(y),\|\cdot\|_{L^2(n)})\leq N(\epsilon/L_n(y),\Theta,d_{\Theta})\,,
\end{align}
as well as the diameter bounds $D_{n}(y)\leq L_n(y) D_\Theta$, where  $D_\Theta\coloneqq\sup_{\theta_1,\theta_2\in\Theta}d_\Theta(\theta_1,\theta_2)$ is the diameter of $\Theta$.  Combining these pieces and changing variables in the integral we arrive at
\begin{align}
\widehat{\mathcal{R}}_{\Psi,n}(y)\leq&  12n^{-1/2}\int_{0}^{ L_n(y)D_\Theta}\sqrt{\log N(\epsilon/L_n(y),\Theta,d_{\Theta})} d\epsilon\\
=& 12n^{-1/2}L_n(y)\int_{0}^{ D_\Theta}\sqrt{\log N(\epsilon,\Theta,d_{\Theta})} d\epsilon\notag\,.
\end{align} 
Taking the expectation of both sides, we obtain at the Rademacher complexity bound
\begin{align}
{\mathcal{R}}_{\Psi,P,n}\leq 12n^{-1/2}E_{P^n}[L_n]\int_{0}^{ D_\Theta}\sqrt{\log N(\epsilon,\Theta,d_{\Theta})} d\epsilon\notag\,,
\end{align}
where
\begin{align}
E_{P^n}[L_n]\leq \left( E_{P^n}\left[ \frac{1}{n}\sum_{i=1}^n L(y_i)^2\right]\right)^{1/2}= E_{P}[ L^2]^{1/2}\,.
\end{align}

If $\Theta$ is the unit ball in $\mathbb{R}^k$ with respect to the norm $\|\cdot\|_{\Theta}$ then $D_{\Theta}\leq 2$ and we have the covering number bound  $N(\epsilon,\Theta,\|\cdot\|_{\Theta})\leq (1+2/\epsilon)^k$, see, e.g., Example 5.8 in \cite{wainwright2019high}. Therefore
\begin{align}
&\int_{0}^{ D_\Theta}\sqrt{\log N(\epsilon,\Theta,\|\cdot\|_{\Theta})} d\epsilon\leq k^{1/2}\int_{0}^{ 2}\sqrt{\log (1+2/\epsilon)} d\epsilon\leq \sqrt{2} k^{1/2}\int_{0}^{ 2}\epsilon^{-1/2} d\epsilon=4 k^{1/2}\,.
\end{align}
Thus we arrive at \eqref{eq:expected_R_bound_L}.
\end{proof}

The  Lipschitz property from Assumption \ref{assump:unbounded support} with $L(y)=a+b\|y\|$, $y\in\mathbb{R}^d$  holds for many neural network architectures,  $\psi(\theta,y)$, parameterized by $\theta\in\Theta\subset\mathbb{R}^k$.   In such cases, Theorem \ref{thm:Rademacher_bound_unbounded_support} implies $O(n^{-1/2})$ (resp. $O(m^{-1/2})$) Rademacher complexity  bounds for GANs whenever the discriminator (resp. and generator) spaces are appropriate neural networks, assuming that $Q$ and $P_Z$ have finite second moments respectively. When combined with Theorem \ref{thm:f_Gamma_GAN1}, this implies  statistical consistency of the corresponding $(f,\Gamma)$-GANs.  We emphasize that existence of the second moment is a much weaker assumption than the sub-exponential or sub-Gaussian requirements of previous approaches, and thus our Theorem \ref{thm:Rademacher_bound_unbounded_support} improves on the state of the art bounds even in the IPM-GAN case, i.e., Theorem \ref{thm:f_Gamma_GAN1} with $f^*(z)=z$.    We also note that in cases where $P\sim Y+Z$ where $Y$ has compact support and $Z$ is a (possibly unbounded) perturbation with mean zero with $O(\delta)$ variance then Theorem \ref{thm:Rademacher_bound_unbounded_support} yields a Rademacher complexity bound that differs from the distribution-independent bound in the  $Z=0$ case by a $O((\delta k/n)^{1/2})$ term; thus, perturbing the data with a unbounded  noise that has small variance results in a negligible difference in the statistical guarantees for all $n$.

\section{Conclusion}
We have derived statistical error bounds for $(f,\Gamma)$-GANs, a large class of GANs with nonlinear objective functionals. These GANs are based on the $(f,\Gamma)$-divergences, which  generalize and interpolate between integral probability metrics (IPMs, e.g., 1-Wasserstein) and $f$-divergences (e.g., JS, KL, $\alpha$-divergences) and have been show to outperform both IPM and $f$-divergence-based methods in a number of applications.  This paper extends earlier techniques for proving consistency of GANs with linear objective functionals (IPM-GANs) to the nonlinear objective setting.  The key technical results are the tight perturbation bound in Lemma \ref{lemma:Lambda_perturbation_bound}, uniform law of large numbers bounds for a class of nonlinear functionals in Lemmas \ref{lemma:Lambda_rademacher_bound1} and \ref{lemma:Lambda_rademacher_bound2}, and the $(f,\Gamma)$-GAN error decompositions in Section \ref{sec:err_decomp}.  These results allow for the derivation of finite-sample concentration inequalities for $(f,\Gamma)$-GANs in Theorem \ref{thm:f_Gamma_GAN1}.   We also presented a new  Rademacher complexity bound in Section \ref{sec:Rademacher_bound_unbounded_support} that implies the statistical consistency of $(f,\Gamma)$-GANs for  distributions with unbounded support that have finite second moment.  In particular, our results can be applied to certain heavy-tailed distributions, where the MGF does not exist, thus  providing new insight even in the previously studied IPM-GAN case.   Moreover, these novel  Rademacher complexity bounds are of independent interest for analyzing other statistical learning tasks.

\bibliographystyle{tmlr}
\bibliography{Concentration_ineq_nonlinear_GANs.bbl}

\appendix 
\section{Properties of $f$ and $f^*$}\label{app:f_f_star_properties}
In this appendix we collect a number of important properties of the convex function $f$ and its Legendre transform $f^*$ that are needed in the study of $(f,\Gamma)$-GANs.  For $a,b$ satisfying $-\infty\leq a<1<b\leq\infty$ we define $\mathcal{F}_1(a,b)$ to be the set of convex $f:(a,b)\to\mathbb{R}$ with $f(1)=0$.  For $f\in\mathcal{F}_1(a,b)$, standard convex functions theory, see, e.g., \cite{rockafellar1970convex} and Appendix A in \cite{birrell2022f},  implies that $f$ and its Legendre transform $f^*:\mathbb{R}\to(-\infty,\infty]$, $f^*(z)=\sup_{t\in(a,b)}\{zt-f(t)\}$, have the following properties:
\begin{enumerate}
\item $f^*(z)\geq z$ for all $z\in\mathbb{R}$ (this uses $f(1)=0$).
\item  If $a\geq 0$ then $f^*$ is non-decreasing.
\item $(f^*)^*=f$.
\item $f^*$ is convex and LSC.
\item $f^*$ is continuous on $\overline{\{f^*<\infty\}}$, where $\overline{A}$ denotes the closure of a set $A$.
\item The right derivative $(f^*)^\prime_+$ exists and is finite on $\{f^*<\infty\}^o$, where $A^o$ denotes the interior of a set $A$. Similarly, $f^\prime_+$ exists and is finite on $(a,b)$.
\item $(f^*)^\prime_+$ is non-decreasing and absolutely continuous on compact intervals; the latter implies that $f^*$ satisfies the fundamental theorem of calculus, see, e.g., Theorem 3.35  and exercise 42  in \cite{folland2013real}.
\end{enumerate}

Of particular relevance will be the value $z_0\coloneqq f_+^\prime(1)$, which exists and is finite due to the assumption that $1\in(a,b)$.  The importance of  $z_0$ to the $(f,\Gamma)$-divergences was  observed in \cite{birrell2022f}, where the following properties were proven (see Lemma A.9):
\begin{lemma}\label{lemma:z_0_properties}
Define $z_0\coloneqq f_+^\prime(1)$. 
\begin{enumerate}
\item $f^*(z_0)=z_0$
\item If $f$ is strictly convex on a neighborhood of $1$ and  $z_0\in\{f^*<\infty\}^o$  then $(f^*)^\prime_+(z_0)=1$.
\end{enumerate}
\end{lemma}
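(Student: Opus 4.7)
For part (1), the plan is to exploit the subgradient inequality for the convex function $f$ at $t=1$. Since $z_0 = f'_+(1)$ is a subgradient of $f$ at $1$, I would use $f(t) \geq f(1) + z_0(t-1) = z_0(t-1)$ for all $t \in (a,b)$. Rearranging gives $z_0 t - f(t) \leq z_0$ on $(a,b)$, hence $f^*(z_0) = \sup_{t \in (a,b)}\{z_0 t - f(t)\} \leq z_0$. The reverse inequality is immediate by plugging in $t = 1$ (which lies in $(a,b)$ by assumption) and using $f(1) = 0$: $f^*(z_0) \geq z_0 \cdot 1 - f(1) = z_0$. Combining the two inequalities yields $f^*(z_0) = z_0$.

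For part (2), the plan is to use the duality between the subdifferential of $f$ and the subdifferential of $f^*$. The key standard fact is that, for $z \in \{f^* < \infty\}^o$, one has $t \in \partial f^*(z)$ if and only if $z \in \partial f(t)$, and that $\partial f^*(z) = [(f^*)'_-(z), (f^*)'_+(z)]$. Since $z_0 = f'_+(1) \in \partial f(1)$, we immediately get $1 \in \partial f^*(z_0)$, so $(f^*)'_-(z_0) \leq 1 \leq (f^*)'_+(z_0)$. To upgrade this to equality, I would use strict convexity of $f$ in a neighborhood of $1$ to show that $\partial f^*(z_0) = \{1\}$: if $t \in \partial f^*(z_0)$ with $t \neq 1$, then $z_0 \in \partial f(t)$, and combined with $z_0 \in \partial f(1)$ this would force $f$ to be affine on the segment between $t$ and $1$, contradicting strict convexity on a neighborhood of $1$ (for $t$ close to $1$ this is immediate; for $t$ far from $1$ one can use that subdifferentials of a convex function are monotone and then take a nearby witness in the strictly convex neighborhood). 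Hence $(f^*)'_-(z_0) = (f^*)'_+(z_0) = 1$.

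The main obstacle I anticipate is handling the case where a putative second element of $\partial f^*(z_0)$ is not close to $1$; the strict convexity hypothesis is only local. The clean way around this is the following: by monotonicity of $f'_+$, if $t > 1$ lies in $(a,b)$ then $f'_+(t) \geq f'_+(1) = z_0$; strict convexity on a neighborhood $(1-\delta,1+\delta)$ of $1$ implies $f'_+$ is strictly increasing there, so $f'_+(t') > z_0$ for $t' \in (1, 1+\delta)$, and then $f'_+(t) \geq f'_+(t') > z_0$ for all $t \geq t'$. The symmetric argument on the left gives $f'_-(t) < z_0$ for $t < 1$. Consequently no $t \neq 1$ has $z_0 \in [f'_-(t), f'_+(t)] = \partial f(t)$, so $\partial f^*(z_0) = \{1\}$ and part (2) follows. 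All of this relies only on the properties of $f$, $f^*$, and $(f^*)'_+$ recalled in Appendix~\ref{app:f_f_star_properties}.
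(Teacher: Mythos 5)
The paper does not prove this lemma itself; it cites Lemma A.9 of \cite{birrell2022f}, so there is no in-paper argument to compare against. On its own merits, your part (1) is correct and complete: the subgradient inequality $f(t)\ge z_0(t-1)$ gives $f^*(z_0)\le z_0$, and evaluating the supremum defining $f^*$ at $t=1$ gives the reverse.

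Part (2) follows the natural route (Fenchel duality $t\in\partial f^*(z)\Leftrightarrow z\in\partial f(t)$ on $\{f^*<\infty\}^o$, then showing $\partial f^*(z_0)=\{1\}$), but the inequality chain you use to rule out $t\neq 1$ is the wrong one. For $t>1$ you derive $f'_+(t)>z_0$, and symmetrically $f'_-(t)<z_0$ for $t<1$, and then assert that consequently $z_0\notin[f'_-(t),f'_+(t)]$. This does not follow: for $t>1$, the inequality $f'_+(t)>z_0$ is compatible with $f'_-(t)=z_0$, in which case $z_0$ still lies in $\partial f(t)$; likewise on the left, $f'_-(t)<z_0$ does not preclude $f'_+(t)=z_0$. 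What you actually need is $f'_-(t)>z_0$ for $t>1$ and $f'_+(t)<z_0$ for $t<1$. The fix is straightforward with the same ingredients: given $t>1$, pick $t'\in(1,\min\{t,1+\delta\})$; strict convexity on $(1-\delta,1+\delta)$ gives $f'_+(t')>f'_+(1)=z_0$, and since $t>t'$, convexity gives $f'_-(t)\ge f'_+(t')>z_0$ (symmetrically on the left). With this correction, $\partial f^*(z_0)$ is a compact interval containing $1$ and meeting $(a,b)$ only at $1$, hence equals $\{1\}$, so $(f^*)'_+(z_0)=(f^*)'_-(z_0)=1$ as claimed.
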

These properties will be key to our analysis and so we  will generally assume that $f$ is strictly convex on a neighborhood of $1$
and  $z_0\in\{f^*<\infty\}^o$.

\section{Properties of the $(f,\Gamma)$-Divergences}\label{app:f_Gamma_properties}
In this appendix we collect several important properties of the $(f,\Gamma)$-divergences.  For $\mathcal{X}$ a measurable space we let $\mathcal{M}_b(\mathcal{X})$ denote the space of bounded measurable real-valued functions on $\mathcal{X}$ and $\mathcal{P}(\mathcal{X})$ be the set of probability measures on $\mathcal{X}$.  The following results are  taken from Theorem 2.8 of \cite{birrell2022f}.
\begin{theorem}\label{thm:general_ub} 
Let $f\in\mathcal{F}_1(a,b)$, $\Gamma\subset\mathcal{M}_b(\mathcal{X})$ be nonempty, and $Q,P\in\mathcal{P}(\mathcal{X})$. 
\begin{enumerate}
    \item 
\begin{align}\label{eq:inf_conv_ineq}
    D_f^\Gamma(Q\|P)\leq \inf_{\eta\in\mathcal{P}(\mathcal{X})}\{D_f(\eta\|P)+d_\Gamma(Q,\eta)\}\,.
\end{align}
In particular, $D_f^\Gamma(Q\|P)\leq \min\{D_f(Q\|P),d_\Gamma(Q,P)\}$.
\item The map $(Q,P)\in\mathcal{P}(S)\times\mathcal{P}(S)\mapsto D_f^\Gamma(Q\|P)$ is convex. 
\item If there exists $c_0\in \Gamma\cap\mathbb{R}$ then $D_f^\Gamma(Q\|P)\geq 0$.
\item Suppose $f$ and $\Gamma$  satisfy the following:
\begin{enumerate}
\item There exist a nonempty set $\Psi\subset\Gamma$ with the following properties:
\begin{enumerate}
\item $\Psi$ is $\mathcal{P}(\mathcal{X})$-determining, i.e., for all $Q,P\in\mathcal{P}(\mathcal{X})$, $E_Q[\psi]=E_P[ \psi]$ for all $\psi\in \Psi$ implies $Q=P$. 
\item For all $\psi\in\Psi$  there exists $c_0\in\mathbb{R}$, $\epsilon_0>0$ such that $c_0+\epsilon \psi\in\Gamma$ for all $|\epsilon|<\epsilon_0$.
\end{enumerate}
\item $f$ is strictly convex on a neighborhood of $1$.
\item $f^*$ is finite and $C^1$ on a neighborhood of $f_+^\prime(1)$.
\end{enumerate}
Then  $D_f^{\Gamma}$ has the divergence property, i.e., $D_f^\Gamma(Q\|P)\geq 0$ with equality if and only if $Q=P$.
\end{enumerate}
\end{theorem}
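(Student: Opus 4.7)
I would treat the four parts of Theorem~\ref{thm:general_ub} in order, leveraging the variational structure of $D_f^\Gamma$.

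For part~1, the key tool is the classical Gibbs-type variational identity for $f$-divergences, $D_f(\eta\|P) = \sup_{h\in\mathcal{M}_b(\mathcal{X})}\{E_\eta[h] - \Lambda_f^P[h]\}$, which is established in the references cited in the introduction (\cite{Broniatowski,BenTal2007,Nguyen_Full_2010,Ruderman}). For any $h \in \Gamma \subset \mathcal{M}_b(\mathcal{X})$ and any $\eta \in \mathcal{P}(\mathcal{X})$, the decomposition $E_Q[h] - \Lambda_f^P[h] = (E_Q[h] - E_\eta[h]) + (E_\eta[h] - \Lambda_f^P[h])$ is bounded by $d_\Gamma(Q,\eta) + D_f(\eta\|P)$; taking $\sup_{h \in \Gamma}$ and then $\inf_\eta$ yields \eqref{eq:inf_conv_ineq}. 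Specializing to $\eta = Q$ and $\eta = P$ (where $D_f(P\|P)=0$ thanks to $f(1)=0$) gives the ``in particular'' bounds.

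For part~2, fix $h \in \Gamma$ and $\nu \in \mathbb{R}$; then $(Q,P) \mapsto E_Q[h] - \nu - E_P[f^*(h-\nu)]$ is affine. Hence $(Q,P)\mapsto E_Q[h] - \Lambda_f^P[h]$, being the supremum over $\nu$ of affine functions, is convex in $(Q,P)$, and $D_f^\Gamma$ is then convex as a supremum (over $h \in \Gamma$) of convex functions. Part~3 reduces to a direct calculation: substituting $h = c_0 \in \Gamma \cap \mathbb{R}$ into \eqref{eq:Lambda_f_P_def} and changing variables $z = c_0 - \nu$ gives $\Lambda_f^P[c_0] = c_0 - \sup_z\{z - f^*(z)\} = c_0 - (f^*)^*(1) = c_0 - f(1) = c_0$, so $E_Q[c_0] - \Lambda_f^P[c_0] = 0$ and thus $D_f^\Gamma(Q\|P) \geq 0$.

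Part~4 is the main content. Non-negativity follows from part~3 applied to $c_0 \in \Gamma$ (taking $\epsilon = 0$ in hypothesis~(a)(ii)). Conversely, suppose $D_f^\Gamma(Q\|P) = 0$ and fix $\psi \in \Psi$ with associated $c_0, \epsilon_0$. A change of variable $\nu \mapsto \nu - c$ in \eqref{eq:Lambda_f_P_def} gives the shift identity $\Lambda_f^P[h+c] = \Lambda_f^P[h] + c$, so for all $|\epsilon| < \epsilon_0$,
\begin{align*}
0 \;\geq\; E_Q[c_0+\epsilon\psi] - \Lambda_f^P[c_0+\epsilon\psi] \;=\; \epsilon\,E_Q[\psi] - \Lambda_f^P[\epsilon\psi] \;=:\; g(\epsilon)\,,
\end{align*}
while $g(0) = 0$ by the computation $\Lambda_f^P[0] = -\sup_z\{z-f^*(z)\} = -f(1) = 0$. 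The heart of the argument is then to show $g'(0) = E_Q[\psi] - E_P[\psi]$. Plugging the candidate $\nu = -z_0$ into \eqref{eq:Lambda_f_P_def} gives the upper bound $\Lambda_f^P[\epsilon\psi] \leq -z_0 + E_P[f^*(z_0 + \epsilon\psi)]$; the $C^1$ hypothesis on $f^*$ near $z_0$, together with $f^*(z_0) = z_0$ and $(f^*)'(z_0) = 1$ from Lemma~\ref{lemma:z_0_properties}, combined with boundedness of $\psi$, yields the uniform first-order expansion $f^*(z_0 + \epsilon\psi) = z_0 + \epsilon\psi + o(\epsilon)$, so $\Lambda_f^P[\epsilon\psi] \leq \epsilon E_P[\psi] + o(\epsilon)$. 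The matching lower bound $\Lambda_f^P[\epsilon\psi] \geq \epsilon E_P[\psi]$ is immediate from $f^*(z) \geq z$. Therefore $g(\epsilon) = \epsilon(E_Q[\psi] - E_P[\psi]) + o(\epsilon)$, and the inequality $g \leq 0 = g(0)$ on a two-sided neighborhood of $0$ forces $E_Q[\psi] = E_P[\psi]$. The $\mathcal{P}(\mathcal{X})$-determining hypothesis on $\Psi$ then gives $Q = P$.

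The main obstacle is justifying the uniform Taylor expansion in the last paragraph: strict convexity of $f$ near $1$ is what ensures $f^*$ is genuinely $C^1$ (not merely differentiable from each side with a jump) near $z_0$, and boundedness of $\psi$ is what keeps $z_0 + \epsilon\psi(x)$ inside this $C^1$ neighborhood uniformly in $x$ for small $\epsilon$. Once this uniform expansion is in hand, the left and right one-sided limits of $g(\epsilon)/\epsilon$ at $0$ both exist and pin down the linear coefficient, and the remaining steps are algebraic manipulations of the variational definitions.
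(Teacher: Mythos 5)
Your proof is correct. A point worth flagging for context: the paper does not actually prove this theorem; it is quoted verbatim from Theorem~2.8 of \cite{birrell2022f} in Appendix~\ref{app:f_Gamma_properties}, so there is no in-paper argument to compare yours against. That said, your route is the natural one and almost certainly mirrors the reference: Fenchel--Young / weak duality for part~1 (you only need the $\leq$ direction of the Gibbs variational formula, namely $E_\eta[h]-\Lambda_f^P[h]\leq D_f(\eta\|P)$ for $h\in\mathcal{M}_b(\mathcal{X})$, which follows from $g\rho\leq f(\rho)+f^*(g)$), supremum of affine functionals of $(Q,P)$ for part~2, the constant-function computation $\Lambda_f^P[c_0]=c_0$ for part~3, and a first-variation argument at $\epsilon=0$ for part~4 using the shift identity $\Lambda_f^P[h+c]=\Lambda_f^P[h]+c$, the lower bound $\Lambda_f^P[h]\geq E_P[h]$ from $f^*(z)\geq z$, and the uniform one-term Taylor expansion of $f^*$ at $z_0$ using $f^*(z_0)=z_0$ and $(f^*)'(z_0)=1$ from Lemma~\ref{lemma:z_0_properties}.

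One small imprecision in your closing discussion: you say strict convexity of $f$ near $1$ is ``what ensures $f^*$ is genuinely $C^1$.'' That is not quite the logical role of the hypotheses. The $C^1$ regularity of $f^*$ near $z_0$ is hypothesis~4(c), assumed directly. What strict convexity near $1$ (together with $z_0\in\{f^*<\infty\}^o$) buys you, via Lemma~\ref{lemma:z_0_properties}, is the precise value $(f^*)'_+(z_0)=1$ rather than merely $(f^*)'_+(z_0)\geq 1$; this is what pins the linear coefficient in the expansion to $E_P[\psi]$ so that the matching lower bound $\Lambda_f^P[\epsilon\psi]\geq\epsilon E_P[\psi]$ squeezes it exactly. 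Your actual chain of deductions in the body of the argument uses the hypotheses correctly; only the explanatory sentence misattributes the dependence.
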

\begin{remark}
Under stronger assumptions one can show that \eqref{eq:inf_conv_ineq} is in fact an equality; see Theorem 2.15 in \cite{birrell2022f}.
\end{remark}
\begin{remark}
Assumptions 4(b) and 4(c) hold, for instance, if $f$ is strictly convex on $(a,b)$ and $f_+^\prime(1)\in\{f^*<\infty\}^o$; see Theorem 26.3 in \cite{rockafellar1970convex}.
\end{remark}

\section{Rademacher Complexity and Uniform Law of Large Numbers}\label{app:ULLN}
In this appendix we recall the definition of Rademacher complexity and its use in proving uniform law of large numbers (ULLN) results. There are two definitions of Rademacher complexity in common use, those being with and without absolute value.  In this work we use the version without absolute value as defined below.
\begin{definition}\label{def:Rademacher_complexity}
Let $\Psi$ be a collection of functions on $\mathcal{Y}$ and $n\in\mathbb{Z}^+$.  The  empirical Rademacher complexity of $\Psi$ at  $y\in\mathcal{Y}^n$ is defined by
\begin{align}\label{eq:emp_rad_def}
\widehat{\mathcal{R}}_{\Psi,n}(y)\coloneqq E_\sigma\left[\sup_{\psi\in\Psi}\left\{\frac{1}{n}\sigma\cdot \psi_n(y)\right\}\right]\,,
\end{align}
where $\sigma_i$, $i=1,...,n$ are independent uniform random variables taking values in $\{-1,1\}$, i.e., Rademacher random variables, and $\psi_n(y)\coloneqq(\psi(y_1),...,\psi(y_n))$.  Given a probability distribution $P$ on $\mathcal{Y}$, the Rademacher complexity of $\Psi$ relative to $P$ is defined by
\begin{align}\label{eq:Rademacher_def}
\mathcal{R}_{\Psi,P,n}\coloneqq E_{P^n}\left[\widehat{\mathcal{R}}_{\Psi,n}\right]
\end{align}
where $P^n$ is the $n$-fold product of $P$, i.e., the $y_i$'s become i.i.d. samples from $P$.  
\end{definition}
\begin{remark}
Note that in \eqref{eq:Rademacher_def}  we are implicitly assuming that the empirical Rademacher complexity is measurable.  When we use \eqref{eq:Rademacher_def} in the main text, this will be guaranteed by other assumptions.
\end{remark}
The definition \eqref{eq:emp_rad_def}  is particularly convenient for our purposes  due to the following version of Talagrand's lemma; see Lemma 5.7 in  \cite{mohri2018foundations}.
\begin{lemma}\label{lemma:Talagrand}
Let $\alpha,\beta\in\mathbb{R}$ and $\Psi$ be a collection of  real-valued functions on $\mathcal{Y}$ such that $\alpha\leq \psi\leq\beta$ for all $\psi\in \Psi$.  If $\phi:[\alpha,\beta]\to\mathbb{R}$ is $L$-Lipschitz then 
\begin{align}
\widehat{\mathcal{R}}_{\phi\circ\Psi,n}(y)\leq L\widehat{\mathcal{R}}_{\Psi,n}(y)
\end{align}
for all $n\in\mathbb{Z}^+$, $y\in\mathcal{Y}^n$, where $\phi\circ\Psi\coloneqq\{\phi\circ \psi:\psi\in\Psi\}$.
\end{lemma}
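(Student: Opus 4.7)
The plan is to carry out the classical Ledoux–Talagrand contraction argument, peeling off one Rademacher variable at a time. Writing out the definition,
\begin{align}
\widehat{\mathcal{R}}_{\phi\circ\Psi,n}(y)=\frac{1}{n}E_\sigma\sup_{\psi\in\Psi}\sum_{i=1}^n\sigma_i\phi(\psi(y_i))\,,
\end{align}
I would introduce the interpolating quantities
\begin{align}
S_k\coloneqq\frac{1}{n}E_\sigma\sup_{\psi\in\Psi}\left\{\sum_{i=1}^k L\sigma_i\psi(y_i)+\sum_{i=k+1}^n\sigma_i\phi(\psi(y_i))\right\}\,,\quad k=0,\ldots,n\,,
\end{align}
so that $S_0=\widehat{\mathcal{R}}_{\phi\circ\Psi,n}(y)$ and $S_n=L\widehat{\mathcal{R}}_{\Psi,n}(y)$, and prove $S_{k-1}\leq S_k$ for every $k$. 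This reduces the lemma to a one-variable contraction step.

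For the one-variable step, I would condition on $\{\sigma_i:i\neq k\}$, write the remaining expectation over $\sigma_k\in\{-1,+1\}$ as an average of two terms, and use the standard identity
\begin{align}
E_{\sigma_k}\sup_{\psi}\left\{U(\psi)+\sigma_k\phi(\psi(y_k))\right\}=\frac{1}{2}\sup_{\psi,\psi'}\left\{U(\psi)+U(\psi')+\phi(\psi(y_k))-\phi(\psi'(y_k))\right\}\,,
\end{align}
where $U(\psi)$ collects the remaining (conditioned) terms. The goal is then to bound this by the analogous expression with $\phi$ replaced by $L\cdot\mathrm{id}$.

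The main obstacle is the sign issue: because $\phi$ need not be monotone, one cannot directly replace $\phi(\psi(y_k))-\phi(\psi'(y_k))$ by $L(\psi(y_k)-\psi'(y_k))$. I would resolve this by a symmetrization trick. At a supremizing pair $(\psi^*,\psi'^*)$, by swapping $\psi\leftrightarrow\psi'$ if necessary (the term $U(\psi)+U(\psi')$ is symmetric), we may assume $\phi(\psi^*(y_k))-\phi(\psi'^*(y_k))\geq 0$. The Lipschitz hypothesis then gives
\begin{align}
\phi(\psi^*(y_k))-\phi(\psi'^*(y_k))\leq L|\psi^*(y_k)-\psi'^*(y_k)|\,,
\end{align}
and $L|\psi^*(y_k)-\psi'^*(y_k)|$ equals either $L(\psi^*(y_k)-\psi'^*(y_k))$ or $L(\psi'^*(y_k)-\psi^*(y_k))$; in either case the corresponding ordered pair is admissible in the sup defining $E_{\sigma_k}\sup_\psi\{U(\psi)+L\sigma_k\psi(y_k)\}$. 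Unfolding the identity in the reverse direction yields $S_{k-1}\leq S_k$.

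Finally I would iterate this inequality from $k=1$ to $k=n$, invoking the tower property to combine the conditional bounds into a bound on the full $\sigma$-expectation, and then divide by $n$ to conclude $\widehat{\mathcal{R}}_{\phi\circ\Psi,n}(y)\leq L\widehat{\mathcal{R}}_{\Psi,n}(y)$. The hypothesis $\alpha\leq\psi\leq\beta$ enters only to ensure $\phi$ is defined on the range of every $\psi(y_i)$, so that the Lipschitz bound may be applied; otherwise it plays no further role.
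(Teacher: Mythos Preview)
Your argument is the standard Ledoux--Talagrand contraction proof and is correct; the only minor omission is that the supremum over $(\psi,\psi')$ need not be attained, so one should work with $\epsilon$-optimizers and let $\epsilon\to 0$, but this is routine. Note that the paper does not give its own proof of this lemma at all---it simply cites Lemma 5.7 in \cite{mohri2018foundations}---and the argument you have written is essentially the one found in that reference.
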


Bounds on the Rademacher complexity can be used to prove   ULLN bounds; see, e.g., Theorem 3.3, Eq. (3.8) - (3.13) in \cite{mohri2018foundations} (this reference assumes $[0,1]$-valued functions but the result can be freely shifted and scaled to apply to a set of uniformly bounded functions):
\begin{theorem}[ULLN]\label{thm:ULLN}
Let $P$ be a probability measure on $\mathcal{Y}$ and   $\Psi$ a countable family  of real-valued measurable functions on $\mathcal{Y}$  that are uniformly bounded.  If $Y_i$, $i=1,...,n$ are i.i.d. $\mathcal{Y}$-valued random variables that are distributed as $P$ then for $n\in\mathbb{Z}^+$ we have
\begin{align}\label{eq:ULLN}
\mathbb{E}\left[\sup_{\psi\in\Psi}\left\{\pm\left(\frac{1}{n}\sum_{i=1}^n \psi(Y_i)-E_P[\psi]\right)\right\}\right]\leq 2\mathcal{R}_{\Psi,P,n}.
\end{align}
\end{theorem}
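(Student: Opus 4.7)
The plan is to prove \eqref{eq:ULLN} by the standard symmetrization argument, treating the $+$ and $-$ signs together. Let $Y_i'$, $i=1,\dots,n$, be an independent ``ghost'' sample, also i.i.d.\ from $P$ and independent of $\{Y_i\}$. Since the $Y_i'$ are distributed as $P$, I can write $E_P[\psi]=\mathbb{E}[\tfrac{1}{n}\sum_i \psi(Y_i')]$ for each $\psi\in\Psi$. Pushing this expectation outside the supremum using Jensen's inequality (valid because $\pm\sup$ is convex) gives
\begin{align}
\mathbb{E}\!\left[\sup_{\psi\in\Psi}\!\left\{\pm\!\left(\tfrac{1}{n}\sum_{i=1}^n \psi(Y_i)-E_P[\psi]\right)\!\right\}\right]
\leq \mathbb{E}\!\left[\sup_{\psi\in\Psi}\!\left\{\pm\tfrac{1}{n}\sum_{i=1}^n (\psi(Y_i)-\psi(Y_i'))\right\}\!\right].
\end{align}
Countability of $\Psi$ ensures the suprema are measurable, and uniform boundedness of $\Psi$ ensures the expectations are finite.

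Next I introduce Rademacher variables $\sigma_i\in\{-1,1\}$, independent of $(Y,Y')$. Because $(Y_i,Y_i')$ and $(Y_i',Y_i)$ have the same joint distribution, the increment $\psi(Y_i)-\psi(Y_i')$ has the same distribution as $\sigma_i(\psi(Y_i)-\psi(Y_i'))$, and this equality in distribution holds jointly across $i$ and uniformly in $\psi$ (the $\sigma_i$ just permute the two rows of the sample). Hence
\begin{align}
\mathbb{E}\!\left[\sup_{\psi\in\Psi}\!\left\{\pm\tfrac{1}{n}\sum_{i=1}^n (\psi(Y_i)-\psi(Y_i'))\right\}\!\right]
=\mathbb{E}\!\left[\sup_{\psi\in\Psi}\!\left\{\pm\tfrac{1}{n}\sum_{i=1}^n \sigma_i(\psi(Y_i)-\psi(Y_i'))\right\}\!\right].
\end{align}

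Finally, since $\pm\sigma_i$ has the same distribution as $\sigma_i$, I may drop the $\pm$ sign inside the expectation. I then split the supremum of the sum into two separate suprema using subadditivity:
\begin{align}
\sup_{\psi\in\Psi}\!\left\{\tfrac{1}{n}\sum_i \sigma_i(\psi(Y_i)-\psi(Y_i'))\right\}
\leq \sup_{\psi\in\Psi}\!\left\{\tfrac{1}{n}\sum_i \sigma_i\psi(Y_i)\right\}+\sup_{\psi\in\Psi}\!\left\{\tfrac{1}{n}\sum_i (-\sigma_i)\psi(Y_i')\right\}.
\end{align}
Taking expectations, using independence of $\sigma$ from $(Y,Y')$, and again using that $-\sigma_i\stackrel{d}{=}\sigma_i$, each of the two terms is exactly $\mathcal{R}_{\Psi,P,n}$ by Definition \ref{def:Rademacher_complexity}, yielding the claimed bound $2\mathcal{R}_{\Psi,P,n}$.

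The only subtle point is justifying that suprema over $\Psi$ are measurable and that no unbounded quantities arise when moving expectations through $\sup$; countability of $\Psi$ and the uniform boundedness assumption handle both, so there is no substantive obstacle beyond correctly applying symmetrization. This is a textbook argument (e.g., as in Chapter 3 of \cite{mohri2018foundations}) and I expect the write-up to be essentially a transcription of the four displays above.
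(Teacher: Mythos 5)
Your proof is correct and is precisely the standard symmetrization argument that the paper itself defers to by citing Theorem 3.3 and Eq.\ (3.8)--(3.13) of \cite{mohri2018foundations}; the paper gives no proof of its own. The only slightly loose phrase is ``$\pm\sup$ is convex'' --- what is actually used is the elementary inequality $\sup_\psi \mathbb{E}'[\cdot]\le \mathbb{E}'[\sup_\psi\cdot]$ applied to each sign choice separately --- but the intended reasoning is clear and the rest of the steps (symmetrization, absorbing the sign via $\pm\sigma_i\stackrel{d}{=}\sigma_i$, splitting via subadditivity, and invoking Definition \ref{def:Rademacher_complexity}, which is deliberately stated without an absolute value so that the two resulting terms each equal $\mathcal{R}_{\Psi,P,n}$) are all sound.
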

\begin{remark}
We have stated the ULLN in terms of a countable collection of functions to avoid measurability issues, but under appropriate assumptions one can apply it to an uncountable $\Psi$, e.g., if  there exists a countable $\Psi_0\subset\Psi$ such that for every $\psi\in\Psi$ there exists a sequence $\psi_j\in\Psi_0$ with $\psi_j\to \psi$ pointwise.
\end{remark}

\section{Proofs of Important Lemmas}\label{app:proofs_Lambda_properties}
In this appendix we present the proofs of several important lemmas from the main text.  We begin by deriving the  properties of $\Lambda_f^P$ that were stated in Section \ref{sec:Lambda_f_P_properties}.
\begin{proof}[Proof of Lemma \ref{lemma:Lambda_f_P_inf_compact}]
\begin{enumerate}
\item  $f^*$ is non-decreasing, therefore
\begin{align}
\inf_{\nu\in\mathbb{R}}\{\nu+f^*(\alpha-\nu)\}\leq \inf_{\nu\in\mathbb{R}}\{\nu+E_P[f^*( h-\nu)]\}\leq \inf_{\nu\in\mathbb{R}}\{\nu+f^*(\beta-\nu)\}
\end{align}
for all $\nu$. For any $c\in\mathbb{R}$ we have
\begin{align}
&\inf_{\nu\in\mathbb{R}}\{\nu+f^*(c-\nu)\}=\inf_z\{c-z+f^*(c-(c-z))\}\\
=&c-\sup_{z\in\mathbb{R}}\{z-f^*(z)\}=c-(f^*)^*(1)=c-f(1)=c\,.\notag
\end{align}
Applying this to $c=\alpha$ and $c=\beta$ we see that $\alpha\leq \Lambda_f^P[ h]\leq \beta$ as claimed.
\item  For the following it will be useful to recall  that  $f^*( z_0)= z_0$ and  $(f^*)^\prime_+( z_0)=1$; see Lemma \ref{lemma:z_0_properties}.   Suppose $\nu>\beta- z_0$ and first consider the case where $f^*( h-\nu)<\infty$. We have  $ h-\nu< h-(\beta- z_0)\leq z_0$, therefore $ h-\nu+1/n, h-(\beta- z_0)\in\{f^*<\infty\}^o$ for $n$ sufficiently large. $f^*$ is absolutely continuous on compact intervals, therefore the fundamental theorem of calculus holds, see, e.g., Theorem 3.35 and exercise 42 in \cite{folland2013real},  and hence we have
\begin{align}
f^*( h-(\beta- z_0))=f^*( h-\nu+1/n)+\int_{ h-\nu+1/n}^{ h-(\beta- z_0)} (f^*)^\prime_+(z)dz\,.
\end{align}
For $z\leq  h-(\beta- z_0)$ we have $z\leq  z_0$, hence $(f^*)^\prime_+(z)\leq (f^*)^\prime_+( z_0)=1$.  Therefore
\begin{align}
f^*( h-(\beta- z_0))\leq &f^*( h-\nu+1/n)+  h-(\beta- z_0)-( h-\nu+1/n)\\
=&f^*( h-\nu+1/n) -(\beta- z_0)+\nu-1/n\,.\notag
\end{align}
Using the continuity of $f^*$  on $\{f^*<\infty\}$  we can take $n\to\infty$ to get
\begin{align}
\nu+ f^*( h-\nu) \geq (\beta- z_0)+f^*( h-(\beta- z_0))\,.
\end{align}
This was proven under the assumption that $f^*( h-\nu)<\infty$, but it also trivially holds when this is infinite.  Taking the expectation of both sides we therefore find
\begin{align}
\nu+ E_P[ f^*( h-\nu)] \geq (\beta- z_0)+E_P[f^*( h-(\beta- z_0))]\geq \inf_{\nu\in[\alpha- z_0,\beta- z_0]}\{\nu+E_P[f^*( h-\nu)]\}
\end{align}
for all $\nu>\beta- z_0$. 

Now suppose $\nu<\alpha- z_0$ and first consider the case where  $f^*( h-\nu)<\infty$.  We have $ h-\nu> h-(\alpha- z_0)\geq  z_0$.  Therefore $ h-\nu-1/n, h-(\alpha- z_0)\in\{f^*<\infty\}^o$ for $n$ sufficiently large
and
\begin{align}
f^*( h-\nu-1/n)=f^*( h-(\alpha- z_0))+\int_{ h-(\alpha- z_0)}^{ h-\nu-1/n} (f^*)^\prime_+(z)dz\,.
\end{align}
For $z\geq  h-(\alpha- z_0)$ we have $z\geq  z_0$ and so $(f^*)^\prime_+(z)\geq 1$. Hence
\begin{align}
f^*( h-\nu-1/n)\geq& f^*( h-(\alpha- z_0))+ h-\nu-1/n-( h-(\alpha- z_0))\\
=&f^*( h-(\alpha- z_0))-\nu-1/n+(\alpha- z_0)\,.\notag
\end{align}
Taking $n\to\infty$ gives
\begin{align}
\nu+f^*( h-\nu)\geq&(\alpha- z_0)+ f^*( h-(\alpha- z_0))\,.
\end{align}
This was proven under the assumption that   $f^*( h-\nu)<\infty$, but it also trivially holds when this is infinite.  Taking the expectation of both sides we therefore find
\begin{align}
\nu+E_P[f^*( h-\nu)]\geq&(\alpha- z_0)+ E_P[f^*( h-(\alpha- z_0))]\geq \inf_{\nu\in[\alpha- z_0,\beta- z_0]}\{\nu+E_P[f^*( h-\nu)]\}
\end{align}
for all $\nu<\alpha- z_0$.  Therefore we conclude that
\begin{align}
\Lambda_f^P[ h]=\inf_{\nu\in[\alpha- z_0,\beta- z_0]}\{\nu+E_P[f^*( h-\nu)]\}
\end{align}
as claimed.
\end{enumerate}

\end{proof}

\begin{proof}[Proof of Lemma \ref{lemma:Lambda_Lipschitz_bound}]
$f^*$ is non-decreasing, hence $z_0+\beta-\alpha\in \{f^*<\infty\}^o$ implies $ z_0\in\{f^*<\infty\}^o$. Therefore we can use Lemma \ref{lemma:Lambda_f_P_inf_compact} to write
\begin{align}
\Lambda_f^P[ h]=\inf_{\nu\in[\alpha- z_0,\beta- z_0]}\{\nu+E_P[f^*( h-\nu)]\}\,.
\end{align}
  The fact that $f^*$ is non-decreasing also implies that  $h- z_0\leq f^*( h-\nu)\leq f^*(\beta-\alpha+ z_0)<\infty$ for all $\nu\in[\alpha- z_0,\beta- z_0]$  and hence $E_P[f^*( h-\nu)]$ is finite.  The same holds for $\tilde{h}$, therefore
\begin{align}\label{eq:Lambda_f_P_Lipschitz_1}
|\Lambda_f^P[\tilde{h}]-\Lambda_f^P[h]|
\leq&\sup_{\nu\in[\alpha- z_0,\beta- z_0]}E_P[|f^*(\tilde{h}-\nu)-f^*(h-\nu)|]\,.
\end{align}
Here we made use of the simple bound
\begin{align}\label{eq:inf_delta_bound}
\pm(\inf_{i\in I} d_i-\inf_{i\in I} c_i)\leq \sup_{i\in I}\{\pm(d_i-c_i)\}
\end{align}
whenever $c_i,d_i\in\mathbb{R}$ for all $i\in I$ and $\inf_i c_i,\inf_i d_i$ are finite.

$f^*$ is absolutely continuous on the interval between $h-\nu$  and $\tilde{h}-\nu$, therefore we can use the fundamental theorem of calculus to write
\begin{align}
f^*(\tilde{h}-\nu)-f^*(h-\nu)=\int_{h-\nu}^{\tilde{h}-\nu} (f^*)^\prime_+(z)dz
\end{align}
and so
\begin{align}
|f^*(\tilde{h}-\nu)-f^*(h-\nu)|\leq|\tilde{h}-h| \sup_{z\in[z_0-(\beta-\alpha),z_0+\beta-\alpha]}|(f^*)^\prime_+(z)|\,.
\end{align}
The fact that $f^*$ is non-decreasing implies $(f^*)^\prime_+\geq 0$ on $\{f^*<\infty\}^o$. Combined with the fact that $(f^*)^\prime_+$ is non-decreasing,  we can therefore conclude
\begin{align}
|f^*(\tilde{h}-\nu)-f^*(h-\nu)|\leq|\tilde{h}-h|(f^*)^\prime_+(z_0+\beta-\alpha)\,.
\end{align}
Taking the expectation of both sides and combining the result with \eqref{eq:Lambda_f_P_Lipschitz_1} we obtain the claimed Lipschitz bound
\begin{align}
|\Lambda_f^P[\tilde{h}]-\Lambda_f^P[h]|
\leq&(f^*)^\prime_+( z_0+\beta-\alpha) \|\tilde{h}-h\|_{L^1(P)} \,.
\end{align}
\end{proof}

\begin{proof}[Proof of Lemma \ref{lemma:Lambda_perturbation_bound}]
First let $x_i=\alpha$ for all $i$, $\tilde{x}_i=\alpha$ for $i\neq j$ and $\tilde{x}_j=\beta$.  We have $(x,\tilde{x})\in E$ and therefore
\begin{align}\label{eq:Lambda_f_E_lb}
&\sup_{(x,\tilde{x})\in E}\left\{\Lambda_f(\tilde{x})-\Lambda_f(x)\right\}\\
\geq&\inf_{\nu\in[\alpha- z_0,\beta- z_0]}\left\{\nu+\frac{1}{n}\sum_{i=1}^n f^*(\tilde{x}_i-\nu)\right\}-\inf_{\nu\in\mathbb{R}}\left\{\nu+\frac{1}{n}\sum_{i=1}^n f^*(x_i-\nu)\right\}\notag\\
=&\inf_{\nu\in[\alpha- z_0,\beta- z_0]}\left\{\nu-\alpha+\frac{n-1}{n} f^*(\alpha-\nu)+\frac{1}{n} f^*(\beta-\nu)\right\}+\sup_{\nu\in\mathbb{R}}\{\alpha-\nu- f^*(\alpha-\nu)\}\notag\\
=&\inf_{z\in[ z_0-(\beta-\alpha), z_0]}\left\{-z+\frac{n-1}{n} f^*(z)+\frac{1}{n} f^*(\beta-\alpha+z)\right\}+(f^*)^*(1)\notag\\
=&\inf_{z\in[ z_0-(\beta-\alpha), z_0]}\left\{-z+\frac{n-1}{n} f^*(z)+\frac{1}{n} f^*(\beta-\alpha+z)\right\}\,.\notag
\end{align}
To prove the reverse inequality, let $(x,\tilde{x})\in E$ be arbitrary and compute
\begin{align}
&\Lambda_f(\tilde{x})-\Lambda_f(x)\\
=&\inf_{\tilde{\nu}\in[\alpha- z_0,\beta- z_0]}\left\{\tilde{\nu}+\frac{1}{n}\sum_{i=1}^n f^*(\tilde{x}_i-\tilde{\nu})\right\}-\inf_{\nu\in[\alpha- z_0,\beta- z_0]}\bigg\{\nu+\frac{1}{n}\sum_{i=1}^n f^*(x_i-\nu)\}\notag\\
=&\sup_{\nu\in[\alpha- z_0,\beta- z_0]}\inf_{\tilde{\nu}\in[\alpha- z_0,\beta- z_0]}\bigg\{\tilde{\nu}-\nu+\frac{1}{n}\sum_{i=1,i\neq j}^n f^*(x_i-\tilde{\nu})-\frac{1}{n}\sum_{i=1,i\neq j}^n f^*(x_i-\nu)\notag\\
&\qquad\qquad\qquad\qquad\qquad\qquad+\frac{1}{n} f^*(\tilde{x}_j-\tilde{\nu})-\frac{1}{n} f^*(x_j-\nu)\bigg\}\notag\\
\leq&\sup_{\nu\in[\alpha- z_0,\beta- z_0]}\inf_{\tilde{\nu}\in[\alpha- z_0,\beta- z_0]}\bigg\{\tilde{\nu}-\nu+\frac{1}{n}\sum_{i=1,i\neq j}^n f^*(x_i-\tilde{\nu})-\frac{1}{n}\sum_{i=1,i\neq j}^n f^*(x_i-\nu)\notag\\
&\qquad\qquad\qquad\qquad\qquad\qquad+\frac{1}{n} f^*(\beta-\tilde{\nu})-\frac{1}{n} f^*(\alpha-\nu)\bigg\}\,,\notag
\end{align}
where in the last line we used the fact that $f^*$ is non-decreasing. We have $\tilde{x}_i-\tilde{\nu},x_i-\nu\in \{f^*<\infty\}^o$ and  for each $i\neq j$  the terms involving $x_i$ are absolutely continuous on compact intervals with right derivative
\begin{align}
&\frac{d}{dx_i^+}\left(\frac{1}{n} f^*(x_i-\tilde{\nu})-\frac{1}{n} f^*(x_i-\nu)\right)=\frac{1}{n}\left((f^*)^\prime_+(x_i-\tilde{\nu})-(f^*)^\prime_+(x_i-\nu)\right)\,,
\end{align}
which is   non-positive when $\tilde{\nu}\geq \nu$ and non-negative when $\tilde{\nu}\leq \nu$ (due to $(f^*)^\prime_+$ being non-decreasing).  The fundamental theorem of calculus  then implies that $\frac{1}{n} f^*(x_i-\tilde{\nu})-\frac{1}{n} f^*(x_i-\nu)$ is  non-increasing in $x_i$ when  $\tilde{\nu}\geq \nu$ and non-decreasing in $x_i$ when $\tilde{\nu}\leq \nu$.  Therefore
\begin{align}
&\Lambda_f(\tilde{x})-\Lambda_f(x)\\
\leq &\sup_{\nu\in[\alpha- z_0,\beta- z_0]}\inf_{\tilde{\nu}\in[\alpha- z_0,\beta- z_0]}\begin{cases}
\tilde{\nu}-\nu+f^*(\beta-\tilde{\nu})-\frac{n-1}{n} f^*(\beta-\nu)-\frac{1}{n} f^*(\alpha-\nu) &\text{ if }\tilde{\nu}<\nu\\
\tilde{\nu}-\nu+\frac{n-1}{n} f^*(\alpha-\tilde{\nu})-f^*(\alpha-\nu)+\frac{1}{n} f^*(\beta-\tilde{\nu}) &\text{ if }\tilde{\nu}\geq\nu
\end{cases}\bigg\}\notag\\
=&\sup_{\nu\in[\alpha- z_0,\beta- z_0]}\inf_{z\in[ z_0-(\beta-\alpha), z_0]}\begin{cases} 
\alpha-z-\nu+f^*(\beta-\alpha+z)-\frac{n-1}{n} f^*(\beta-\nu)-\frac{1}{n} f^*(\alpha-\nu)& \text{ if }\alpha-\nu<z\\\
\alpha-z-\nu+\frac{n-1}{n} f^*(z)-f^*(\alpha-\nu)+\frac{1}{n} f^*(\beta-\alpha+z) &\text{ if } \alpha-\nu\geq z
\end{cases}\bigg\}\,,\notag
\end{align}
where we changed variables to $z=\alpha-\tilde{\nu}$ in the last line. For $z\in(\alpha-\nu, z_0]$, the right derivative of the objective  with respect  to $z$ is given by
\begin{align}
\frac{d}{dz^+}\left(\alpha-z-\nu+f^*(\beta-\alpha+z)-\frac{n-1}{n} f^*(\beta-\nu)-\frac{1}{n} f^*(\alpha-\nu))\right)=-1+(f^*)_+^\prime(\beta-\alpha+z)\,.
\end{align}
$(f^*)^\prime_+$ is non-decreasing and $\beta-\alpha+z\geq  z_0$, therefore
  $-1+(f^*)_+^\prime(\beta-\alpha+z)\geq -1+(f^*)_+^\prime( z_0)=0$ .  This implies that the objective is non-decreasing in $z\in[\alpha-\nu, z_0]$ (the extension to the endpoint $\alpha-\nu$ follows from continuity of the objective). Therefore
\begin{align}
&\Lambda_f(\tilde{x})-\Lambda_f(x)\\
\leq &\sup_{\nu\in[\alpha- z_0,\beta- z_0]}\inf_{z\in[ z_0-(\beta-\alpha),\alpha-\nu]}\left\{\alpha-z-\nu+\frac{n-1}{n} f^*(z)-f^*(\alpha-\nu)+\frac{1}{n} f^*(\beta-\alpha+z)\right\}\notag\\
= &\sup_{\nu\in[\alpha- z_0,\beta- z_0]}\left\{-\nu-f^*(\alpha-\nu)+\inf_{z\in[ z_0-(\beta-\alpha),\alpha-\nu]}\left\{\alpha-z+\frac{n-1}{n} f^*(z)+\frac{1}{n} f^*(\beta-\alpha+z)\right\}\right\}\notag\,.
\end{align}

Let $z_*$ be a minimizer of 
\begin{align}
\inf_{z\in[ z_0-(\beta-\alpha), z_0]}\left\{\alpha-z+\frac{n-1}{n} f^*(z)+\frac{1}{n} f^*(\beta-\alpha+z)\right\}\,,
\end{align}
which exists due to compactness of the domain and continuity of the objective.  The objective is convex in $z$, therefore it is non-increasing on $(-\infty,z_*]$ and so
\begin{align}
&\Lambda_f(\tilde{x})-\Lambda_f(x)\\
\leq &\sup_{\nu\in[\alpha- z_0,\beta- z_0]}\bigg\{-\nu-f^*(\alpha-\nu)+\begin{cases}
\alpha-z_*+\frac{n-1}{n} f^*(z_*)+\frac{1}{n} f^*(\beta-\alpha+z_*) &\text{ if }\alpha-\nu\geq z_*\\
\nu+\frac{n-1}{n} f^*(\alpha-\nu)+\frac{1}{n} f^*(\beta-\nu) &\text{ if }\alpha-\nu< z_*
\end{cases}\bigg\}\notag\\
= &\sup_{w\in[z_0-(\beta-\alpha),z_0]}\begin{cases}
w-f^*(w)-z_*+\frac{n-1}{n} f^*(z_*)+\frac{1}{n} f^*(\beta-\alpha+z_*)  &\text{ if }w\geq z_*\\
\frac{1}{n}( f^*(\beta-\alpha+w)- f^*(w)) &\text{ if }w< z_*
\end{cases}\bigg\}\,,\notag
\end{align}
where we changed variables to $w=\alpha-\nu$. We have $\sup_{w\in\mathbb{R}}\{w-f^*(w)\}=(f^*)^*(1)=0$ and $w\mapsto f^*(\beta-\alpha+w)-f^*(w)$ is non-decreasing on $w\leq  z_0$ (this follows from $(f^*)^\prime_+(\beta-\alpha+w)-(f^*)^\prime_+(w)\geq 0$), therefore
\begin{align}
&\Lambda_f(\tilde{x})-\Lambda_f(x)\leq \max\left\{-z_*+\frac{n-1}{n} f^*(z_*)+\frac{1}{n} f^*(\beta-\alpha+z_*),
\frac{1}{n}( f^*(\beta-\alpha+z_*)- f^*(z_*))\right\}\,.
\end{align}
Using the bound $f^*(z)\geq z$ we can compute
\begin{align}
&-z_*+\frac{n-1}{n} f^*(z_*)+\frac{1}{n} f^*(\beta-\alpha+z_*)-\frac{1}{n}( f^*(\beta-\alpha+z_*)- f^*(z_*)) \notag\\
=&-z_*+ f^*(z_*)\geq 0\,.\notag
\end{align}
Hence
\begin{align}
\Lambda_f(\tilde{x})-\Lambda_f(x)\leq& -z_*+\frac{n-1}{n} f^*(z_*)+\frac{1}{n} f^*(\beta-\alpha+z_*)\\
=& \inf_{z\in[ z_0-(\beta-\alpha), z_0]}\left\{-z+\frac{n-1}{n} f^*(z)+\frac{1}{n} f^*(\beta-\alpha+z)\right\}\,.\notag
\end{align}
This holds for all $(x,\tilde{x})\in E$ and so when combined with \eqref{eq:Lambda_f_E_lb} we obtain the claimed equality.

To derive the looser bounds, first use $f^*(z)\geq z$ to compute the lower bound
\begin{align}
 \inf_{z\in[ z_0-(\beta-\alpha), z_0]}\left\{-z+\frac{n-1}{n} f^*(z)+\frac{1}{n} f^*(\beta-\alpha+z)\right\} \geq \frac{\beta-\alpha}{n}\,.
\end{align}
For the upper bound we compute
\begin{align}
& \inf_{z\in[ z_0-(\beta-\alpha), z_0]}\left\{-z+\frac{n-1}{n} f^*(z)+\frac{1}{n} f^*(\beta-\alpha+z)\right\} \\
\leq&- z_0+\frac{n-1}{n} f^*( z_0)+\frac{1}{n} f^*(\beta-\alpha+ z_0)=\frac{1}{n} (f^*(\beta-\alpha+ z_0)-  z_0)\notag\\
\leq&(f^*)^\prime_+(\beta-\alpha+z_0)\frac{\beta-\alpha}{n}\,,\notag
\end{align}
where to obtain the last line we used the fundamental theorem of calculus together with the facts that $f^*(z_0)=z_0$ and $(f^*)^\prime_+$ is non-decreasing.
\end{proof}

\begin{proof}[Proof of Lemma \ref{lemma:Lambda_rademacher_bound1}]
The result is trivial if $\alpha=\beta$ so suppose $\alpha<\beta$. Note that $f^*$ is non-decreasing, and hence $ z_0+\beta-\alpha\in\{f^*<\infty\}^o$ implies $ z_0\in\{f^*<\infty\}^o$. Using Lemma \ref{lemma:Lambda_f_P_inf_compact} along with \eqref{eq:inf_delta_bound} we obtain
\begin{align}
\pm\left(\Lambda_f^P[\psi]-\Lambda_f^{P_n}[\psi]\right)\leq \sup_{\nu\in[\alpha-z_0,\beta-z_0]}\{\pm(E_P[f^*(\psi-\nu)]-E_{P_n}[f^*(\psi-\nu)])\}\,.
\end{align}
Note that $z_0-(\beta-\alpha)\leq f^*(\psi-\nu)\leq f^*(\beta-\alpha+ z_0)<\infty$ and so the expectations are finite.  Also, the supremum over $\nu$ can be restricted to rational $\nu$ due to continuity. Maximizing over $\psi$ and taking expectations gives
\begin{align}
\mathbb{E}\left[\sup_{\psi\in\Psi}\left\{\pm\left(\Lambda_f^P[\psi]-\Lambda_f^{P_n}[\psi]\right)\right\}\right]\leq \mathbb{E}\left[\sup_{\psi\in\Psi,\nu\in[\alpha-z_0,\beta-z_0]\cap\mathbb{Q}}\{\pm(E_P[f^*(\psi-\nu)]-E_{P_n}[f^*(\psi-\nu)])\}\right]\,.
\end{align}
The ULLN  (see Theorem \ref{thm:ULLN}) implies
\begin{align}
&\mathbb{E}\left[\sup_{\psi\in\Psi,\nu\in[\alpha- z_0,\beta- z_0]\cap\mathbb{Q}}\{\pm(E_P[f^*(\psi-\nu)]-E_{P_n}[f^*(\psi-\nu))]\}\right]\\
=&\mathbb{E}\left[\sup_{h\in \mathcal{H}}\left\{\pm\left(E_P[h]-\frac{1}{n}\sum_{i=1}^n h(Y_i)\right)\right\}\right]\leq 2\mathcal{R}_{\mathcal{H},P,n}\,,\notag
\end{align}
where $\mathcal{H}\coloneqq \{ f^*(\psi-\nu):\psi\in \Psi,\nu\in[\alpha- z_0,\beta- z_0]\cap\mathbb{Q}\}$.   $f^*$ and $(f^*)^\prime$ are decreasing, therefore $f^*$ is $(f^*)^\prime_+(\beta-\alpha+ z_0)$-Lipschitz on $(-\infty,\beta-\alpha+ z_0]$ and so  Talagrand's lemma (see Lemma \ref{lemma:Talagrand}) implies
\begin{align}
\mathcal{R}_{\mathcal{H},P,n}\leq& (f^*)^\prime_+(\beta-\alpha+ z_0)\mathcal{R}_{\Psi-[\alpha- z_0,\beta- z_0]\cap\mathbb{Q},P,n}\,.
\end{align}
We can compute
\begin{align}
&\mathcal{R}_{\Psi-[\alpha- z_0,\beta- z_0]\cap\mathbb{Q},P,n}=\mathcal{R}_{\Psi,P,n}+\mathcal{R}_{[\alpha- z_0,\beta- z_0],P,n}
\end{align}
and  
\begin{align}\label{eq:Rademacher_bound_constant}
\mathcal{R}_{[\alpha- z_0,\beta- z_0],P,n}=&\frac{1}{n} E_\sigma\left[\sup_{\nu\in[\alpha- z_0,\beta- z_0]}\nu\sum_{i=1}^n\sigma_i\right]
=\frac{1}{n} E_\sigma\left[\sup_{z\in[-(\beta-\alpha)/2,(\beta-\alpha)/2]} (z+(\beta+\alpha)/2- z_0)\sum_{i=1}^n\sigma_i\right]\\
=&\frac{1}{n} E_\sigma\left[\sup_{z\in[-(\beta-\alpha)/2,(\beta-\alpha)/2]} z\sum_{i=1}^n\sigma_i\right]
=\frac{\beta-\alpha}{2n} E_\sigma\left[\left|\sum_{i=1}^n\sigma_i\right|\right]\leq\frac{\beta-\alpha}{2n} E_\sigma\left[\left|\sum_{i=1}^n\sigma_i\right|^2\right]^{1/2}\notag\\
=&\frac{\beta-\alpha}{2n^{1/2}}\,.\notag
\end{align}
Combining these completes the proof.
\end{proof}

\begin{proof}[Proof of Lemma \ref{lemma:Lambda_rademacher_bound2}]
As in the proof of Lemma \ref{lemma:Lambda_rademacher_bound1}, the result is trivial when $\alpha=\beta$.  For $\alpha<\beta$ we have
\begin{align}
\mathbb{E}\left[\sup_{\psi\in\Psi}\left\{\pm\left(\Lambda_f^P[\psi]-\Lambda_f^{P_n}[\psi]\right)\right\}\right]\leq \mathbb{E}\left[\sup_{\psi\in\Psi,\nu\in[\alpha-z_0,\beta-z_0]\cap\mathbb{Q}}\{\pm(E_P[f^*(\psi-\nu)]-E_{P_n}[f^*(\psi-\nu)])\}\right]\,.
\end{align}
Now use the fundamental theorem of calculus   to compute
\begin{align}
f^*(\psi-\nu)=f^*(\psi-(\beta- z_0))+ (\beta- z_0-\nu)\int_{0}^{1} (f^*)^\prime_+(\psi-(1-t)(\beta- z_0)-t\nu)dt
\end{align}
for  $\psi\in\Psi,\nu\in[\alpha- z_0,\beta- z_0]\cap\mathbb{Q}$.  Therefore
\begin{align}
&\sup_{\psi\in\Psi,\nu\in[\alpha- z_0,\beta- z_0]\cap\mathbb{Q}}\{\pm(E_P[f^*(\psi-\nu)]-E_{P_n}[f^*(\psi-\nu)])\}\\
=&\sup_{\psi\in\Psi,\nu\in[\alpha- z_0,\beta- z_0]\cap\mathbb{Q}}\bigg\{\pm(E_P[ f^*(\psi-(\beta- z_0))]-E_{P_n}[f^*(\psi-(\beta- z_0))]\notag\\
&+ (\beta- z_0-\nu)\int_{0}^{1} E_P[(f^*)^\prime_+(\psi-(1-t)(\beta- z_0)-t\nu)]-E_{P_n}[ (f^*)^\prime_+(\psi-(1-t)(\beta- z_0-t\nu)]dt\bigg\}\notag\\
\leq &\sup_{\psi\in\Psi}\{\pm(E_P[ f^*(\psi-(\beta- z_0))]-E_{P_n}[f^*(\psi-(\beta- z_0))]\}\notag\\
&+ \sup_{\psi\in\Psi,\nu\in[\alpha- z_0,\beta- z_0]\cap\mathbb{Q}}\bigg\{(\beta- z_0-\nu)\int_{0}^{1} \bigg(E_P[(f^*)^\prime_+(\psi-(1-t)(\beta- z_0)-t\nu)]\notag\\
&\qquad\qquad\qquad\qquad\qquad\qquad\qquad\qquad\qquad\qquad-E_{P_n}[ (f^*)^\prime_+(\psi-(1-t)(\beta- z_0-t\nu)]\bigg)dt\bigg\}\notag\\
\leq &\sup_{h\in f^*\circ(\Psi-(\beta- z_0))}\{\pm(E_P[ h]-E_{P_n}[h]\}\label{eq:second_term}\\
&+ \sup_{\psi\in\Psi,\nu\in[\alpha- z_0,\beta- z_0]\cap\mathbb{Q}}\bigg\{(\beta- z_0-\nu)\int_{0}^{1} \bigg(E_P[(f^*)^\prime_+(\psi-(1-t)(\beta- z_0)-t\nu)]\notag\\
&\qquad\qquad\qquad\qquad\qquad\qquad\qquad\qquad\qquad\qquad-E_{P_n}[ (f^*)^\prime_+(\psi-(1-t)(\beta- z_0-t\nu)]\bigg)dt\bigg\}\,.\notag
\end{align}
The ULLN  (Theorem \ref{thm:ULLN}) implies
\begin{align}
\mathbb{E}\left[\sup_{h\in f^*\circ(\Psi-(\beta- z_0))}\{\pm(E_P[ h]-E_{P_n}[h])\}\right]\leq& 2\mathcal{R}_{ f^*\circ(\Psi-(\beta- z_0)),P,n}\,.
\end{align}
We have $\psi-(\beta- z_0)\leq z_0$ and $f^*$ is $(f^*)^\prime_+( z_0)$-Lipschitz on $(-\infty, z_0]$ where $(f^*)^\prime_+( z_0)=1$, hence Talagrand's lemma (Lemma \ref{lemma:Talagrand})  implies
\begin{align}
\mathcal{R}_{ f^*\circ(\Psi-(\beta- z_0)),P,n}\leq &\mathcal{R}_{\Psi-(\beta- z_0),P,n}=\mathcal{R}_{\Psi,P,n}\,.
\end{align}
We note that the shift invariance of the Rademacher complexity follows from the definition, as recalled in Eq.~\eqref{eq:emp_rad_def} of  Appendix \ref{app:ULLN}.

As for the second term in \eqref{eq:second_term}, we have
\begin{align}
&\mathbb{E}\bigg[\sup_{\psi\in\Psi,\nu\in[\alpha- z_0,\beta- z_0]\cap\mathbb{Q}}\bigg\{(\beta- z_0-\nu)\int_{0}^{1} \bigg(E_P[(f^*)^\prime_+(\psi-(1-t)(\beta- z_0)-t\nu)]\\
&\qquad\qquad\qquad\qquad\qquad\qquad\qquad\qquad\qquad\qquad-E_{P_n}[ (f^*)^\prime_+(\psi-(1-t)(\beta- z_0-t\nu)]\bigg)dt\bigg\}\bigg]\notag\\
\leq&(\beta-\alpha)\int_{0}^{1}\mathbb{E}\bigg[\sup_{\psi\in\Psi,\nu\in[\alpha- z_0,\beta- z_0]\cap\mathbb{Q}} \bigg\{\big|E_P[(f^*)^\prime_+(\psi-(1-t)(\beta- z_0)-t\nu)]\notag\\
&\qquad\qquad\qquad\qquad\qquad\qquad\qquad\qquad\qquad\qquad-E_{P_n}[ (f^*)^\prime_+(\psi-(1-t)(\beta- z_0-t\nu)]\big|\bigg\}\bigg]dt\,.\notag
\end{align}
Using the fact that $\Psi$ contains a constant (which implies that the terms in the following bound are non-negative) and then employing the ULLN  (Theorem \ref{thm:ULLN}) we have
\begin{align}
&\mathbb{E}\!\left[\sup_{\psi\in\Psi,\nu\in[\alpha- z_0,\beta- z_0]\cap\mathbb{Q}} |E_P[(f^*)^\prime_+(\psi-(1-t)(\beta- z_0)-t\nu)]-E_{P_n}[ (f^*)^\prime_+(\psi-(1-t)(\beta- z_0)-t\nu)]|\right]\\
&\leq\mathbb{E}\!\left[\sup_{\psi\in\Psi,\nu\in[\alpha- z_0,\beta- z_0]\cap\mathbb{Q}}\!\{E_P[(f^*)^\prime_+(\psi-(1-t)(\beta- z_0)-t\nu)]-E_{P_n}[ (f^*)^\prime_+(\psi-(1-t)(\beta- z_0)-t\nu)]\}\!\right]\notag\\
&+\mathbb{E}\!\left[\sup_{\psi\in\Psi,\nu\in[\alpha- z_0,\beta- z_0]\cap\mathbb{Q}}\!\{-(E_P[(f^*)^\prime_+(\psi-(1-t)(\beta- z_0)-t\nu)]-E_{P_n}[ (f^*)^\prime_+(\psi-(1-t)(\beta- z_0)-t\nu)])\}\!\right]\notag\\
&\leq 4\mathcal{R}_{\mathcal{H}_t,P,n}\,,\notag
\end{align}
where $\mathcal{H}_t\coloneqq\{(f^*)^\prime_+(\psi-(1-t)(\beta- z_0)-t\nu):\psi\in\Psi,\nu\in[\alpha- z_0,\beta- z_0]\cap\mathbb{Q}\}$.  Using the Lipschitz assumption on $(f^*)^\prime_+$ together with Talagrand's lemma (Lemma \ref{lemma:Talagrand}) and the result of the calculation \eqref{eq:Rademacher_bound_constant} we obtain
\begin{align}
\mathcal{R}_{\mathcal{H}_t,P,n}\leq&L_{\alpha,\beta}\mathcal{R}_{\Psi-(1-t)(\beta- z_0)-t[\alpha- z_0,\beta- z_0]\cap\mathbb{Q}),P,n}\\
=&L_{\alpha,\beta}(\mathcal{R}_{\Psi,P,n}+t R_{[\alpha- z_0,\beta- z_0]\cap\mathbb{Q}),P,n})
\leq L_{\alpha,\beta}\left(\mathcal{R}_{\Psi,P,n}+t \frac{\beta-\alpha}{2n^{1/2}}\right)\,.\notag
\end{align}
Putting these bounds together we find
\begin{align}
\mathbb{E}\left[\sup_{\psi\in\Psi}\left\{\pm\left(\Lambda_f^P[\psi]-\Lambda_f^{P_n}[\psi]\right)\right\}\right]
\leq&2\mathcal{R}_{\Psi,P,n}+ (\beta-\alpha)\int_{0}^{1}4L_{\alpha,\beta}\left(\mathcal{R}_{\Psi,P,n}+t \frac{\beta-\alpha}{2n^{1/2}}\right)dt\\
=&2(1+2(\beta-\alpha)L_{\alpha,\beta})\mathcal{R}_{\Psi,P,n}+\frac{(\beta-\alpha)^2L_{\alpha,\beta}}{n^{1/2}}\,.\notag
\end{align}
Combining this with the bound from Lemma \ref{lemma:Lambda_rademacher_bound1} gives the claimed result.
\end{proof}

Finally, we derive the alternative error decomposition from Lemma \ref{lemma:error_decomp3}.
\begin{proof}[Proof of Lemma \ref{lemma:error_decomp3}]
Using the assumption \eqref{eq:theta_approx_opt} along with Lemma \ref{lemma:disc_approx_error} we can compute the $\mathbb{P}$-a.s. bound
\begin{align}
D_f^\Gamma(Q\|P_{\theta^*_{n,m}})\leq &D_f^{\widetilde{\Gamma}}(Q\|P_{\theta^*_{n,m}})+\inf_{\theta\in \Theta} D_f^{\widetilde{\Gamma}}(Q_n\|P_{\theta,m})-D_f^{\widetilde{\Gamma}}(Q_n\| P_{\theta_{n,m}^*,m}) \\
&+\left(1+(f^*)^\prime_+( z_0+\beta-\alpha) \right)\sup_{h\in\Gamma}\inf_{\tilde{h}\in\widetilde{\Gamma}}\|h- \tilde{h}\|_{\infty} +\epsilon^{opt}_{n,m}\,.\notag
\end{align}
As a simple consequence of the definition \eqref{eq:D_f_Gamma_def}, for all $\theta\in\Theta$ we have
\begin{align}
D_f^{\widetilde{\Gamma}}(Q_n\|P_{\theta,m})
\leq D_f^{\widetilde{\Gamma}}(Q_n\|P_{ \theta})+\sup_{h\in\widetilde{\Gamma},\theta\in\Theta}\{\Lambda_f^{P_\theta}[h]-\Lambda_f^{P_{\theta,m}}[h]\}\,.
\end{align}
Minimizing over $\theta\in\Theta$ and using the  zero-approximation-error property we obtain
\begin{align}
\inf_{\theta\in\Theta}D_f^{\widetilde{\Gamma}}(Q_n\|P_{\theta,m})
\leq& \inf_{\theta\in\Theta}D_f^{\widetilde{\Gamma}}(Q_n\|P_\theta)+\sup_{h\in\widetilde{\Gamma},\theta\in\Theta}\{\Lambda_f^{P_\theta}[h]-\Lambda_f^{P_{\theta,m}}[h]\}\\
=&\sup_{h\in\widetilde{\Gamma},\theta\in\Theta}\{\Lambda_f^{P_\theta}[h]-\Lambda_f^{P_{\theta,m}}[h]\}\,.\notag
\end{align}
Therefore
\begin{align}
D_f^\Gamma(Q\|P_{\theta^*_{n,m}})\leq &\sup_{h\in\widetilde{\Gamma},\theta\in\Theta}\{\Lambda_f^{P_\theta}[h]-\Lambda_f^{P_{\theta,m}}[h]\}+\left(1+(f^*)^\prime_+( z_0+\beta-\alpha) \right)\sup_{h\in\Gamma}\inf_{\tilde{h}\in\widetilde{\Gamma}}\|h- \tilde{h}\|_{\infty} +\epsilon^{opt}_{n,m}\\
&+D_f^{\widetilde{\Gamma}}(Q\|P_{\theta^*_{n,m}})-D_f^{\widetilde{\Gamma}}(Q_n\| P_{\theta_{n,m}^*,m})\notag\\
\leq &\sup_{h\in\widetilde{\Gamma},\theta\in\Theta}\{\Lambda_f^{P_\theta}[h]-\Lambda_f^{P_{\theta,m}}[h]\}+\left(1+(f^*)^\prime_+( z_0+\beta-\alpha) \right)\sup_{h\in\Gamma}\inf_{\tilde{h}\in\widetilde{\Gamma}}\|h- \tilde{h}\|_{\infty} +\epsilon^{opt}_{n,m}\notag\\
&+\sup_{h\in\widetilde{\Gamma}}\{ E_Q[h]-E_{Q_n}[h]\}+\sup_{h\in\widetilde{\Gamma}}\left\{\Lambda_f^{ P_{\theta_{n,m}^*,m}}[h] -\Lambda_f^{P_{\theta^*_{n,m}}}[h]\right\} \,.\notag
\end{align}
Bounding the last term by maximizing over $\theta\in\Theta$  and then using $\Lambda_f^{P_\theta}[h]=\Lambda_f^{P_Z}[h\circ\Phi_\theta]$ and $\Lambda_f^{P_{\theta,m}}[h]=\Lambda_f^{P_{Z,m}}[h\circ\Phi_\theta]$ completes the proof.
\end{proof}

\section{Concentration Inequalities for the Estimation of $(f,\Gamma)$-Divergences}\label{sec:estimation}
The tools developed in Section \ref{sec:Lambda_f_P_properties} can also be used to address the simpler problem of   concentration inequalities for the estimation of $D_f^\Gamma(Q\|P)$  using samples from $Q$ and $P$.  Special cases of this problem were previously considered in  \cite{chen2023sample,chen2024learning}.  Both considered the case where  $f$ corresponded to the family of $\alpha$-divergences and $\Gamma$ was the set of $L$-Lipschitz functions; the former focused on the consequences of group symmetry  while the latter focused on heavy-tailed distributions.   The techniques developed in this work allow us to derive concentration inequalities for a much wider class of $f$'s and $\Gamma$'s, though we do not consider the consequences of group symmetry  in this work.
\begin{theorem}\label{thm:D_f_Gamma_concentration_general}
Let  $\Gamma$ be a nonempty set of measurable functions on $\mathcal{X}$ and suppose we have $\alpha,\beta\in\mathbb{R}$ such that $\alpha<\beta$ and $\alpha\leq h\leq \beta$ for all $h\in\Gamma$. Assume there exists a countable $\Gamma_0\subset\Gamma$ such that for all $h\in\Gamma$ there exists a sequence $h_j\in\Gamma_0$ such that $h_j\to h$ pointwise.  Let $f\in\mathcal{F}_1(a,b)$ with $a\geq 0$ and assume $f$ is strictly convex in a neighborhood of $1$ and $ z_0+\beta-\alpha\in\{f^*<\infty\}^o$.  

 Let $Q,P$ be probability measures on $\mathcal{X}$ and $X_i$, $i=1,...,n$, $X^\prime_i$, $i=1,...,m$ be jointly independent samples from $Q$ and $P$ respectively with $Q_n$, $P_m$  the corresponding empirical measures. Then for $\epsilon>0$ we have
\begin{align}
\mathbb{P}\left(D_f^\Gamma(Q\|P)-D_f^\Gamma(Q_n\|P_m)\geq \epsilon\right)  
\leq& \exp\left(-\frac{2\epsilon^2}{\frac{1}{n}(\beta-\alpha)^2+\frac{1}{m}\Delta_{f,m}^2}\right)\,,\label{eq:f_Gamma_est_concentration1}\\
\mathbb{P}\left(D_f^\Gamma(Q_n\|P_m)-D_f^\Gamma(Q\|P)\geq \epsilon+2\mathcal{R}_{\Gamma,Q,n}+2\mathcal{K}_{f,\Gamma,P,m}\right)
\leq  & \exp\left(-\frac{2\epsilon^2}{\frac{1}{n}(\beta-\alpha)^2+\frac{1}{m}\Delta_{f,m}^2}\right)\,,\label{eq:f_Gamma_est_concentration2}
\end{align}
where $\Delta_{f,m}$ and $\mathcal{K}_{f,\Gamma,P,m}$ were defined in Definition \ref{def:conc_ineq_params}.

\end{theorem}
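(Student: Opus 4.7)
The plan is to follow the same blueprint as the proof of Theorem \ref{thm:f_Gamma_GAN1}: reduce to a countable supremum for measurability, apply McDiarmid's inequality to the random functional $H(X,X') = D_f^\Gamma(Q_n \| P_m)$ using the bounded differences supplied by Lemma \ref{lemma:Lambda_perturbation_bound}, and then control the mean with a ULLN argument. The two bounds will correspond to the two possible signs of $H - \mathbb{E}[H]$, with the key observation that for the first bound $\mathbb{E}[H]$ already dominates $D_f^\Gamma(Q\|P)$ by a concavity argument, which is why no Rademacher terms appear.

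First I would use the hypothesis that $\Gamma_0\subset\Gamma$ is pointwise-sequentially dense, together with continuity of $\Lambda_f^P$ under pointwise limits of uniformly bounded $h$ (via dominated convergence and the Lipschitz bound in Lemma \ref{lemma:Lambda_Lipschitz_bound}), to replace $\sup_{h\in\Gamma}$ by $\sup_{h\in\Gamma_0}$ inside $D_f^\Gamma(Q_n\|P_m)$; this makes $H$ measurable. The bounded differences for $H$ come in two flavors: if $x$ and $\tilde x$ differ only in one coordinate, then since $\alpha\leq h\leq \beta$ we get $|H(x,x')-H(\tilde x, x')|\leq (\beta-\alpha)/n$ via \eqref{eq:sup_delta_bound}; if $x'$ and $\tilde x'$ differ only in one coordinate, Lemma \ref{lemma:Lambda_f_empirical} together with Lemma \ref{lemma:Lambda_perturbation_bound} (applied to the $\pm$ difference using \eqref{eq:sup_delta_bound}) yields $|H(x,x')-H(x,\tilde x')|\leq \Delta_{f,m}/m$. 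McDiarmid's inequality then gives
\begin{align}
\mathbb{P}\!\left(\pm(H-\mathbb{E}[H])\geq \epsilon\right)\leq \exp\!\left(-\frac{2\epsilon^2}{\frac{1}{n}(\beta-\alpha)^2+\frac{1}{m}\Delta_{f,m}^2}\right).
\end{align}

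For \eqref{eq:f_Gamma_est_concentration1}, I would establish $\mathbb{E}[H]\geq D_f^\Gamma(Q\|P)$ as follows: for each fixed $h$, $\mathbb{E}[E_{Q_n}[h]]=E_Q[h]$, and since $\Lambda_f^P[h]$ is an infimum over $\nu$ of an expression linear in $P$, it is concave in $P$, so Jensen's inequality (applied to the empirical measure viewed as a random measure) gives $\mathbb{E}[\Lambda_f^{P_m}[h]]\leq \Lambda_f^P[h]$. Hence $\mathbb{E}[E_{Q_n}[h]-\Lambda_f^{P_m}[h]]\geq E_Q[h]-\Lambda_f^P[h]$, and taking $\sup_h$ outside the expectation (and then on both sides) yields $\mathbb{E}[H]\geq D_f^\Gamma(Q\|P)$. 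Combined with the $-\epsilon$ side of McDiarmid this gives \eqref{eq:f_Gamma_est_concentration1}.

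For \eqref{eq:f_Gamma_est_concentration2} I would bound $\mathbb{E}[H]-D_f^\Gamma(Q\|P)$ from above by splitting via \eqref{eq:sup_delta_bound}:
\begin{align}
\mathbb{E}[H]-D_f^\Gamma(Q\|P)\leq \mathbb{E}\!\left[\sup_{h\in\Gamma_0}\{E_{Q_n}[h]-E_Q[h]\}\right]+\mathbb{E}\!\left[\sup_{h\in\Gamma_0}\{\Lambda_f^P[h]-\Lambda_f^{P_m}[h]\}\right].
\end{align}
The first term is bounded by $2\mathcal{R}_{\Gamma,Q,n}$ by Theorem \ref{thm:ULLN}, and the second by $2\mathcal{K}_{f,\Gamma,P,m}$ via Lemmas \ref{lemma:Lambda_rademacher_bound1} and \ref{lemma:Lambda_rademacher_bound2} (taking the minimum of the two bounds to obtain the $\mathcal{K}$ form from Definition \ref{def:conc_ineq_params}). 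Combining with the $+\epsilon$ side of McDiarmid yields \eqref{eq:f_Gamma_est_concentration2}.

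The only mildly delicate step is the concavity/Jensen argument that removes the Rademacher terms from \eqref{eq:f_Gamma_est_concentration1}; everything else is a direct transcription of the GAN proof with the optimization-and-approximation machinery removed, since here there is no generator and no discriminator-approximation error.
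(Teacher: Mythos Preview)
Your proposal is correct and follows essentially the same approach as the paper: define $H(X,X')=D_f^\Gamma(Q_n\|P_m)$, establish bounded differences $(\beta-\alpha)/n$ and $\Delta_{f,m}/m$ via Lemma \ref{lemma:Lambda_perturbation_bound}, apply McDiarmid, and then bound $\mathbb{E}[H]$ from below for \eqref{eq:f_Gamma_est_concentration1} and from above via the ULLN results for \eqref{eq:f_Gamma_est_concentration2}. The only cosmetic difference is in the lower bound on $\mathbb{E}[H]$: you phrase it as concavity of $P\mapsto\Lambda_f^P[h]$ plus Jensen, while the paper writes $D_f^\Gamma(Q_n\|P_m)=\sup_{h,\nu}\{E_{Q_n}[h-\nu]-E_{P_m}[f^*(h-\nu)]\}$ (using Lemma \ref{lemma:Lambda_f_P_inf_compact}) and swaps $\mathbb{E}$ with $\sup_{h,\nu}$; since your Jensen step unpacks to $\mathbb{E}[\inf_\nu\{\cdots\}]\leq\inf_\nu\mathbb{E}[\cdots]$, the two arguments are the same inequality viewed from opposite sides.
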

\begin{proof}
Similar to the proof of Theorem \ref{thm:f_Gamma_GAN1}, if we define
\begin{align}\label{eq:f_Gamma_est_error}
H(X,{X}^\prime)=D_f^\Gamma(Q_n\|P_m)\,,
\end{align}
then when $x$ and $\tilde x$ differ by a single component we have
\begin{align}
|H(x,x^\prime)-H(\tilde x,x^\prime)|\leq \frac{1}{n}(\beta-\alpha)
\end{align}
and when $x^\prime$ and $\tilde x^\prime$ differ by a single component,  Lemma \ref{lemma:Lambda_perturbation_bound} implies
\begin{align}
|H(x,x^\prime)-H(x,\tilde{x}^\prime)|\leq \frac{1}{m}\Delta_{f,m}\,.
\end{align}
Therefore we can use McDiarmid's inequality to conclude that
\begin{align}\label{eq:D_f_estimation_temp}
\mathbb{P}\left( \pm  (D_f^\Gamma(Q_n\|P_m)-\mathbb{E}[D_f^\Gamma(Q_n\|P_m))\geq \epsilon\right)\leq \exp\left(-\frac{2\epsilon^2}{\frac{1}{n}(\beta-\alpha)^2+\frac{1}{m}\Delta_{f,m}^2}\right)\,.
\end{align}
Combining \eqref{eq:D_f_estimation_temp} (lower sign) with the result of the calculation
\begin{align}
\mathbb{E}\left[D_f^\Gamma(Q_n\|P_m)\right]=&\mathbb{E}\left[\sup_{h\in\Gamma,\nu\in[\alpha-z_0,\beta-z_0]}\left\{E_{Q_n}[h-\nu]-E_{P_m}[f^*(h-\nu)]\right\}\right]\\
\geq &\sup_{h\in\Gamma,\nu\in[\alpha-z_0,\beta-z_0]}\mathbb{E}\left[E_{Q_n}[h-\nu]-E_{P_m}[f^*(h-\nu)]\right]\notag\\
=&\sup_{h\in\Gamma,\nu\in[\alpha-z_0,\beta-z_0]}\left\{E_{Q}[h-\nu]-E_{P}[f^*(h-\nu)]\right\}=D_f^\Gamma(Q\|P)\notag
\end{align}
we obtain \eqref{eq:f_Gamma_est_concentration1}.  Combining   \eqref{eq:D_f_estimation_temp} (upper sign) with the result of the calculation
\begin{align}
\mathbb{E}\left[D_f^\Gamma(Q_n\|P_m)\right]-D_f^\Gamma(Q\|P)\leq &\mathbb{E}\left[\sup_{h\in\Gamma}\left\{E_{Q_n}[h]-E_Q[h]\right\}\right]+\mathbb{E}\left[\sup_{h\in\Gamma}\left\{\Lambda_f^P[h]-\Lambda_f^{P_m}[h]\right\}\right]
\end{align}
and then using  Theorem \ref{thm:ULLN} and Lemmas \ref{lemma:Lambda_rademacher_bound1} and \ref{lemma:Lambda_rademacher_bound2} we obtain \eqref{eq:f_Gamma_est_concentration2}.  
\end{proof}

\section{Concentration Inequalities for Reverse $(f,\Gamma)$-GANs}\label{app:reverse_GANs}
The asymmetry of the $(f,\Gamma)$-divergences requires us to separately treat the cases where the generator is the first argument and where it is the second; we emphasize that this choice can make a significant difference in practice as shown in the examples in \cite{birrell2022f}. With this in mind we give a second version of the $(f,\Gamma)$-GAN assumptions. The proofs for this case are very similar and so we omit them.
\begin{assumption}[Reverse $(f,\Gamma)$-GAN Assumptions]\label{assump:GAN_setup2}
Assume that Assumption \ref{assump:GAN_setup} holds, with the sole exception being that $\theta^*_{n,m}$ satisfies
\begin{align}\label{eq:theta_approx_opt2}
D_f^{\widetilde{\Gamma}}( P_{\theta_{n,m}^*,m}\|Q_n) \leq \inf_{\theta\in \Theta} D_f^{\widetilde{\Gamma}}(P_{\theta,m}\|Q_n)+\epsilon^{opt}_{n,m}  \,\,\,\mathbb{P}\text{-a.s.}
\end{align}
as opposed to \eqref{eq:theta_approx_opt}.
\end{assumption}

\begin{lemma}[Reverse $(f,\Gamma)$-GAN Error Decomposition]\label{lemma:error_decomp2}
Under Assumption \ref{assump:GAN_setup2} the $(f,\Gamma)$-GAN error can be decomposed $\mathbb{P}$-a.s. as follows:
\begin{align}
&D_f^\Gamma( P_{\theta_{n,m}^*}\|Q)-\inf_{\theta\in\Theta} D_f^\Gamma(P_\theta\|Q)\\
\leq &\sup_{h\in\widetilde{\Gamma},\theta\in\Theta}\left\{ E_{P_{Z}}[h\circ\Phi_\theta]-E_{P_{Z,m}}[h\circ\Phi_\theta]\right\}+\sup_{h\in\widetilde{\Gamma},\theta\in \Theta}\left\{  E_{P_{Z,m}}[h\circ\Phi_\theta]-E_{P_Z}[h\circ\Phi_\theta]\right\}\notag\\
&+\sup_{h\in\widetilde{\Gamma}}\left\{\Lambda_f^{Q_n}[h]-\Lambda_f^Q[h]\right\}+\sup_{h\in\widetilde{\Gamma}}\left\{\Lambda_f^Q[h]-\Lambda_f^{Q_n}[h]\right\}\notag\\
&+ \left(1+(f^*)^\prime_+( z_0+\beta-\alpha) \right)\sup_{h\in\Gamma}\inf_{\tilde{h}\in\widetilde{\Gamma}}\|h- \tilde{h}\|_{\infty}+\epsilon^{opt}_{n,m}\,.\notag
\end{align}
\end{lemma}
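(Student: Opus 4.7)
The plan is to mirror the proof of Lemma \ref{lemma:error_decomp1} almost verbatim, since the only structural change is that $Q$ now occupies the second argument of $D_f^\Gamma$ (so it enters through the generalized cumulant generating function) while $P_\theta$ occupies the first argument (so it enters linearly). The bookkeeping of terms will therefore swap which statistical error is linear and which is nonlinear, but the shape of the argument is the same.

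First I would start from the telescoping identity
\begin{align*}
D_f^\Gamma(P_{\theta_{n,m}^*}\|Q)-\inf_{\theta\in\Theta} D_f^\Gamma(P_\theta\|Q)
=&\,D_f^\Gamma(P_{\theta_{n,m}^*}\|Q)-D_f^{\widetilde{\Gamma}}(P_{\theta_{n,m}^*,m}\|Q_n)\\
&+D_f^{\widetilde{\Gamma}}(P_{\theta_{n,m}^*,m}\|Q_n)-\inf_{\theta\in\Theta} D_f^{\widetilde{\Gamma}}(P_{\theta,m}\|Q_n)\\
&+\inf_{\theta\in\Theta} D_f^{\widetilde{\Gamma}}(P_{\theta,m}\|Q_n)-\inf_{\theta\in\Theta} D_f^\Gamma(P_\theta\|Q),
\end{align*}
and use the approximate-optimizer property \eqref{eq:theta_approx_opt2} to absorb the middle line into $\epsilon^{opt}_{n,m}$. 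This is the exact analog of the first computation in the proof of Lemma \ref{lemma:error_decomp1}.

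Next, since $\widetilde{\Gamma}\subset\Gamma$ gives $D_f^{\widetilde{\Gamma}}\leq D_f^\Gamma$, I can replace $D_f^\Gamma(P_{\theta_{n,m}^*}\|Q)$ in the first line by $D_f^{\widetilde\Gamma}(P_{\theta_{n,m}^*}\|Q)$ at the cost of applying Lemma \ref{lemma:disc_approx_error} (which is symmetric in the roles used here, needing only the uniform bound $\alpha\le h\le\beta$ on $\Gamma$) to pick up the discriminator approximation error $\bigl(1+(f^*)^\prime_+(z_0+\beta-\alpha)\bigr)\sup_{h\in\Gamma}\inf_{\tilde h\in\widetilde\Gamma}\|h-\tilde h\|_\infty$. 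The same replacement is used in the third line to upper bound $-\inf_\theta D_f^\Gamma(P_\theta\|Q)$ by $-\inf_\theta D_f^{\widetilde\Gamma}(P_\theta\|Q)$ without incurring any further cost.

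Then I apply the elementary inequalities \eqref{eq:inf_delta_bound} and \eqref{eq:sup_delta_bound} to the two remaining differences of suprema (each $D_f^{\widetilde\Gamma}$ is a sup over $h\in\widetilde\Gamma$ of $E_\cdot[h]-\Lambda_f^\cdot[h]$; the $\inf_\theta$ difference is handled analogously). Splitting the resulting suprema of sums into sums of suprema yields the four statistical terms in the stated decomposition, with the $E$-type terms involving $P_{\theta^*_{n,m}}$ and $P_{\theta^*_{n,m},m}$ and the $\Lambda_f$-type terms involving $Q_n$ and $Q$. Finally, pushing forward through $\Phi_\theta$ converts $E_{P_\theta}[h]=E_{P_Z}[h\circ\Phi_\theta]$ and $E_{P_{\theta,m}}[h]=E_{P_{Z,m}}[h\circ\Phi_\theta]$, matching the claimed form.

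There is no real obstacle beyond careful bookkeeping: the asymmetry of $D_f^\Gamma$ means the nonlinear $\Lambda_f$ statistical error now sits on $Q$ rather than on $P_Z$, and the $E$-type statistical error now sits on $P_Z$ (via $\Phi_\theta$) rather than on $Q$. Every inequality used (the optimization tolerance, Lemma \ref{lemma:disc_approx_error}, and \eqref{eq:inf_delta_bound}--\eqref{eq:sup_delta_bound}) is applied exactly as in the proof of Lemma \ref{lemma:error_decomp1}, which is why the authors' omission of the details is justified.
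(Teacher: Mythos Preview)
Your proposal is correct and follows essentially the same approach as the paper's (omitted) proof: telescope, absorb the optimization error via \eqref{eq:theta_approx_opt2}, pass from $\Gamma$ to $\widetilde\Gamma$ using Lemma \ref{lemma:disc_approx_error}, apply \eqref{eq:inf_delta_bound}--\eqref{eq:sup_delta_bound}, split the suprema, and finally bound the $E$-term at $\theta^*_{n,m}$ by the supremum over $\theta\in\Theta$ before rewriting via the pushforward. This is precisely the argument-reversed version of the proof of Lemma \ref{lemma:error_decomp1}.
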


\begin{lemma}[Reverse $(f,\Gamma)$-GAN Error Decomposition 2]\label{lemma:error_decomp4}
Under Assumption \ref{assump:GAN_setup2}, and supposing that   $\inf_{\theta\in\Theta} D_f^{\widetilde\Gamma}(P_\theta\|\mu_n)=0$ for all empirical distributions $\mu_n$, the $(f,\Gamma)$-GAN error can be decomposed $\mathbb{P}$-a.s. as follows:
\begin{align}\label{eq:error_decomp4}
D_f^\Gamma(P_{\theta^*_{n,m}}\|Q)
\leq& \sup_{h\in\widetilde{\Gamma},\theta\in\Theta}\left\{ E_{P_{Z}}[h\circ\Phi_\theta]-E_{P_{Z,m}}[h\circ\Phi_\theta]\right\}+ \sup_{h\in\widetilde{\Gamma},\theta\in\Theta}\{E_{P_{Z,m}}[h\circ\Phi_\theta]-E_{P_Z}[h\circ\Phi_\theta]\}\\
&  + \sup_{h\in\widetilde{\Gamma}}\left\{ \Lambda_f^{Q_n}[h]-\Lambda_f^Q[h]\right\} +\left(1+(f^*)^\prime_+( z_0+\beta-\alpha) \right)\sup_{h\in\Gamma}\inf_{\tilde{h}\in\widetilde{\Gamma}}\|h- \tilde{h}\|_{\infty}+\epsilon^{opt}_{n,m}\,.\notag
\end{align}
\end{lemma}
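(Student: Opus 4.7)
The plan is to mirror the proof of Lemma \ref{lemma:error_decomp3} step by step, adapting each inequality to the reversed-argument setting. The key structural observation is that since the generator $P_\theta$ now sits in the first slot of $D_f^{\widetilde{\Gamma}}(P_\theta\|Q)=\sup_{h\in\widetilde{\Gamma}}\{E_{P_\theta}[h]-\Lambda_f^Q[h]\}$, perturbations of the generator distribution (i.e., $P_\theta\to P_{\theta,m}$) will now produce \emph{linear} statistical errors $E_{P_Z}[h\circ\Phi_\theta]-E_{P_{Z,m}}[h\circ\Phi_\theta]$, while perturbations of $Q$ (i.e., $Q\to Q_n$) will produce the \emph{nonlinear} $\Lambda_f$ statistical error. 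This is the mirror image of the roles played by the two statistical error terms in Lemma \ref{lemma:error_decomp3}.

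First I would start from $D_f^\Gamma(P_{\theta^*_{n,m}}\|Q)$ and use Lemma \ref{lemma:disc_approx_error} to pass to $D_f^{\widetilde{\Gamma}}(P_{\theta^*_{n,m}}\|Q)$ at the cost of the discriminator approximation error $(1+(f^*)^\prime_+(z_0+\beta-\alpha))\sup_{h\in\Gamma}\inf_{\tilde h\in\widetilde\Gamma}\|h-\tilde h\|_\infty$. Then add and subtract $D_f^{\widetilde{\Gamma}}(P_{\theta^*_{n,m},m}\|Q_n)$ and apply the approximate optimality assumption \eqref{eq:theta_approx_opt2} to replace $D_f^{\widetilde{\Gamma}}(P_{\theta^*_{n,m},m}\|Q_n)$ by $\inf_{\theta\in\Theta}D_f^{\widetilde{\Gamma}}(P_{\theta,m}\|Q_n)+\epsilon^{opt}_{n,m}$.

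Next I would bound $\inf_{\theta\in\Theta}D_f^{\widetilde{\Gamma}}(P_{\theta,m}\|Q_n)$ using the zero-approximation-error hypothesis. Because the change $P_\theta\to P_{\theta,m}$ affects only the linear $E_{P_\theta}[h]$ term in the variational representation, a direct application of \eqref{eq:sup_delta_bound} yields
\begin{align*}
D_f^{\widetilde{\Gamma}}(P_{\theta,m}\|Q_n)\leq D_f^{\widetilde{\Gamma}}(P_\theta\|Q_n)+\sup_{h\in\widetilde{\Gamma},\theta\in\Theta}\{E_{P_{\theta,m}}[h]-E_{P_\theta}[h]\}.
\end{align*}
Infimizing over $\theta$ and invoking $\inf_{\theta}D_f^{\widetilde{\Gamma}}(P_\theta\|Q_n)=0$ reduces this contribution to the single linear statistical term above.

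Finally, for the remaining $D_f^{\widetilde{\Gamma}}(P_{\theta^*_{n,m}}\|Q)-D_f^{\widetilde{\Gamma}}(P_{\theta^*_{n,m},m}\|Q_n)$, I would again apply \eqref{eq:sup_delta_bound} inside the supremum defining $D_f^{\widetilde{\Gamma}}$, splitting into the contribution from changing $P_{\theta^*_{n,m}}$ to $P_{\theta^*_{n,m},m}$ (a linear term bounded by $\sup_{h,\theta}\{E_{P_\theta}[h]-E_{P_{\theta,m}}[h]\}$) and the contribution from changing $Q_n$ to $Q$ (the nonlinear $\Lambda_f$ term bounded by $\sup_{h\in\widetilde\Gamma}\{\Lambda_f^{Q_n}[h]-\Lambda_f^Q[h]\}$). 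Collecting all the pieces and rewriting $E_{P_\theta}[h]=E_{P_Z}[h\circ\Phi_\theta]$, $E_{P_{\theta,m}}[h]=E_{P_{Z,m}}[h\circ\Phi_\theta]$ yields exactly \eqref{eq:error_decomp4}. No step is genuinely hard; the only thing to be careful about is consistently using \eqref{eq:sup_delta_bound} in the correct direction so that the two linear terms (with opposite signs) both emerge and no nonlinear $\Lambda_f^{P_Z}$ terms appear, which is the structural payoff of placing $P_\theta$ in the first argument.
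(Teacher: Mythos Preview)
Your proposal is correct and follows essentially the same approach as the paper's (commented-out) proof: apply Lemma~\ref{lemma:disc_approx_error} to pass from $\Gamma$ to $\widetilde{\Gamma}$, use the approximate optimality \eqref{eq:theta_approx_opt2}, bound $\inf_{\theta}D_f^{\widetilde{\Gamma}}(P_{\theta,m}\|Q_n)$ via the linear perturbation of the first argument together with the zero-approximation-error hypothesis, and finally split $D_f^{\widetilde{\Gamma}}(P_{\theta^*_{n,m}}\|Q)-D_f^{\widetilde{\Gamma}}(P_{\theta^*_{n,m},m}\|Q_n)$ via \eqref{eq:sup_delta_bound} into the linear $E_{P_Z}$-term and the nonlinear $\Lambda_f^{Q}$-term before maximizing the $\theta^*_{n,m}$-specific linear term over $\theta\in\Theta$. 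The only cosmetic difference is that the paper phrases the linear perturbation bound in step~3 as $D_f^{\widetilde{\Gamma}}(P_{\theta,m}\|Q_n)\leq d_{\widetilde{\Gamma}}(P_{\theta,m},P_\theta)+D_f^{\widetilde{\Gamma}}(P_\theta\|Q_n)$ before bounding the IPM term, which is equivalent to your direct use of \eqref{eq:sup_delta_bound}.
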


\begin{theorem}[Reverse $(f,\Gamma)$-GAN Concentration Inequalities]\label{thm:f_Gamma_GAN2} 
Under Assumption \ref{assump:GAN_setup2},  and in particular with $\theta_{n,m}^*$ the approximate solution to the empirical $(f,\Gamma)$-GAN problem \eqref{eq:theta_approx_opt2}, for $\epsilon>0$ we have
\begin{align}\label{eq:conc_ineq3} 
&\mathbb{P}\left(D_f^\Gamma(P_{\theta_{n,m}^*}\|Q)-\inf_{\theta\in\Theta}D_f^\Gamma(P_\theta\|Q)\geq \epsilon+\epsilon_{approx}^{\Gamma,\widetilde{\Gamma}}+\epsilon_{opt}^{n,m}+4\mathcal{R}_{\widetilde{\Gamma}\circ\Phi,P_Z,m}+4 \mathcal{K}_{f,\widetilde{\Gamma},Q,n}\right)\\
&\leq \exp\left(-\frac{\epsilon^2}{\frac{2}{m}(\beta-\alpha)^2+\frac{2}{n} \Delta_{f,n}^2}\right)\,,\notag
\end{align}
where we refer to the quantities in Definition \ref{def:conc_ineq_params}. 

If, in addition,  $\inf_{\theta\in\Theta}D_f^{\widetilde{\Gamma}}(P_\theta\|\mu_n)=0$ for all possible empirical distributions $\mu_n$ then we obtain the tighter bound
\begin{align}\label{eq:conc_ineq4} 
&\mathbb{P}\left(D_f^\Gamma(P_{\theta_{n,m}^*}\|Q)\geq \epsilon+\epsilon_{approx}^{\Gamma,\widetilde{\Gamma}}+\epsilon_{opt}^{n,m}+4\mathcal{R}_{\widetilde{\Gamma}\circ\Phi,P_Z,m}+2 \mathcal{K}_{f,\widetilde{\Gamma},Q,n}\right)
\leq \exp\left(-\frac{\epsilon^2}{\frac{2}{m}(\beta-\alpha)^2+\frac{1}{2n} \Delta_{f,n}^2}\right)\,.
\end{align}
\end{theorem}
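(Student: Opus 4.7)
The plan is to mirror the proof of Theorem~\ref{thm:f_Gamma_GAN1}, with the sole structural difference being that the roles of the two sample sets are swapped: since the generator now sits in the first slot of $D_f^\Gamma$, the linear (IPM-style) expectation differences involve $P_Z$ and its empirical version, while the nonlinear $\Lambda_f$ differences involve $Q$ and $Q_n$. For \eqref{eq:conc_ineq3} I would start from the error decomposition in Lemma~\ref{lemma:error_decomp2}; for the tighter bound \eqref{eq:conc_ineq4} I would start from Lemma~\ref{lemma:error_decomp4}.

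First, as in the forward case, I would use Lemma~\ref{lemma:Lambda_Lipschitz_bound}, Corollary~\ref{corollary:Lambda_f_Lipschitz}, and dominated convergence to show that $\theta\mapsto E_{P_Z}[h\circ\Phi_\theta]$ and $h\mapsto \Lambda_f^Q[h]$, together with their empirical counterparts, are continuous/monotone-limit compatible under the density hypotheses on $\widetilde{\Gamma}$ and $\Theta$. This lets me restrict every supremum in the right-hand side of the relevant error decomposition to the countable sets $\widetilde{\Gamma}_0$ and $\Theta_0$, which turns the aggregate statistical error into a measurable random variable $\widetilde{H}(X,Z)$ and gives a $\mathbb{P}$-a.s.\ bound of the form
\begin{align}
D_f^\Gamma(P_{\theta_{n,m}^*}\|Q)-\inf_{\theta\in\Theta}D_f^\Gamma(P_\theta\|Q)\leq \widetilde{H}(X,Z)+\epsilon_{approx}^{\Gamma,\widetilde{\Gamma}}+\epsilon_{opt}^{n,m}\,.
\end{align}

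The second step is the McDiarmid bounded-differences calculation, which is the only place beyond bookkeeping where care is needed. When one coordinate of $z\in\mathcal{Z}^m$ is changed, each of the two $P_Z$-sampling suprema shifts by at most $(\beta-\alpha)/m$, giving a coordinate constant $2(\beta-\alpha)/m$. When one coordinate of $x\in\mathcal{X}^n$ is changed, each of the two $\Lambda_f$ suprema shifts by at most $\Delta_{f,n}/n$ via Lemma~\ref{lemma:Lambda_f_empirical} combined with the tight perturbation bound of Lemma~\ref{lemma:Lambda_perturbation_bound}, giving coordinate constant $2\Delta_{f,n}/n$. McDiarmid's inequality (Theorem~D.8 in \cite{mohri2018foundations}) then produces exactly the denominator $\tfrac{2}{m}(\beta-\alpha)^2+\tfrac{2}{n}\Delta_{f,n}^2$ of \eqref{eq:conc_ineq3}. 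The mean $\mathbb{E}[\widetilde{H}(X,Z)]$ is bounded by two applications of the ULLN for means (Theorem~\ref{thm:ULLN}), contributing $4\mathcal{R}_{\widetilde{\Gamma}\circ\Phi,P_Z,m}$, and by two applications of the new ULLN for $\Lambda_f$ (Lemmas~\ref{lemma:Lambda_rademacher_bound1} and~\ref{lemma:Lambda_rademacher_bound2}), contributing $4\mathcal{K}_{f,\widetilde{\Gamma},Q,n}$; combining yields \eqref{eq:conc_ineq3}.

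For \eqref{eq:conc_ineq4}, starting from Lemma~\ref{lemma:error_decomp4} instead of Lemma~\ref{lemma:error_decomp2} leaves only a single $Q$-sampling $\Lambda_f$ supremum; this halves the coordinate constant for an $x$-perturbation from $2\Delta_{f,n}/n$ to $\Delta_{f,n}/n$, replacing the $\tfrac{2}{n}\Delta_{f,n}^2$ term in the McDiarmid denominator by $\tfrac{1}{2n}\Delta_{f,n}^2$, while simultaneously halving the ULLN contribution from $4\mathcal{K}_{f,\widetilde{\Gamma},Q,n}$ to $2\mathcal{K}_{f,\widetilde{\Gamma},Q,n}$; the $P_Z$ side is unchanged. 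I expect the main (really only) obstacle to be purely notational --- carefully tracking which McDiarmid coordinate constant and which Rademacher contribution go with which sample once the asymmetry of $D_f^\Gamma$ has been flipped --- so once the forward proof is in hand there is essentially no new mathematics to do.
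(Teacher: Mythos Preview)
Your proposal is correct and follows essentially the same approach as the paper: the paper states that the proof is ``very similar'' to that of Theorem~\ref{thm:f_Gamma_GAN1} and omits the details, and your outline reproduces exactly those details---starting from Lemmas~\ref{lemma:error_decomp2} and~\ref{lemma:error_decomp4} respectively, restricting suprema to countable subsets, computing the McDiarmid constants as $2(\beta-\alpha)/m$ and $2\Delta_{f,n}/n$ (resp.\ $\Delta_{f,n}/n$), and bounding the mean via Theorem~\ref{thm:ULLN} and Lemmas~\ref{lemma:Lambda_rademacher_bound1}--\ref{lemma:Lambda_rademacher_bound2}.
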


\end{document}